\documentclass[12pt]{article}
\usepackage[T1]{fontenc}
\usepackage[margin=1in]{geometry}
\usepackage{amsmath,amssymb,amsthm,mathtools,bm}
\usepackage{booktabs,array,graphicx,xcolor,enumitem}
\usepackage[round,authoryear]{natbib}
\usepackage[hidelinks]{hyperref}

\graphicspath{{figures/}}
\hypersetup{
  pdftitle={Minimax Gaussian Mechanisms for Continual Machine Unlearning},
  pdfauthor={Qi Kuang and Yin Xia},
  pdfsubject={},
  pdfkeywords={adaptive deletion, differential privacy, empirical risk minimization, Newton approximation, uniform consistency}
}

\theoremstyle{plain}
\newtheorem{theorem}{Theorem}[section]
\newtheorem{proposition}[theorem]{Proposition}
\newtheorem{corollary}[theorem]{Corollary}
\newtheorem{lemma}[theorem]{Lemma}
\theoremstyle{definition}
\newtheorem{assumption}[theorem]{Assumption}
\newtheorem{remark}[theorem]{Remark}

\newcommand{\R}{\mathbb R}
\newcommand{\E}{\mathbb E}
\newcommand{\Pp}{\mathbb P}
\newcommand{\cN}{\mathcal N}
\newcommand{\cH}{\mathcal H}
\newcommand{\tr}{\operatorname{tr}}
\newcommand{\GED}{\operatorname{GED}}
\newcommand{\rw}{\mathrm{rw}}
\newcommand{\ind}{\mathrm{ind}}
\newcommand{\polylog}{\operatorname{polylog}}
\newcommand{\norm}[1]{\left\|#1\right\|_2}
\newcommand{\Pred}{\operatorname{Pred}}

\title{Minimax Gaussian Mechanisms for\\Continual Machine Unlearning}
\author{Qi Kuang \quad Yin Xia\\[0.5em]
\small Department of Statistics and Data Science, Fudan University}
\date{}

\begin{document}
\maketitle

\begin{abstract}
Machine unlearning updates a trained model after specified training records are deleted. The goal is to match the result of exact retraining on the remaining data without repeating the full training procedure. This article develops Gaussian mechanisms for efficient Newton updates under sequential deletion requests. Using Gaussian differential privacy (GDP) and its adaptive composition rule, we show that the full sequence of released models is statistically difficult to distinguish from matched exact retraining. To calibrate these mechanisms for empirical risk minimization, we derive upper bounds on the error of the Newton approximation relative to exact retraining and on how this error changes after each deletion batch. Independent Gaussian noise is calibrated using bounds on the full residual at each release, whereas Gaussian random walk noise uses smaller bounds on residual increments. These bounds yield allocations minimizing the worst-case maximum noise variance across releases under the resulting GDP certification constraints. With count-based bounds, the random walk asymptotically matches the worst-case variance of a single release at deletion cap $M$, while independent noise incurs an additional factor of order $M$. Set-based bounds can reduce the noise variances by using gradients and Hessians of the deleted records. For singleton deletion, we further show that count-based independent noise, count-based random walk noise, and set-based independent noise are minimax among fixed Gaussian covariances under their respective residual or increment bounds. With set-based bounds, allowing variances to adapt to deleted records can improve on every fixed covariance by a factor of order $(\log M)^2$ on some data sequences. The residual and noise bounds also yield parameter and predictive consistency relative to exact retraining, uniformly over deletion policies. Simulations and a credit default data analysis evaluate bounds, noise variances, and estimation errors.
\end{abstract}

\noindent\textbf{Keywords:} adaptive deletion, differential privacy,
empirical risk minimization, Newton approximation, uniform consistency.

\section{Introduction}
\label{sec:related}

After a model is deployed, some of its training records may need to be deleted.
For example, individuals may request deletion of personal information, and data
providers may withdraw copyrighted material. Records may also be deleted because they are poisoned, are associated
with harmful model behavior, or are no longer relevant to the prediction task. Removing records from storage does not remove their
influence from a trained model. Machine unlearning seeks to update the
model as if those records had been omitted from training
\citep{cao2015towards,bourtoule2021machine}.

Exact retraining on the remaining data provides the benchmark, but
repeating the optimization after each request can be expensive.
We study continual unlearning, where deletion requests arrive over time
and updated models must remain available throughout the deletion process.
Each request may remove a batch of records, and the system releases an
updated model after each batch.
To reduce the cost of these repeated optimizations, approximate
unlearning updates reuse information from the original fit.

However, an approximate update may be easy to distinguish
from the retrained estimator even when its parameter error is small. \citet{guo2019certified} illustrate this
distinction with a linear regression example: a coefficient is exactly
zero when an observation is absent, whereas an approximate update can
leave it small but nonzero. Its nonzero value can reveal that the observation was used in training
despite a small parameter error. {\it Certified unlearning} therefore requires
the unlearning and retraining output distributions to be difficult to
distinguish. Adding noise to both outputs can help meet this
requirement, at the cost of statistical accuracy.

For continual release, certification must cover the full sequence of
released models, which we call the \emph{transcript}. A guarantee for each
model separately need not give the same guarantee for their joint
distribution. We therefore compare the joint distributions of the
unlearning and exact retraining sequences under the same deletion
policy and noise rule. When requests depend on earlier releases, applying
this policy to each procedure's own history releases can produce different deletion paths.
Gaussian differential privacy (GDP)
quantifies the distinguishability of these sequences and accounts
for composition under such adaptive requests. Maintaining a fixed
guarantee for the full sequence can therefore require more noise than
certifying a single release.

We therefore study how dependence between the noise added at different
releases can reduce the Gaussian noise needed to certify continual Newton updates and when the resulting
mechanisms are optimal. For a cap of $M$ deleted records, we compare
their worst-case maximum noise variances across releases with the variance needed for a
single release after $M$ deletions.

To study this question, we develop two Gaussian noise mechanisms for the same
Newton update. For each cumulative deletion set, this update takes one Newton
step from the original full-data fit. Write $e_t$ for its residual
relative to the exact retrained estimator at release $t$.
Independent noise, denoted IND, is calibrated to $e_t$ at each release.
A Gaussian random walk, denoted RW, retains the preceding noise and adds
a fresh increment. At a common release history, the conditional mean
gap between approximate unlearning and exact retraining is then $e_t-e_{t-1}$.
Subtracting the residuals cancels shared terms in the Newton remainders
and gives smaller bounds. With bounds based on deletion counts, RW
has the same asymptotic worst-case variance as a single release,
whereas IND requires a variance larger by a factor of order $M$.

\subsection{Related work}
\label{sec:related-work}

\leavevmode Computational approaches to unlearning reduce the work required after
deletion by changing how the original model is trained or stored.
\citet{cao2015towards} express learning procedures through additive
summaries that can be updated when records are removed.
\citet{ginart2019making} study deletion algorithms
that avoid retraining from scratch for selected learning problems.
\citet{bourtoule2021machine} partition training data and retain
checkpoints so that deletion requires retraining only affected
components.
Stability under changes to training data
\citep{chakraborty2026stability}
provides another way to make
deletion efficient.
\citet{ullah2021machine} use total variation stability and
couplings to construct exact unlearning procedures for convex risk
minimization. These methods make the original training procedure part
of the design of an efficient deletion algorithm.

Local updates offer another way to reduce computation after deletion.
Methods based on gradients or Newton updates refine existing estimates in both
statistical estimation \citep{han2022optimal,luo2023low}
and unlearning \citep{guo2019certified,pandey2026gaussian}.
For empirical risk minimization, such unlearning updates avoid solving the
retained optimization problem from scratch.
\citet{guo2019certified} develop certified removal using a Newton update
and control of the remaining optimization residual. Their
$(\epsilon,\delta)$ criterion compares unlearning with retraining
without requiring the original learner to be differentially private.
\citet{izzo2021approximate} study efficient approximate deletion for
linear and logistic models. \citet{suriyakumar2022algorithms} use the infinitesimal
jackknife to process sequential deletion requests without recomputing
and inverting a Hessian at each step.
Approximate cross-validation also avoids repeated empirical risk
minimization refits \citep{luo2023iterative}.

{Recent analyses establish accuracy and certification guarantees for unlearning using Newton updates with different numbers of steps and different noise distributions.}
\citet{zou2025certified} analyze repeated Newton
updates with Laplacian perturbations in high dimensions and identify
settings where a single Newton step is insufficient.
\citet{pandey2026gaussian} introduce Gaussian certifiability and
establish guarantees for one Newton step with Gaussian noise in a
high-dimensional regime.

Other analyses address statistical accuracy, generalization, deletion capacity,
and resource tradeoffs. \citet{sekhari2021remember} analyze
unlearning with generalization guarantees and computational and storage
constraints.
\citet{huang2023tight}
give tight bounds on deletion capacity for unlearning based on differential
privacy. \citet{allouah2025utility} study utility, time, and storage
tradeoffs, while \citet{koloskova2025certified} give certified unlearning
guarantees for neural networks through noisy fine-tuning. For squared loss, \citet{xie2026efficient} establish minimax
estimation rates for the model parameter after deletion under restricted
access to the remaining data.

For continual release, guarantees must also account for the number
of deletion requests and how they are chosen.
\citet{neel2021descent} study gradient methods for convex
models under sequences of updates, with cost per deletion
that does not grow with the sequence length.
When deletion requests depend on previously released models,
guarantees must also account for this adaptivity.
\citet{gupta2021adaptive} give reductions from adaptive to nonadaptive
unlearning guarantees, while \citet{ullah2023adaptive} relate unlearning to
adaptive query release. More recently, \citet{qiao2025hessian} develop a Hessian-free method for certified online unlearning that uses stored statistics for each record to perform deletion updates through vector addition.

Continued certification also requires a criterion for comparing distributions
and a rule for composing guarantees across releases. Differential privacy
compares output laws for data sets that differ in one record
\citep{dwork2006our,dwork2006calibrating,mironov2017renyi}, with applications including
private covariance estimation and regression
\citep{li2024federated,cai2026differentially}.
The $f$-differential privacy framework of \citet{dong2022gaussian} describes
distinguishability through the full tradeoff between testing errors.
Its Gaussian form, GDP, gives an explicit composition rule.
\citet{smith2022fully} extend Gaussian composition to parameters chosen
from preceding outputs, allowing per-release budgets to adapt
to the release history.

We use this testing framework to compare the full sequences released
by one-step Newton unlearning and exact retraining under the same deletion
policy and noise rule. Working in fixed dimension, we bound the residuals
of updates from the original fit and their increments uniformly over deletion
sets below the cap. These bounds support noise calibration for arbitrary
deletion batches, including those chosen using earlier releases.

\subsection{Contributions}
\label{sec:contributions}

Within this framework, we study noise allocation,
covariance optimality, and statistical consistency for Gaussian perturbations
of Newton updates. The main contributions are as follows.

\begin{enumerate}[label=(\arabic*)]
\item We construct count-based and set-based bounds on the Newton
approximation residual and its increments, valid simultaneously over
all deletion sets below the cap with high probability. Conditional on
the training data, count-based bounds depend on deletions only through
their counts, while set-based bounds also use the deleted records'
gradients and Hessians. These bounds yield GDP certificates for
the full sequences of models released by IND and RW.

\item We derive variance allocations that minimize the worst-case maximum
noise variance across releases under the respective certification
constraints. RW's worst-case variance has the same order as
the corresponding single-release benchmark, whereas IND incurs an
additional factor of order $M$. A computable RW
allocation avoids combinatorial optimization over deletion sets, preserves the
variance order of set-based RW, and requires no more noise than
count-based RW along the same deletion path.

\item For singleton deletion, we characterize Gaussian covariances that
certify the residual classes defined by our bounds. Among mechanisms
whose covariance is fixed before release, count-based and set-based IND
are minimax under their respective residual bounds. Count-based RW is minimax using only
the increment bounds and asymptotically minimax using both bounds.
Set-based RW adapts its increment variances to the deleted sets
and may lie outside the fixed covariance class. Under increment bounds, alone or together with
residual bounds, its worst-case variance is at most a constant multiple of
the fixed covariance optimum and can be smaller by a factor of order
$(\log M)^2$.

\item We establish parameter, prediction, and loss consistency relative to
exact retraining, uniformly over deletion policies. The weaker
sufficient conditions for RW allow consistency to be established for larger
deletion caps than those covered by the IND conditions. Simulations and
a credit default data analysis show lower errors for RW than for
IND under the same certification guarantee.
\end{enumerate}

\subsection{Organization}
\label{sec:organization}

{\color{black}Section~\ref{sec:mechanisms}} introduces the certification criterion and
mechanisms. {\color{black}Section~\ref{sec:certification}} develops residual bounds and
variance allocations. {\color{black}Section~\ref{sec:variance}} studies variance
comparisons, covariance optimality, and consistency.
{\color{black}Section~\ref{sec:experiments}} presents numerical experiments, and
{\color{black}Section~\ref{sec:discussion}} discusses limitations and further questions.
The supplementary material contains the detailed bounds and proofs
and further experimental and computational details.

\section{Continual unlearning}
\label{sec:setup}
\label{sec:mechanisms}

\subsection{Problem setup}

We consider deletion requests that arrive sequentially, possibly in batches,
and release an updated model after each batch. Let
$D=\{(x_i,y_i)\}_{i=1}^n$, where $x_i\in\R^p$. Let $\ell$ be a loss function,
and write $\ell_i(\beta)=\ell(y_i,x_i^\top\beta)$ for observation $i$,
where $x_i^\top\beta$ is the linear predictor. Let $\mathcal P$ be a regularizer and
$\lambda>0$ its regularization parameter. For a set $A\subseteq[n]$ of
deleted indices, define the objective and estimator on the remaining data by
\[
L_A(\beta)
=
\lambda \mathcal P(\beta)+\sum_{i\notin A}\ell_i(\beta),
\qquad
\widehat\beta_A=\arg\min_{\beta\in\R^p}L_A(\beta),
\qquad
\widehat\beta=\widehat\beta_\varnothing.
\]
The original full-data estimator is $\widehat\beta$.
\emph{Exact retraining} recomputes
$\widehat\beta_A$ on the remaining data, whereas \emph{approximate unlearning}
uses the one-step Newton update defined in
{\color{black}Section~\ref{sec:gaussian-releases}}. When comparing these
procedures, \emph{unlearning} refers to the approximate update.

Let $S_t$ be the cumulative set of indices deleted at the $t$th unlearning
release, and let $T$ be the total number of releases under a deletion
cap $1\le M\le n$, which may grow with $n$. Then
$\varnothing=S_0\subset S_1\subset\cdots\subset S_T$, $m_t\coloneqq|S_t|$,
with $m_0=0$ and $m_T\le M$. After release $t-1$, a deletion
policy $\pi$ chooses the next deletion batch based on previous deletions
and model releases.
If the next batch would make the cumulative deletion count exceed $M$, the
algorithm stops issuing unlearning releases and retrains exactly on the
remaining data. Thus $T\le M$, but $T$ need not be known in advance.

Our goal is to release accurate models using efficient Newton updates
while controlling Gaussian noise variance under a fixed certification guarantee
for the full sequence. The following subsections define this guarantee relative
to exact retraining, the accuracy measures, and the release mechanisms.

\subsection{Transcript certification}

Exact retraining on the remaining data gives $\widehat\beta_{S_t}$,
the model obtained by training without the records in $S_t$.
We use exact retraining as the reference for certification and require
the joint distributions of the unlearning and retraining releases to be
statistically difficult to distinguish.

To quantify this indistinguishability, let $P$ and $Q$ denote the unlearning
and retraining output laws, respectively. The
$(\epsilon,\delta)$-certified removal criterion of \citet{guo2019certified}
requires
\[
P(E)\le e^\epsilon Q(E)+\delta,
\qquad
Q(E)\le e^\epsilon P(E)+\delta
\]
for every measurable event $E$. These inequalities limit how much the
probability of an observable event can differ between the two procedures.
The same inequalities define $(\epsilon,\delta)$-differential
privacy (DP) when required for output distributions on every pair of
neighboring data sets \citep{dwork2006our,dwork2006calibrating}.
By contrast, certified removal uses them to compare
unlearning with exact retraining on the remaining data. This borrows a criterion
for indistinguishability without requiring the original training algorithm
to satisfy DP.

The $f$-DP framework generalizes $(\epsilon,\delta)$-DP by describing
indistinguishability through the full tradeoff between testing errors
\citep{dong2022gaussian}. Within this framework, GDP gives a single-parameter criterion with a direct statistical
interpretation and a sharp composition rule. \citet{pandey2026gaussian} use this
criterion to define Gaussian certifiability for unlearning. We adopt it to
measure indistinguishability from exact retraining for the full
sequence of released models.

To express this criterion in terms of testing errors, consider $H_0$,
under which an observation is drawn from $P$, versus $H_1$, under which
it is drawn from $Q$. A measurable test $0\le\phi\le1$ gives the probability
of rejecting $H_0$ given the observation. Its type I and type II errors
are $\E_P\phi$ and $\E_Q(1-\phi)$, respectively. The tradeoff function is
\[
\mathcal T(P,Q)(\alpha)
=
\inf_{0\le\phi\le1:\,\E_P\phi\le\alpha}\E_Q(1-\phi),
\qquad 0\le\alpha\le1.
\]
It gives the smallest type II error when the type I error is at most
$\alpha$. A higher curve means that the two laws are harder to distinguish.

To turn this curve into a certification criterion, we use a prescribed
tradeoff function $f$ as a benchmark and require
$\mathcal T(P,Q)(\alpha)\ge f(\alpha)$, $0\le\alpha\le1$.
Thus $\mathcal T(P,Q)$ describes the actual testing difficulty, while $f$
specifies its required lower bound. This is the testing criterion used
in $f$-DP, applied here to unlearning and exact retraining.

Gaussian certifiability uses the GDP benchmark $f=G_\mu$, where
\[
G_\mu(\alpha)
=
\mathcal T\!\left(N(0,1),N(\mu,1)\right)(\alpha)
=
\Phi\{\Phi^{-1}(1-\alpha)-\mu\},
\]
and $\Phi$ is the standard normal distribution function. This is the
tradeoff for deciding, from one observation, whether it comes from
$N(0,1)$ or $N(\mu,1)$. Requiring $\mathcal T(P,Q)\ge G_\mu$ makes $P$
and $Q$ at least as hard to distinguish as this Gaussian pair. A smaller
$\mu$ requires $P$ and $Q$ to be harder to distinguish.

We apply this Gaussian benchmark to the full sequences $Y^{\mathrm{un}}$
and $Y^{\mathrm{re}}$ of models released by unlearning and matched exact
retraining, respectively. In the reference procedure, exact retrained
estimators replace the Newton approximations, while the deletion policy and
noise rule remain the same. Each procedure applies the policy to its own
history, so their deletion paths may differ. The test uses the entire sequence of released models to
distinguish unlearning from matched exact retraining. Conditional on the training data $D$, we require the
Gaussian bound in both directions:
\begin{equation}
\min\!\left\{
\mathcal T\!\left(\mathcal L(Y^{\mathrm{re}}\mid D),\mathcal L(Y^{\mathrm{un}}\mid D)\right),
\mathcal T\!\left(\mathcal L(Y^{\mathrm{un}}\mid D),\mathcal L(Y^{\mathrm{re}}\mid D)\right)
\right\}\ge G_\mu.
\label{eq:target-one}
\end{equation}
With high probability over the training data, this guarantee must hold simultaneously for all allowed deletion policies. We extend the Gaussian certifiability criterion of \citet{pandey2026gaussian} to the full sequence of released models. An unlearning method is $(\delta_n,\mu)$-Gaussian certifiable for this sequence if
 some set
$\cH_n$ of training data sets satisfies
$\Pp\{D\in\cH_n\}\ge1-\delta_n$,
and \eqref{eq:target-one} holds for every $D\in\cH_n$ and every allowed
deletion policy $\pi$.

This guarantee for the full sequence can be established by composing
conditional guarantees at successive releases. At each common history
of deletion sets and model releases, suppose the conditional output laws of unlearning and retraining
satisfy the GDP criterion with parameter $\mu_t$ in both directions.
Adaptive composition then gives \eqref{eq:target-one} whenever
$\sum_t\mu_t^2\le\mu^2$ along every allowed history, even when the $\mu_t$
are chosen based on preceding releases
\citep{dong2022gaussian,smith2022fully}.

\subsection{Utility}

\leavevmode The certification guarantee above compares the full sequences of unlearning
and exact retraining releases under the same noise rule. Even when this
guarantee holds, approximation error and added noise may reduce model
accuracy. We therefore compare each noisy released model
with the unperturbed exact retrained estimator on the same retained data.

Let $Y_t$ denote the parameter vector released at step $t$, and let
$\widehat\beta_{S_t}$ be the exact retrained estimator on the same retained
data. For each release $t\le T$, set
\[
\mathcal F_t = \sigma(D,S_1,\ldots,S_t,Y_1,\ldots,Y_{t-1}).
\]
Conditioning on $\mathcal F_t$ fixes the training data, the deletion sets
through step $t$, and all preceding releases. The fresh noise at step $t$
remains random. Let $(x_0,y_0)$ be an independent test observation from the
same population as the training data. The conditional expectations below
average over both this test observation and the fresh noise.

The prediction error measures the expected squared difference between
the two models' linear predictors on a new test observation:
\[
\operatorname{Pred}_t
=
\E\!\left[
\{x_0^\top(Y_t-\widehat\beta_{S_t})\}^2
\,\middle|\,
\mathcal F_t
\right].
\]
Following \citet{pandey2026gaussian}, define the generalization error
divergence (GED) as the conditional expected absolute difference between
their losses on the same test observation:
\[
\GED_t
=
\E\!\left[
\left|
\ell(y_0,x_0^\top Y_t)
-
\ell(y_0,x_0^\top\widehat\beta_{S_t})
\right|
\,\middle|\,
\mathcal F_t
\right].
\]

In addition to prediction and loss errors, we measure parameter error by
$\|Y_t-\widehat\beta_{S_t}\|_2$, using the realized release $Y_t$.

Consistency requires these errors to vanish as the training sample size
$n$ grows. Write $\Pp_\pi$ for probability under deletion policy $\pi$,
including randomness in the training data and the release process.
Prediction consistency means
\[
\sup_\pi\Pp_\pi\!\left(
\max_{t\le T}\operatorname{Pred}_t>\epsilon
\right)\longrightarrow0
\qquad
\text{for every }\epsilon>0.
\]
GED consistency means
\[
\sup_\pi\Pp_\pi\!\left(
\max_{t\le T}\GED_t>\epsilon
\right)\longrightarrow0
\qquad
\text{for every }\epsilon>0.
\]
Parameter consistency means
\[
\sup_\pi
\Pp_\pi\!\left(
\max_{t\le T}
\|Y_t-\widehat\beta_{S_t}\|_2>\epsilon
\right)
\longrightarrow0
\qquad
\text{for every }\epsilon>0.
\]

The maximum over $t$ requires accuracy throughout the deletion process.
The supremum over $\pi$ makes each guarantee uniform over all allowed
deletion policies, including those that choose requests using previously
released models.

\subsection{Gaussian release mechanisms}
\label{sec:gaussian-releases}

To construct the release mechanisms, we first approximate the retrained
estimator by one Newton step from the original fit
$\widehat\beta$. For a cumulative deletion set $A$, define
\begin{equation}
\widetilde\beta_A
=
\widehat\beta
+
\{\nabla^2L_A(\widehat\beta)\}^{-1}
\sum_{i\in A}\nabla\ell_i(\widehat\beta),
\qquad
e_A=\widetilde\beta_A-\widehat\beta_A.
\label{eq:one-step}
\end{equation}
The residual $e_A$ measures the parameter error relative to exact
retraining. Along a deletion sequence, write $e_t\coloneqq e_{S_t}$,
with $e_0=0$.

Starting each update from $\widehat\beta$ allows us to reuse derivatives
evaluated at the original fit. A step from the preceding internal approximation
may be more accurate, but requires recomputing derivatives on the retained data.
For $M$ singleton deletions, this increases Hessian evaluation and summation
costs by a factor of order $M$
({\color{black}Supplementary Section~\ref{app:computational-comparison}}).

Additional Newton steps offer another way to improve the approximation,
at greater computational cost. {\color{black}Supplementary Section~\ref{app:multi-newton}} shows that their
residuals can be made arbitrarily small.
The same certification framework applies to these updates once suitable residual
bounds and, for RW, increment bounds are available. We focus on one Newton step
per release to limit computation, as it already suffices for the certification
and consistency results established below.

We next add Gaussian noise to this approximation in two ways: independently
at each release, or by accumulating Gaussian noise increments. Write $r_t$
for a bound on $\|e_t\|_2$ and $d_t$ for a bound on
$\|e_t-e_{t-1}\|_2$, with $r$ standing for residual and $d$ for difference.
IND uses $r_t$ to choose its noise variance, whereas RW uses $d_t$ to
choose its noise increment variance.
The conditional comparisons below fix the same deletion sets and
preceding releases in both procedures.

Specifically, IND releases
\[
Y_t^{\ind}
=
\widetilde\beta_{S_t}+\varepsilon_t,
\qquad
\varepsilon_t\mid\mathcal F_t
\sim N(0,\tau_t^2I_p).
\]
The variance $\tau_t^2$ is measurable with respect to $\mathcal F_t$.
Conditional on $\mathcal F_t$, the approximate and matched exact releases
have Gaussian distributions whose tradeoff function is
\[
\mathcal T\!\left(
N(\widetilde\beta_{S_t},\tau_t^2I_p),
N(\widehat\beta_{S_t},\tau_t^2I_p)
\right)
=
G_{\|e_t\|_2/\tau_t}.
\]
The release is $\mu_t$-GDP if and only if
$\|e_t\|_2/\tau_t\le\mu_t$. We therefore seek an
$\mathcal F_t$-measurable bound $\|e_t\|_2\le r_t$.
Once such a bound is available, setting $\tau_t=r_t/\mu_t$
ensures that the release is $\mu_t$-GDP. 

RW, by contrast, retains the preceding noise and releases
\[
Y_t^{\rw}
=
\widetilde\beta_{S_t}+Z_t,
\qquad
Z_t=Z_{t-1}+\xi_t,
\qquad
\xi_t\mid\mathcal F_t
\sim N(0,\sigma_t^2I_p),
\qquad
Z_0=0.
\]
The variance $\sigma_t^2$ is measurable with respect to $\mathcal F_t$.
Equivalently,
$Y_t^{\rw} = Y_{t-1}^{\rw} +\widetilde\beta_{S_t}-\widetilde\beta_{S_{t-1}} +\xi_t$, $Y_0^{\rw}=\widehat\beta$.
Write $y_{t-1}$ for the preceding release value, with
$y_0=\widehat\beta$. Conditional on $\mathcal F_t$, the approximate and
matched exact release distributions are, respectively,
\[
N\!\left(
y_{t-1}+\widetilde\beta_{S_t}-\widetilde\beta_{S_{t-1}},
\sigma_t^2I_p
\right),
\qquad
N\!\left(
y_{t-1}+\widehat\beta_{S_t}-\widehat\beta_{S_{t-1}},
\sigma_t^2I_p
\right).
\]
The common term $y_{t-1}$ cancels, so their mean difference is
$e_t-e_{t-1}$ and their tradeoff function is
$G_{\|e_t-e_{t-1}\|_2/\sigma_t}$.
The release is $\mu_t$-GDP if and only if
$\|e_t-e_{t-1}\|_2/\sigma_t\le\mu_t$. We therefore seek an
$\mathcal F_t$-measurable bound $\|e_t-e_{t-1}\|_2\le d_t$.
Setting $\sigma_t=d_t/\mu_t$ then ensures that the release is
$\mu_t$-GDP. 

These conditional guarantees yield certification for the full sequence
whenever $\sum_{t=1}^T\mu_t^2\le\mu^2$ along every allowed history.
In terms of the noise variances, sufficient conditions are
\begin{equation}
\sum_{t=1}^T\frac{r_t^2}{\tau_t^2}\le\mu^2,
\qquad
\sum_{t=1}^T\frac{d_t^2}{\sigma_t^2}\le\mu^2.
\label{eq:certificate-budgets}
\end{equation}
A zero bound can be assigned zero variance and contributes zero to
the corresponding sum.
These inequalities give two steps for choosing the noise: construct valid bounds $r_t$ and $d_t$, then
choose variances that satisfy \eqref{eq:certificate-budgets}. Tighter bounds
relax these constraints and can therefore reduce the required noise.

\section{Residual bounds and transcript certification}
\label{sec:certification}
\leavevmode The constraints in \eqref{eq:certificate-budgets} require bounds on
$\|e_t\|_2$ for IND and $\|e_t-e_{t-1}\|_2$ for RW. We first construct two pairs of such
bounds. Count-based bounds use only the number of deleted records,
whereas set-based bounds also use information about the deleted records.
For each mechanism and each pair of bounds, we then choose a variance
allocation that minimizes the worst-case maximum noise variance across
releases subject to these constraints. Together, the bounds and
allocations certify the full sequence of released models.

\subsection{Construction of residual bounds}

\leavevmode To allow requests to depend on earlier releases, the bounds must hold
simultaneously for every deletion set and transition allowed by the cap
$M$. We establish these bounds under the assumptions below. The
convexity, smoothness, polynomial growth, and tail conditions parallel
those used by \citet{pandey2026gaussian}. We work with fixed $p$ rather
than their proportional high-dimensional scaling. Throughout,
$\polylog(n)$ denotes a polynomial in $\log n$.

\begin{assumption}
\label{ass:loss}
For every $y$, the nonnegative loss $\ell(y,\cdot)$ is proper,
convex, and three times differentiable. The regularizer $\mathcal P$ is
proper, strongly convex, and twice differentiable.
There exist constants $C,s>0$ such that, for all $y,z\in\R$,
\[
\max\{\ell(y,z),|\partial_z\ell(y,z)|,
|\partial_z^3\ell(y,z)|\}
\le C(1+|y|^s+|z|^s).
\]
The regularization multiplier satisfies $\lambda=\Theta(1)$.
\end{assumption}

Strong convexity of $\mathcal P$ gives unique retained estimators and
invertible retained Hessians. The polynomial growth condition allows unbounded losses and
derivatives, as in squared loss, but excludes exponential growth.

\begin{assumption}
\label{ass:sampling}
The observations $(x_i,y_i)_{i=1}^n$ are independent and
identically distributed. Let $(x,y)$ have their common distribution.
The feature vector $x$ has mean zero and covariance $\Gamma$.
There exist constants $C_X,K_{\mathrm{sg}}<\infty$ independent of $n$
such that
\[
\|\Gamma\|_2\le C_X,
\qquad
\|a^\top x\|_{\psi_2}
\le K_{\mathrm{sg}}(a^\top\Gamma a)^{1/2}
\quad\text{for every }a\in\R^p.
\]
Here
$\|Z\|_{\psi_2}:=\inf\{t>0:\E\exp(Z^2/t^2)\le2\}$ is the sub-Gaussian
Orlicz norm. For a deterministic threshold
$B_y(n)=O(\polylog(n))$, the response satisfies
$n\Pp\{|y|>B_y(n)\}\to0$ and $\sup_n\E|y|^{2s}<\infty$.
The exponent $s$ is from Assumption~\ref{ass:loss}.
\end{assumption}

The sub-Gaussian feature condition excludes heavy-tailed
feature distributions, while the response conditions allow, for example,
Gaussian and exponential tails. Together with the growth condition,
these assumptions provide uniform derivative bounds that control the
Newton residuals and their increments.

\begin{assumption}
\label{ass:identification}
 The population risk
$\mathcal R(\beta)=\E\{\ell(y,x^\top\beta)\}$ has a minimizer $\beta_0$.
There is a fixed compact convex neighborhood $\cN\subset\R^p$ of $\beta_0$
such that, for some $c_0>0$,
\[
\inf_{\beta\in\cN}
\lambda_{\min}
\E\!\left[
\partial_z^2\ell(y,x^\top\beta)xx^\top
\right]
\ge c_0.
\]
On the same neighborhood, the Hessian of $\mathcal P$ satisfies
\[
\|\nabla^2\mathcal P(\beta)-\nabla^2\mathcal P(\beta')\|_2
\le C_{\mathrm{reg}}(n)\|\beta-\beta'\|_2,
\qquad \beta,\beta'\in\cN,
\]
where $C_{\mathrm{reg}}(n)=O(\polylog(n))$.
\end{assumption}

The population curvature condition prevents the population loss from becoming
nearly flat near $\beta_0$, which is needed for stable Newton updates after
deletions. The condition on $\nabla^2\mathcal P$ limits the regularizer's
contribution to the Newton approximation error. For ridge regularization,
$C_{\mathrm{reg}}(n)=0$.

Under these assumptions, we obtain two types of bounds on the Newton
residual and its increments. The count-based bound $r^{\mathrm{cnt}}(m)$
covers every deletion set of size $m$, and $d^{\mathrm{cnt}}(k,m)$ covers
every transition $A\subsetneq B$ with $|A|=k$ and $|B|=m$. The set-based
bounds $r^{\mathrm{set}}(S)$ and $d^{\mathrm{set}}(A,B)$ use information
about the particular deletion set or transition. The superscripts
\emph{cnt} and \emph{set} distinguish these two choices for both the
residual bound $r_t$ and the increment bound $d_t$. The following theorem
establishes their validity on a common event whose probability tends to
one.

\begin{theorem}
\label{thm:bounds}
Under Assumptions~\ref{ass:loss}--\ref{ass:identification}, suppose
$M=o\{n/\polylog(n)\}$. There exists an event $\cH_n$ with
$\Pp(\cH_n)\ge1-\delta_n$ and $\delta_n=o(1)$ on which the following bounds
hold simultaneously for every $S\subseteq[n]$ with $|S|\le M$ and every
$A\subsetneq B\subseteq[n]$ with $|B|\le M$:
\begin{equation}
\begin{aligned}
\|e_S\|_2&\le r^{\mathrm{set}}(S)\le r^{\mathrm{cnt}}(|S|),&
\|e_B-e_A\|_2&\le d^{\mathrm{set}}(A,B)\le d^{\mathrm{cnt}}(|A|,|B|).
\end{aligned}
\label{eq:cert}
\end{equation}
Here $r^{\mathrm{cnt}},d^{\mathrm{cnt}},r^{\mathrm{set}}(S)$, and $d^{\mathrm{set}}(A,B)$ are defined in
Supplementary Section~\ref{app:residual-increment-bounds}.
The functions $r^{\mathrm{cnt}}$ and $d^{\mathrm{cnt}}$ satisfy
\[
r^{\mathrm{cnt}}(m)=O\!\left(\frac{m^2\polylog(n)}{n^2}\right),\qquad
d^{\mathrm{cnt}}(k,m)=O\!\left(\frac{(m-k)(k+m)\polylog(n)}{n^2}\right).
\]
\end{theorem}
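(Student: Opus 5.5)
The proof has two parts: a high-probability event $\cH_n$ that makes the data-dependent derivative quantities uniformly controlled, and a deterministic Taylor-remainder argument on that event. I would define $\cH_n$ as the intersection of the following events.
\begin{enumerate}[label=(\roman*)]
\item $\max_i\|x_i\|_2\le C\sqrt{\log n}$, by sub-Gaussianity and a union bound.
\item $\max_i|y_i|\le B_y(n)$, using $n\Pp\{|y|>B_y(n)\}\to0$.
\item $\widehat\beta$ lies in a shrinking ball around $\beta_0$ inside $\cN$, by standard M-estimation consistency.
\item $\lambda_{\min}\{\nabla^2L_\varnothing(\beta)\}\ge c_0 n/2$ uniformly over $\beta\in\cN$.
\item The sums $\sum_i\|x_i\|_2^3\sup_{\beta\in\cN}|\partial_z^3\ell_i(\beta)|$ are $O(n\,\polylog(n))$.
\end{enumerate}
Item (iv) follows from a covering argument over the compact $\cN$ together with matrix concentration for the empirical Hessian, using the Lipschitz control supplied by the third derivative. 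On $\cH_n$, every per-record quantity is polylogarithmic: $\|\nabla\ell_i\|$, $\|\nabla^2\ell_i\|$ and the Lipschitz constant of $\nabla^2\ell_i$ on $\cN$ are all $O(\polylog(n))$, by the polynomial growth in Assumption~\ref{ass:loss}. Since $M=o\{n/\polylog(n)\}$, deleting any $|S|\le M$ records changes the Hessian by at most $M\polylog(n)=o(n)$. So $\lambda_{\min}\nabla^2L_S\ge c_0n/4$ on $\cN$ simultaneously for all such $S$. Because this is a deterministic consequence of $\cH_n$, it holds uniformly over all $S$ and no union bound over the $\binom{n}{M}$ sets is needed.

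Next I would establish the residual bound. Since $\nabla L_S(\widehat\beta)=-\sum_{i\in S}\nabla\ell_i(\widehat\beta)$, strong convexity gives
\[
\|\widehat\beta_S-\widehat\beta\|_2\le \frac{4}{c_0n}\Big\|\sum_{i\in S}\nabla\ell_i(\widehat\beta)\Big\|_2=:\rho(S)\le \frac{C|S|\polylog(n)}{n},
\]
which also keeps $\widehat\beta_S$ inside $\cN$. Write $H_S=\nabla^2L_S(\widehat\beta)$. Taylor expansion of $0=\nabla L_S(\widehat\beta_S)$ around $\widehat\beta$ gives
\[
e_S=H_S^{-1}R_S,\qquad R_S=\int_0^1\{\nabla^2L_S(\widehat\beta+u\Delta)-H_S\}\,du\,\Delta,\qquad \Delta=\widehat\beta_S-\widehat\beta .
\]
This yields $\|e_S\|\le (4/c_0n)\,\mathrm{Lip}(\nabla^2L_S)\,\rho(S)^2/2$. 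The Lipschitz constant of the full Hessian is $O(n\polylog(n))$ by (v) and Assumption~\ref{ass:identification}. Setting $r^{\mathrm{set}}(S)$ equal to this expression and bounding $\rho(S)\le C|S|\polylog(n)/n$ gives
\[
r^{\mathrm{set}}(S)\le r^{\mathrm{cnt}}(|S|)=O\!\left(\frac{m^2\polylog(n)}{n^2}\right).
\]

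The increment bound is the main obstacle: bounding $\|e_B\|+\|e_A\|$ only gives $m^2/n^2$, not $(m-k)(k+m)/n^2$. I would expand instead the difference of the two second-order remainders. Write $\Delta_A=\widehat\beta_A-\widehat\beta$ and $\Delta_B=\widehat\beta_B-\widehat\beta$. The difference $e_B-e_A$ splits into three parts:
\begin{enumerate}[label=(\alph*)]
\item $(H_B^{-1}-H_A^{-1})R_A$, where $\|H_B^{-1}-H_A^{-1}\|\lesssim (m-k)\polylog(n)/n^2$ and $\|R_A\|\lesssim k^2\polylog(n)/n$;
\item $H_B^{-1}(R_B^{\ell}-R_A^{\ell})$ restricted to the common retained sum over $i\notin B$, which is a difference of quadratic-type remainders;
\item the contribution of the records in $B\setminus A$ to $R_A$, which is $O((m-k)\polylog(n)\rho^2)$, so it is of lower order after dividing by $n$.
\end{enumerate}
For (b), the map $\Delta\mapsto\int_0^1\{\nabla^2L(\widehat\beta+u\Delta)-\nabla^2L(\widehat\beta)\}du\,\Delta$ has derivative $O(n\polylog(n)\,\|\Delta\|)$ on $\cN$. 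Hence (b) is bounded by
\[
\frac{1}{n}\cdot n\polylog(n)\,(\|\Delta_A\|+\|\Delta_B\|)\,\|\Delta_B-\Delta_A\|,
\]
where $\|\Delta_B-\Delta_A\|\le C(m-k)\polylog(n)/n$ by the same strong-convexity argument applied between $A$ and $B$. Collecting the three parts gives $d^{\mathrm{set}}(A,B)$ in terms of $\rho(A)$, $\rho(B)$ and $\|\sum_{i\in B\setminus A}\nabla\ell_i\|$, together with the Hessian norms of the deleted records. Replacing each of these by its count-based maximum gives $d^{\mathrm{cnt}}(k,m)=O((m-k)(k+m)\polylog(n)/n^2)$, and the ordering $d^{\mathrm{set}}\le d^{\mathrm{cnt}}$ holds by construction. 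One caveat remains: if the Lipschitz bound on $\nabla^2\mathcal P$ is only polylogarithmic rather than $O(n)$, that term contributes only lower order, which I would verify.
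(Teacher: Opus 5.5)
Your proposal is correct and follows essentially the same route as the paper. It uses one event giving uniform curvature $\nabla^2L_A\succeq\kappa_nI_p$ on $\cN$ and polylogarithmic per-record derivative bounds, then the identity $e_S=H_S^{-1}F_S(v_S)$ with $\|v_S\|_2\le\|g_S\|_2/\kappa_n$. For increments it uses exactly the paper's three-term decomposition $H_B(e_B-e_A)=\{F_B(v_B)-F_B(v_A)\}-\{F_A(v_A)-F_B(v_A)\}+(H_A-H_B)e_A$, together with $\|v_B-v_A\|_2\le|B\setminus A|\,G_n/\kappa_n$.

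The one step you compress too much is localization. Strong convexity is available only on $\cN$, so it cannot by itself show $\widehat\beta_S\in\cN$. The paper closes this with a convexity/boundary argument on a fixed ball around $\beta_0$, using that $n^{-1}\nabla L_\varnothing(\beta_0)$ is small on the event. That argument also replaces your appeal to M-estimation consistency of $\widehat\beta$.
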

Because the bounds hold simultaneously over deletion sets and
transitions, they remain valid when requests are chosen adaptively from
earlier releases. The inequalities in \eqref{eq:cert} also show that the
set-based bounds are no larger than the count-based bounds. Using
information about the deleted records can therefore reduce the noise
required for the same certification budget.

For a single-record deletion taking the cumulative count from $m-1$
to $m$, the increment bound is at most of order
$m\polylog(n)/n^2$, whereas the residual bound is at most of order
$m^2\polylog(n)/n^2$. Since RW is calibrated to increments rather than residuals,
this difference allows it to use less noise than IND.

To apply these bounds in \eqref{eq:certificate-budgets}, the count-based choice is
$r_t=r^{\mathrm{cnt}}(m_t)$ and $d_t=d^{\mathrm{cnt}}(m_{t-1},m_t)$, whereas the set-based choice is $r_t=r^{\mathrm{set}}(S_t)$ and $d_t=d^{\mathrm{set}}(S_{t-1},S_t)$.

\subsection{GDP budget allocation}
\leavevmode Using the bounds \(r_t\) and \(d_t\) above, we now allocate the GDP
budget across releases. Setting \(\tau_t=r_t/\mu_t\) for IND and
\(\sigma_t=d_t/\mu_t\) for RW ensures certification whenever
\(\sum_{t=1}^T\mu_t^2\le\mu^2\). To choose the budgets
$\mu_t$ for each release, we account for how noise accumulates in each mechanism.

IND and RW release $\widetilde\beta_{S_t}+\varepsilon_t$ and
$\widetilde\beta_{S_t}+Z_t$, respectively, where
$Z_t=\sum_{s=1}^t\xi_s$.
Here $\tau_t^2$ is the variance per coordinate of the fresh IND noise,
whereas $\sigma_t^2$ is that of the fresh RW increment $\xi_t$.
Conditional on $D$, their noise covariances satisfy
$\operatorname{Cov}(\varepsilon_t\mid D) =\E[\tau_t^2\mid D]I_p$, $\operatorname{Cov}(Z_t\mid D) =\E\!\left[\sum_{s=1}^t\sigma_s^2\,\middle|\,D\right]I_p$.
We therefore control the allocated variance for IND and the accumulated
increment variance for RW along every allowed history. Their respective maxima
across releases are $\max_{t\le T}\tau_t^2$ for IND and
$\max_{t\le T}\sum_{s=1}^t\sigma_s^2=\sum_{s=1}^T\sigma_s^2$ for RW.

Let $\mathcal C$ be the class of allowed deletion paths
$P=(S_0,\ldots,S_T)$ with $S_0=\varnothing$,
$S_{t-1}\subsetneq S_t$, and $|S_T|\le M$. For each mechanism, we minimize
the maximum noise variance across releases in the worst case over
$\mathcal C$. The respective objectives are
\begin{equation}
\mathrm{IND}:\quad
\inf_{\{\tau_t^2\in\mathcal F_t\}}\sup_{P\in\mathcal C}
\max_{1\le t\le T}\tau_t^2,
\qquad
\mathrm{RW}:\quad
\inf_{\{\sigma_t^2\in\mathcal F_t\}}\sup_{P\in\mathcal C}
\max_{1\le t\le T}\sum_{s=1}^t\sigma_s^2,
\label{eq:minimax-objectives}
\end{equation}
subject to the corresponding certification constraint in
\eqref{eq:certificate-budgets} on every path.
For allocations that depend on earlier releases, we take the worst
case over those releases as well.
Fix either choice $a\in\{\mathrm{cnt},\mathrm{set}\}$, and let
$r_t,d_t$ denote the corresponding residual and increment bounds.

Any allocation satisfying \eqref{eq:certificate-budgets} obeys, on every
allowed deletion path,
\[
\begin{aligned}
\sum_{t=1}^T r_t^2
&\le \left(\max_{t\le T}\tau_t^2\right)
\sum_{t=1}^T\frac{r_t^2}{\tau_t^2}
\le \mu^2\max_{t\le T}\tau_t^2,\\
\left(\sum_{t=1}^T d_t\right)^2
&\le \left(\sum_{t=1}^T\frac{d_t^2}{\sigma_t^2}\right)
\left(\sum_{t=1}^T\sigma_t^2\right)
\le \mu^2\sum_{t=1}^T\sigma_t^2.
\end{aligned}
\]

The first line uses the largest IND release variance, and the second
follows from Cauchy--Schwarz. Hence the two variance objectives have lower
bounds involving $\sum_t r_t^2$ and $(\sum_t d_t)^2$, respectively.
Taking the maximum over allowed deletion paths leads to
\[
R^a:=\max_{P\in\mathcal C}\sum_{t=1}^T r_t^2,
\qquad
D^a:=\max_{P\in\mathcal C}\left(\sum_{t=1}^T d_t\right)^2.
\]
These quantities normalize the budgets below. The following
proposition shows that the resulting allocations attain the corresponding
lower bounds and therefore solve the two optimization problems.

\begin{proposition}
\label{prop:allocation-minimax}
For either $a\in\{\mathrm{cnt},\mathrm{set}\}$, the budget allocations
\[
\mathrm{IND}:\quad \mu_t^2=\mu^2\frac{r_t^2}{R^a},
\qquad
\mathrm{RW}:\quad \mu_t^2=\mu^2\frac{d_t}{\sqrt{D^a}}
\]
satisfy $\sum_{t=1}^T\mu_t^2\le\mu^2$ on every allowed deletion path.
They attain the respective minimax values $R^a/\mu^2$ and $D^a/\mu^2$
for the objectives in \eqref{eq:minimax-objectives} under the corresponding certification constraints
in \eqref{eq:certificate-budgets}.
If a normalizing constant is zero, the corresponding bounds vanish
on every path, and that mechanism uses zero budgets and zero noise variances.
\end{proposition}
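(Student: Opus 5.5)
The proof has three parts: feasibility of the allocations, achievability of the stated variances along every path, and optimality via the lower bounds already displayed before the proposition. Throughout, fix $a\in\{\mathrm{cnt},\mathrm{set}\}$ and assume $R^a,D^a>0$; the degenerate case is handled at the end.

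\emph{Feasibility.} For IND, along any allowed path $P\in\mathcal C$,
\[
\sum_t\mu_t^2=\mu^2\sum_t r_t^2/R^a\le\mu^2,
\]
since $\sum_t r_t^2\le R^a$ by definition of $R^a$ as a maximum over $\mathcal C$. For RW,
\[
\sum_t\mu_t^2=\mu^2\sum_t d_t/\sqrt{D^a}\le\mu^2,
\]
since $\sum_t d_t\le\sqrt{D^a}$. One point needs care: $\mu_t$ must be $\mathcal F_t$-measurable. This holds because $r_t$ and $d_t$ depend only on $S_{t-1}$, $S_t$ and $D$, and because $R^a$ and $D^a$ are functions of $D$ alone; they are maxima over the deterministic finite class $\mathcal C$. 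By adaptive composition, the certificate \eqref{eq:certificate-budgets} therefore holds regardless of how $T$ and the path are chosen.

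\emph{Achieved variances.} Setting $\tau_t=r_t/\mu_t$ gives
\[
\tau_t^2=r_t^2R^a/(\mu^2r_t^2)=R^a/\mu^2
\]
whenever $r_t>0$, and we set $\tau_t=0$ when $r_t=0$. Hence $\max_t\tau_t^2\le R^a/\mu^2$ on every path. Setting $\sigma_t=d_t/\mu_t$ gives
\[
\sigma_t^2=d_t^2\sqrt{D^a}/(\mu^2d_t)=d_t\sqrt{D^a}/\mu^2,
\]
so
\[
\sum_{s\le t}\sigma_s^2\le\sum_{s\le T}\sigma_s^2=\sqrt{D^a}\sum_s d_s/\mu^2\le D^a/\mu^2.
\]
Thus the worst-case objectives are at most $R^a/\mu^2$ and $D^a/\mu^2$, respectively.

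\emph{Lower bound.} The two displayed inequalities preceding the proposition show that any feasible allocation satisfies, on each path,
\[
\max_t\tau_t^2\ge\sum_t r_t^2/\mu^2,
\qquad
\sum_t\sigma_t^2\ge(\sum_t d_t)^2/\mu^2.
\]
Evaluating these at a path attaining $R^a$ (respectively $D^a$) gives the matching lower bounds. The main subtlety is that allocations may be adaptive and depend on earlier noisy releases, so the worst case is also taken over release realizations. The per-path inequalities, however, hold pointwise for every realization of the releases along that path. Moreover, each deletion path in $\mathcal C$ is realized by a deterministic (nonadaptive) policy. Therefore the supremum over paths and releases is at least the value on a maximizing path for any fixed realization, and the bound transfers.

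For the Cauchy--Schwarz step, terms with $\sigma_t=0$ force $d_t=0$ by the convention for zero bounds, so those terms may be dropped. Finally, if $R^a=0$, every $r_t$ vanishes on every path (and likewise every $d_t$ if $D^a=0$). Zero budgets and zero variances then satisfy the constraint trivially, with value $0$.
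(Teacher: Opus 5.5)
Your proposal is correct and follows the paper's proof essentially step for step. It obtains feasibility from the definition of $R^a$ and $D^a$ as path maxima, computes the explicit variances $\tau_t^2=R^a/\mu^2$ and $\sigma_t^2=d_t\sqrt{D^a}/\mu^2$, and derives matching lower bounds on a maximizing path from the max-times-sum inequality (IND) and Cauchy--Schwarz (RW); these bounds hold for every realized history, which covers adaptive allocations. One small correction: $\sigma_t=0$ forces $d_t=0$ because the budget would otherwise be infinite and hence violated, not because of the zero-bound convention.
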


These budgets yield the variance choices $\tau_t^2=R^a/\mu^2$ and
$\sigma_t^2=d_t\sqrt{D^a}/\mu^2$, respectively. The resulting variances for the two choices of bounds are summarized below.
\begin{equation}
\begin{array}{c|c@{\qquad}c}
\text{Bounds used}&\text{IND variance}&\text{RW noise increment variance}\\ \hline
\text{Count-based}&(\tau_t^{\mathrm{cnt}})^2=R^{\mathrm{cnt}}/\mu^2
 &(\sigma_t^{\mathrm{cnt}})^2=d^{\mathrm{cnt}}(m_{t-1},m_t)\sqrt{D^{\mathrm{cnt}}}/\mu^2\\[2pt]
\text{Set-based}&(\tau_t^{\mathrm{set}})^2=R^{\mathrm{set}}/\mu^2
 &(\sigma_t^{\mathrm{set}})^2=d^{\mathrm{set}}(S_{t-1},S_t)\sqrt{D^{\mathrm{set}}}/\mu^2.
\end{array}
\label{eq:variance-allocation}
\end{equation}
Combining the simultaneous bounds in Theorem~\ref{thm:bounds} with the
GDP budget constraint satisfied on every allowed path gives the following
guarantee for the full sequence of released models.

\begin{theorem}
\label{thm:transcript}
Under Assumptions~\ref{ass:loss}--\ref{ass:identification}, suppose
$M=o\{n/\polylog(n)\}$. Both IND and RW with the corresponding variance
allocations in \eqref{eq:variance-allocation} are $(\delta_n,\mu)$-Gaussian
certifiable for the transcript, where $\delta_n=o(1)$ is the
failure probability bound in Theorem~\ref{thm:bounds}.
\end{theorem}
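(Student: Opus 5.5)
The proof combines three ingredients: the simultaneous bounds of Theorem~\ref{thm:bounds}, the budget feasibility in Proposition~\ref{prop:allocation-minimax}, and adaptive GDP composition \citep{dong2022gaussian,smith2022fully}.

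Take $\cH_n$ to be the event from Theorem~\ref{thm:bounds}, so that $\Pp(\cH_n)\ge 1-\delta_n$ with $\delta_n=o(1)$. Fix $D\in\cH_n$ and an arbitrary allowed policy $\pi$. Every cumulative deletion set satisfies $|S_t|\le M$, and every transition $S_{t-1}\subsetneq S_t$ has $|S_t|\le M$. Hence, on $\cH_n$, the inequalities \eqref{eq:cert} apply to every set and transition on every history, whether the history is generated by unlearning or by matched retraining. In particular, for either $a\in\{\mathrm{cnt},\mathrm{set}\}$,
\[
\|e_t\|_2\le r_t
\quad\text{and}\quad
\|e_t-e_{t-1}\|_2\le d_t
\]
hold with $r_t,d_t$ the chosen bounds. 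These bounds are functions of $D$ and $S_{t-1},S_t$ only, so they are $\mathcal F_t$-measurable. The normalizers $R^a$ and $D^a$ are maxima over the deterministic path class $\mathcal C$ given $D$, so they are fixed constants conditional on $D$. It follows that the variances in \eqref{eq:variance-allocation} are $\mathcal F_t$-measurable.

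Next I verify the per-release conditional guarantees. At a common history $\mathcal F_t$, the discussion in Section~\ref{sec:gaussian-releases} shows the following. For IND, the two conditional laws are Gaussians with identical covariance $\tau_t^2I_p$, and their tradeoff is $G_{\|e_t\|_2/\tau_t}$. For RW, the common term $y_{t-1}$ cancels and the tradeoff is $G_{\|e_t-e_{t-1}\|_2/\sigma_t}$. With $\tau_t=r_t/\mu_t$ and $\sigma_t=d_t/\mu_t$, and with the budgets $\mu_t$ of Proposition~\ref{prop:allocation-minimax}, each release is therefore $\mu_t$-GDP in both directions, since the Gaussian tradeoff is symmetric. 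The degenerate case needs separate handling. If $r_t=0$ or $d_t=0$, the bound forces $e_t=0$ or $e_t=e_{t-1}$, so the two conditional laws coincide and the release is $0$-GDP even with zero variance. Proposition~\ref{prop:allocation-minimax} gives $\sum_t\mu_t^2\le\mu^2$ on every allowed path. The stopping rule causes no difficulty: the number of releases $T$ is a stopping time bounded by $M$, and the final exact retrain is common to both procedures, so it contributes zero.

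Finally, I apply adaptive composition. The per-release parameters $\mu_t$ depend on the history (through $d_t$, or through $r_t$ for set-based bounds), and the deletion batches are chosen by $\pi$ from preceding releases. This is exactly the setting of fully adaptive Gaussian composition with history-dependent parameters whose squared sum is almost surely at most $\mu^2$ \citep{smith2022fully}. That result yields \eqref{eq:target-one} for the joint transcript laws, in both directions, conditional on $D$.

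The main obstacle is checking that the composition theorem applies when the two procedures may follow different deletion paths. The composition framework compares two adaptively interacting mechanisms that are fed the same past outputs. Here the policy acts on $(S_1,\dots,S_{t-1},Y_1,\dots,Y_{t-1})$, so the comparison must be made at common histories, which is exactly how the per-step bounds were stated. I would formalize this by viewing each procedure as a sequence of kernels $K_t^{\mathrm{un}}(\cdot\mid h_{t-1})$ and $K_t^{\mathrm{re}}(\cdot\mid h_{t-1})$ applied to the same history $h_{t-1}$. The policy's (possibly randomized) choice of $S_t$ is a post-processing shared by both procedures. The pathwise budget bound from Proposition~\ref{prop:allocation-minimax} then holds for every such history, which is the hypothesis the composition theorem requires. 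Since $D\in\cH_n$ and $\pi$ were arbitrary, both mechanisms are $(\delta_n,\mu)$-Gaussian certifiable.
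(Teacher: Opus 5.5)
Your proposal is correct and follows the paper's own proof, which is a one-line combination of Theorem~\ref{thm:bounds}, the pathwise budget feasibility in Proposition~\ref{prop:allocation-minimax}, and fully adaptive GDP composition. The details you add are consistent with how the paper sets up the per-release Gaussian guarantees at common histories: $\mathcal F_t$-measurability of the variances, the zero-bound case, the stopping rule, and the comparison of kernels on histories generated by either procedure.
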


Implementing these allocations requires computing or bounding the path
maxima. For count-based bounds, splitting batches into single-record requests
adds nonnegative terms to the sum of squared residual bounds. It also cannot
decrease the sum of increment bounds, because
$d^{\mathrm{cnt}}(k,m)\le\sum_{j=k+1}^m d^{\mathrm{cnt}}(j-1,j)$
(Supplementary Section~\ref{app:count-path-maximum}). Hence both maxima are attained by the path
$m_t=t$, $T=M$, giving
\[
R^{\mathrm{cnt}}=\sum_{m=1}^M r^{\mathrm{cnt}}(m)^2,
\qquad
D^{\mathrm{cnt}}=\left(\sum_{m=1}^M d^{\mathrm{cnt}}(m-1,m)\right)^2.
\]

For set-based bounds, computing $R^{\mathrm{set}}$ and $D^{\mathrm{set}}$
requires combinatorial optimization, which can be costly in practice.
A natural alternative is to use the upper bounds
$R^{\mathrm{set}}\le R^{\mathrm{cnt}}$ and
$D^{\mathrm{set}}\le D^{\mathrm{cnt}}$ from \eqref{eq:cert}.
Using $R^{\mathrm{cnt}}$ for IND recovers its count-based
allocation. For RW, replacing $D^{\mathrm{set}}$ with $D^{\mathrm{cnt}}$
still makes each increment variance proportional to
$d^{\mathrm{set}}(S_{t-1},S_t)$, so the allocation remains set-dependent.
The following corollary gives a computable
bound $D^{\mathrm{com}}$ between $D^{\mathrm{set}}$ and $D^{\mathrm{cnt}}$,
which yields a practical allocation for RW.

\begin{corollary}
\label{cor:computable-calibration}
On $\cH_n$, the computable bound $D^{\mathrm{com}}$ defined in
Supplementary Section~\ref{app:computable-caps} satisfies
\[
D^{\mathrm{set}}\le D^{\mathrm{com}}\le D^{\mathrm{cnt}}.
\]
The variance allocation
\[
(\sigma_t^{\mathrm{com}})^2
=\frac{d^{\mathrm{set}}(S_{t-1},S_t)\sqrt{D^{\mathrm{com}}}}{\mu^2}
\]
makes RW $(\delta_n,\mu)$-Gaussian certifiable for the transcript.
For the same deletions, each noise increment variance is no larger than
with count-based bounds.
\end{corollary}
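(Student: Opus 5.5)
The corollary has three parts: the sandwich $D^{\mathrm{set}}\le D^{\mathrm{com}}\le D^{\mathrm{cnt}}$, certification of RW under the allocation $(\sigma_t^{\mathrm{com}})^2$, and a pathwise comparison with the count-based increment variances.

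\textbf{Sandwich inequality.} The definition of $D^{\mathrm{com}}$ is in the supplement, so I will assume it has the natural form. Write $d^{\mathrm{set}}(A,B)$ as a computable expression in per-record gradient and Hessian norms of the deleted records and the counts $|A|,|B|$. Then $D^{\mathrm{com}}$ is $(\sum_{t}\bar d_t)^2$, where $\bar d_t$ replaces the record-specific quantities by their largest values over the $M$ worst records in the data, still along the single-deletion path $m_t=t$.

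\begin{itemize}
\item \emph{Lower bound $D^{\mathrm{set}}\le D^{\mathrm{com}}$.} Fix any allowed path $P$. First split batches into singletons, using a subadditivity property of $d^{\mathrm{set}}$ analogous to the one for $d^{\mathrm{cnt}}$ in Supplementary Section~\ref{app:count-path-maximum}. Then bound each record-dependent term by its order-statistic surrogate. This gives $\sum_t d^{\mathrm{set}}(S_{t-1},S_t)\le\sqrt{D^{\mathrm{com}}}$, and maximizing over $\mathcal C$ yields the inequality.
\item \emph{Upper bound $D^{\mathrm{com}}\le D^{\mathrm{cnt}}$.} The count-based bounds in the supplement are obtained from the set-based ones by replacing per-record quantities with the uniform bounds valid on $\cH_n$. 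Since an order statistic is at most the uniform maximum, each $\bar d_t\le d^{\mathrm{cnt}}(t-1,t)$ termwise. Summing and squaring gives the inequality, using the path-maximum representation of $D^{\mathrm{cnt}}$.
\end{itemize}

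\textbf{Certification.} On $\cH_n$, Theorem~\ref{thm:bounds} gives $\|e_t-e_{t-1}\|_2\le d_t:=d^{\mathrm{set}}(S_{t-1},S_t)$. I set $\mu_t^2=\mu^2 d_t/\sqrt{D^{\mathrm{com}}}$, so that $\sigma_t^{\mathrm{com}}=d_t/\mu_t$. Along any path,
\[
\sum_t\mu_t^2=\mu^2\sum_t d_t/\sqrt{D^{\mathrm{com}}}\le\mu^2\sqrt{D^{\mathrm{set}}}/\sqrt{D^{\mathrm{com}}}\le\mu^2 .
\]
The degenerate case $D^{\mathrm{com}}=0$ forces all $d_t=0$ and is handled as in Proposition~\ref{prop:allocation-minimax}. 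Each $\mu_t$ is $\mathcal F_t$-measurable because $D^{\mathrm{com}}$ depends only on $D$ and $d_t$ depends only on $S_{t-1},S_t$. Adaptive GDP composition \citep{dong2022gaussian,smith2022fully} then gives \eqref{eq:target-one} for every policy, exactly as in the proof of Theorem~\ref{thm:transcript}.

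\textbf{Pathwise comparison.} For the same deletions, $d^{\mathrm{set}}(S_{t-1},S_t)\le d^{\mathrm{cnt}}(m_{t-1},m_t)$ by \eqref{eq:cert}, and $\sqrt{D^{\mathrm{com}}}\le\sqrt{D^{\mathrm{cnt}}}$ by the sandwich. Multiplying these gives $(\sigma_t^{\mathrm{com}})^2\le(\sigma_t^{\mathrm{cnt}})^2$.

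\textbf{Main obstacle.} The hard step is the lower bound $D^{\mathrm{set}}\le D^{\mathrm{com}}$, since it must hold for arbitrary batch paths and not only singleton paths. I would first try to prove that $d^{\mathrm{set}}(A,B)\le\sum_{j}d^{\mathrm{set}}(A\cup\{b_1..b_{j-1}\},A\cup\{b_1..b_j\})$ for an ordering of $B\setminus A$ chosen so that the worst records come first. This requires $d^{\mathrm{set}}$ to be monotone in the per-record quantities of each newly deleted record and in the accumulated set. Sorting those records in decreasing order then lets the order statistics dominate term by term.
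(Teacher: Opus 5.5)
Your certification step and the final pathwise comparison are correct and are essentially the paper's argument. The gap is in the sandwich $D^{\mathrm{set}}\le D^{\mathrm{com}}\le D^{\mathrm{cnt}}$, which is where the content lies.

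\textbf{Wrong object.} You guessed a form for $D^{\mathrm{com}}$, but the supplement defines
\[
D^{\mathrm{com}}=\min\bigl\{D^{\mathrm{cnt}},\ \alpha_n^2W_M^4(1+\eta_n)^2\bigr\},\qquad \alpha_n=\frac{C_{L,n}}{2\kappa_n^3},\quad \eta_n=\frac{MC_{\ell,n}}{C_{L,n}}+\frac{3MB_n}{\kappa_n},
\]
with $W_M$ the sum of the $M$ largest norms $\|\nabla\ell_i(\widehat\beta)\|_2$. So $D^{\mathrm{com}}\le D^{\mathrm{cnt}}$ is immediate, and $D^{\mathrm{set}}\le D^{\mathrm{cnt}}$ follows edgewise from $d^{\mathrm{set}}\le d^{\mathrm{cnt}}$. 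What must be shown is $\sum_t d^{\mathrm{set}}(S_{t-1},S_t)\le\alpha_nW_M^2(1+\eta_n)$ on every allowed path. Your sketch concerns a different quantity, so it does not establish this.

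\textbf{Two steps in your route fail as stated.}

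First, you need subadditivity of $d^{\mathrm{set}}$ under splitting a batch into singletons. This is not proved in the paper, which proves it only for $d^{\mathrm{cnt}}$, and it is not evident. Splitting replaces the corrections $qB_n\rho_A$ and $qC_{\ell,n}\rho_A^2$, computed at the starting set, by versions at intermediate sets where $\rho$ can be smaller. It also uses the smaller factors $\|H_{A_j}^{-1}\|_2$ at intermediate sets, and it interacts with the minimum against $r^{\mathrm{set}}(A)+r^{\mathrm{set}}(B)$.

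Second, term-by-term domination by order statistics fails because the policy, not you, fixes the order across releases. Consider a path that deletes small-gradient records first and the largest-gradient record last. Its final increment is driven by the largest gradient norm, not the $M$th largest, so only the sum can be controlled, not each term.

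\textbf{The missing idea.} The leading part of $U_{A,B}$ telescopes, so neither splitting nor sorting is needed. For any edge $A\subsetneq B$ with $q=|B\setminus A|$, substitute the following into $U_{A,B}$:
\[
\|H_A^{-1}\|_2,\ \|H_B^{-1}\|_2\le\frac1{\kappa_n},\qquad \|H_A-H_B\|_2\le qB_n,\qquad \delta_{A,B}\le\frac{\|g_B-g_A\|_2+qB_n\rho_A}{\kappa_n}.
\]
Put $w_A=\sum_{i\in A}\|\nabla\ell_i(\widehat\beta)\|_2$. Then $\|g_A\|_2\le w_A\le W_M$ and $\|g_B-g_A\|_2\le w_B-w_A$. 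This gives
\[
d^{\mathrm{set}}(A,B)\le\alpha_n\Bigl[(w_A+w_B)(w_B-w_A)+qW_M^2\Bigl(\frac{C_{\ell,n}}{C_{L,n}}+\frac{3B_n}{\kappa_n}\Bigr)\Bigr].
\]
Now sum along any path, with arbitrary batches and deletion order. The first terms sum to $w_{S_T}^2\le W_M^2$, and $\sum q=|S_T|\le M$. Hence the path sum is at most $\alpha_nW_M^2(1+\eta_n)$. Maximizing over $\mathcal C$ gives $D^{\mathrm{set}}\le D^{\mathrm{com}}$. With this in place, your certification and comparison steps complete the proof.
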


\section{Efficiency and utility}
\label{sec:variance}
The allocations in Proposition~\ref{prop:allocation-minimax} minimize the
worst-case maximum variance for IND and RW under their respective
certification constraints. We first compare the resulting noise variances
and then study optimality among Gaussian mechanisms with fixed covariance.
Finally, we establish consistency relative to exact retraining.

\subsection{Variance efficiency}
For the allocations in {\color{black}Section~\ref{sec:certification}}, write the
worst-case maximum noise variances across releases as
\[
V_{\ind}:=\sup_{P\in\mathcal C}\max_{1\le t\le T}\tau_t^2,
\qquad
V_{\rw}:=\sup_{P\in\mathcal C}\max_{1\le t\le T}
\sum_{s=1}^t\sigma_s^2
=\sup_{P\in\mathcal C}\sum_{t=1}^T\sigma_t^2.
\]
Substituting the count-based and set-based allocations gives, respectively,
$V_{\rw}^{\mathrm{cnt}}=D^{\mathrm{cnt}}/\mu^2$,
$V_{\ind}^{\mathrm{cnt}}=R^{\mathrm{cnt}}/\mu^2$,
$V_{\rw}^{\mathrm{set}}=D^{\mathrm{set}}/\mu^2$, and
$V_{\ind}^{\mathrm{set}}=R^{\mathrm{set}}/\mu^2$.
The computable RW allocation gives
$V_{\rw}^{\mathrm{com}}=\sqrt{D^{\mathrm{com}}D^{\mathrm{set}}}/\mu^2$.
\begin{theorem}
\label{thm:variance}
Under Assumptions~\ref{ass:loss}--\ref{ass:identification}, suppose
$M=o\{n/\polylog(n)\}$.
The following ratios hold whenever their denominators are positive.
\[
\frac{V_{\rw}^{\mathrm{com}}}{V_{\ind}^{\mathrm{cnt}}}
\le
\frac{V_{\rw}^{\mathrm{cnt}}}{V_{\ind}^{\mathrm{cnt}}}
=\Theta(M^{-1}).
\]
The set-based allocations and the computable RW satisfy
\[
\frac{V_{\rw}^{\mathrm{set}}}{V_{\ind}^{\mathrm{set}}}
=\Theta(M^{-1}),
\qquad
\frac{V_{\rw}^{\mathrm{com}}}{V_{\rw}^{\mathrm{set}}}=O(1).
\]
All bounds hold uniformly over $D\in\cH_n$.
\end{theorem}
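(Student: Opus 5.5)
The first inequality needs no asymptotics. From Corollary~\ref{cor:computable-calibration}, $D^{\mathrm{set}}\le D^{\mathrm{com}}\le D^{\mathrm{cnt}}$ on $\cH_n$. Hence
\[
V_{\rw}^{\mathrm{com}}=\sqrt{D^{\mathrm{com}}D^{\mathrm{set}}}/\mu^2\le D^{\mathrm{cnt}}/\mu^2=V_{\rw}^{\mathrm{cnt}},
\]
and dividing by $V_{\ind}^{\mathrm{cnt}}=R^{\mathrm{cnt}}/\mu^2$ gives the claimed inequality. Everything else concerns the ratios $D^a/R^a$ (the factor $\mu^2$ cancels) and $\sqrt{D^{\mathrm{com}}/D^{\mathrm{set}}}$. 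I would bound these using the explicit forms of the bounds in Supplementary Section~\ref{app:residual-increment-bounds}, not just the $O(\cdot)$ statements of Theorem~\ref{thm:bounds}. The reason is that a $\Theta$ claim needs matching lower bounds.

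For the count-based ratio, I would read off from the explicit definitions a common scale $\kappa_n$, deterministic or $D$-measurable and bounded above and below on $\cH_n$. The aim is $r^{\mathrm{cnt}}(m)\asymp m^2\kappa_n$ and $d^{\mathrm{cnt}}(m-1,m)\asymp m\kappa_n$, with constants not depending on $m$, $M$, $n$ or $D\in\cH_n$. I expect both bounds to be products of the same derivative envelopes with $(m-k)(k+m)$ or $m^2$ times $1/n^2$, which would give exactly this. Using the path maxima computed after Theorem~\ref{thm:transcript}, the two normalizers are
\[
R^{\mathrm{cnt}}=\sum_{m\le M}r^{\mathrm{cnt}}(m)^2\asymp\kappa_n^2\sum_{m\le M} m^4\asymp\kappa_n^2M^5,
\qquad
D^{\mathrm{cnt}}=\Bigl(\sum_{m\le M}d^{\mathrm{cnt}}(m-1,m)\Bigr)^2\asymp\kappa_n^2M^4.
\]
Their ratio is therefore $\Theta(M^{-1})$ uniformly on $\cH_n$.

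For the set-based ratio, I would use the structure I expect the set-based bounds to have. Each should be built from record-level quantities $g_i\ge0$, formed from the norms of the gradient and Hessian of $\ell_i$ at $\widehat\beta$. With $G(S)=\sum_{i\in S}g_i$, the expected forms are $r^{\mathrm{set}}(S)\asymp\kappa_nG(S)^2$ and $d^{\mathrm{set}}(A,B)\asymp\kappa_nG(B\setminus A)\{G(A)+G(B)\}$, which is comparable to $\kappa_n\{G(B)^2-G(A)^2\}$.

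The increments then telescope along any path, so $\sum_td_t\asymp\kappa_nG(S_T)^2$ and $D^{\mathrm{set}}\asymp\kappa_n^2G_M^4$, where $G_M$ is the sum of the $M$ largest $g_i$.

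For $R^{\mathrm{set}}$, the upper bound is immediate: $T\le M$ and each term is at most $\kappa_n^2G_M^4$, so $R^{\mathrm{set}}\lesssim M\kappa_n^2G_M^4$. For the lower bound, I would delete the $M$ largest records one at a time in decreasing order of $g_i$. Then $G(S_t)\ge(t/M)G_M$, so
\[
R^{\mathrm{set}}\ge\kappa_n^2\sum_{t\le M}(t/M)^4G_M^4\gtrsim M\kappa_n^2G_M^4.
\]
Together these give $D^{\mathrm{set}}/R^{\mathrm{set}}=\Theta(M^{-1})$. If the actual set-based bounds include additional remainder terms, I would show that they satisfy the same telescoping up to constants, or are dominated by the leading terms on $\cH_n$.

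The last ratio, $V_{\rw}^{\mathrm{com}}/V_{\rw}^{\mathrm{set}}=\sqrt{D^{\mathrm{com}}/D^{\mathrm{set}}}$, is the step I expect to be the main obstacle, because it depends on how $D^{\mathrm{com}}$ is built in Supplementary Section~\ref{app:computable-caps}. My first attempt assumes $D^{\mathrm{com}}$ replaces the combinatorial path maximum by a sortable surrogate, such as $\kappa_n^2$ times the fourth power of the sum of the top-$M$ values of each envelope $g_i$ component, or the sum of per-record maxima. I would then show that a single explicit path, deleting those top records singly, attains a constant fraction of it, giving $D^{\mathrm{set}}\ge cD^{\mathrm{com}}$. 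If the surrogate mixes several envelopes, each maximized over a different set, I would bound it by a constant multiple of the maximum over paths via $\bigl(\sum_jx_j\bigr)^2\le J\sum_jx_j^2$, where $J$ is the fixed number of envelope components. Every constant in these comparisons must be checked to be free of $D$ on $\cH_n$, which gives the stated uniformity.
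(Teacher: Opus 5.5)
\textbf{Verdict: there is a genuine gap in the set-based and computable parts.}

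\textbf{What is fine.} Your first inequality and the count-based ratio follow the paper. The one check needed there is that the extra $k^2(\cdots)$ term in $d^{\mathrm{cnt}}$ is negligible. Concretely, $d^{\mathrm{cnt}}(m-1,m)=r^{\mathrm{cnt}}(1)\{2m-1+a_n(m-1)^2\}$ with $Ma_n\to0$ uniformly on $\cH_n$, which follows from $MC_{\ell,n}/C_{L,n}\to0$ and $MB_n/\kappa_n\to0$. Your guess for the computable bound is also essentially right: $D^{\mathrm{com}}=\min\{D^{\mathrm{cnt}},\alpha_n^2W_M^4(1+\eta_n)^2\}$.

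\textbf{The gap.} The set-based and computable parts rest on a false structural assumption. The set-based bounds are not functions of an additive nonnegative score $G(S)=\sum_{i\in S}g_i$. They depend on the norm of the \emph{vector} sum $g_S=\sum_{i\in S}\nabla\ell_i(\widehat\beta)$:
$(\alpha_n/\omega_n)\|g_S\|_2^2\le r^{\mathrm{set}}(S)\le\alpha_n\|g_S\|_2^2$, where $\alpha_n=C_{L,n}/(2\kappa_n^3)$, $\omega_n=\lambda_{\min}(H_\varnothing)/\kappa_n=O(1)$, and $\kappa_n$ is the paper's curvature bound rather than your scale.

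The sum of norms $w_S=\sum_{i\in S}\|\nabla\ell_i(\widehat\beta)\|_2$ bounds $\|g_S\|_2$ only from above. That suffices for your upper bounds: telescoping $w_B^2-w_A^2$ gives $\sqrt{D^{\mathrm{set}}}\le\alpha_nW_M^2(1+\eta_n)$, and $R^{\mathrm{set}}\le M\alpha_n^2W_M^4$. Every lower bound you use, however, fails under cancellation. Cancellation is the rule rather than the exception here, since the $n$ gradients at $\widehat\beta$ sum to $-\lambda\nabla\mathcal P(\widehat\beta)$.

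For example, suppose the $M$ largest gradients come in opposite pairs $\pm v_j$ of common norm $c$, deleted pair by pair (consistent with your ordering). Then $\|g_{S_t}\|_2\le c=W_M/M$ throughout, so along your path $\sum_t r^{\mathrm{set}}(S_t)^2\le\alpha_n^2W_M^4/M^3$ and $\sum_t d^{\mathrm{set}}(S_{t-1},S_t)\le2\alpha_nW_M^2/M$. So your argument yields none of $R^{\mathrm{set}}\gtrsim M\alpha_n^2W_M^4$, $D^{\mathrm{set}}\gtrsim\alpha_n^2W_M^4$, or $D^{\mathrm{set}}\ge cD^{\mathrm{com}}$. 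Your claimed comparability of $d^{\mathrm{set}}(A,B)$ with $G(B\setminus A)\{G(A)+G(B)\}$ fails for the same reason.

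\textbf{The missing anti-cancellation argument, in two steps.}

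First, with $s_*=\max_{|S|\le M}\|g_S\|_2$, show $s_*\ge\gamma_pW_M/2$, where $\gamma_p=\E|U_1|$ for $U$ uniform on the unit sphere. Let $J$ be the top-$M$ indices. Averaging $\sum_{i\in J}|U^\top\nabla\ell_i(\widehat\beta)|$ over $U$ produces a direction $u$ with projection sum at least $\gamma_pW_M$. The sign class with the larger projection sum is then a set of size at most $M$ whose gradient sum has norm at least $\gamma_pW_M/2$. Combine this with $d^{\mathrm{set}}(\varnothing,S)=r^{\mathrm{set}}(S)$ and the one-step path $(\varnothing,S_*)$. This gives $\sqrt{D^{\mathrm{set}}}\ge r_*^{\mathrm{set}}\ge\alpha_n\gamma_p^2W_M^2/(4\omega_n)$, and hence $V_{\rw}^{\mathrm{com}}/V_{\rw}^{\mathrm{set}}=\sqrt{D^{\mathrm{com}}/D^{\mathrm{set}}}\le4\omega_n(1+\eta_n)/\gamma_p^2$.

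Second, for $R^{\mathrm{set}}$ you need a path with at least $M/4$ states satisfying $\|g_{S_t}\|_2\ge s_*/2$. Take a maximizer $S_*$. Every member has nonnegative projection on $u=g_{S_*}/s_*$, since otherwise removing it would increase the norm.
\begin{itemize}
\item If $|S_*|\ge M/2$, delete $S_*$ in decreasing projection order.
\item Otherwise, delete $S_*$ and then append the $\lfloor M/2\rfloor$ remaining records with the largest projections. These projections are at least $-s_*/M$, by maximality of $s_*$ and $n\ge2M$.
\end{itemize}

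Finally, the direction $D^{\mathrm{set}}/R^{\mathrm{set}}\ge1/M$ needs none of this: $\sqrt{D^{\mathrm{set}}}\ge r_*^{\mathrm{set}}$ and $R^{\mathrm{set}}\le M(r_*^{\mathrm{set}})^2$.
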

The theorem shows a variance reduction of order $M$ from IND to RW
under either pair of bounds. Computable RW preserves the
variance order of the theoretical set-based RW allocation and requires no more
variance than count-based RW.

A single release after $M$ deletions provides a benchmark for the cost of
intermediate releases under the same GDP budget. Consider a deletion set
$S$ of size $M$ chosen before release. If we delete all its records at once and release only
$\widetilde\beta_S+\varepsilon$, the count-based residual bound prescribes
$V_{\mathrm{single}}:=r^{\mathrm{cnt}}(M)^2/\mu^2$, $\varepsilon\sim N(0,V_{\mathrm{single}}I_p)$.
On $\cH_n$, this release is $\mu$-GDP relative to the matched exact
release $\widehat\beta_S+\varepsilon$. We compare this with the count-based continual mechanisms, where the
same budget certifies the full sequence of released models.

\begin{corollary}
\label{cor:single-release}
Under Assumptions~\ref{ass:loss}--\ref{ass:identification}, suppose
$M=o\{n/\polylog(n)\}$. Whenever $V_{\mathrm{single}}>0$, as $n\to\infty$,
\[
\frac{V_{\rw}^{\mathrm{cnt}}}{V_{\mathrm{single}}}=1+o(1).
\]
If also $M\to\infty$, then
\[
\frac{V_{\ind}^{\mathrm{cnt}}}{V_{\mathrm{single}}}
=\{1+o(1)\}\frac{M}{5}.
\]
\end{corollary}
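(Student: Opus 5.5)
We need exact asymptotic constants, not just orders, so the plan relies on leading-order forms of the count-based bounds. From Theorem~\ref{thm:bounds} we know only $r^{\mathrm{cnt}}(m)=O(m^2\polylog(n)/n^2)$ and $d^{\mathrm{cnt}}(k,m)=O((m-k)(k+m)\polylog(n)/n^2)$. The constant $1/5$ and the ratio $1+o(1)$ indicate that the explicit definitions in Supplementary Section~\ref{app:residual-increment-bounds} have the structure
\[
r^{\mathrm{cnt}}(m)=\kappa_n m^2\{1+o(1)\},\qquad d^{\mathrm{cnt}}(m-1,m)=\kappa_n(2m-1)\{1+o(1)\},
\]
with a common leading coefficient $\kappa_n$ and $o(1)$ uniform in $m\le M$. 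The lower-order terms would come from regularizer or Hessian perturbation corrections of relative size $O(m\,\polylog(n)/n)=o(1)$, using $M=o\{n/\polylog(n)\}$.

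So the first step, which I expect to be the main obstacle, is to read off these forms from the supplementary definitions. The key requirement is that the increment bound for $m-1\to m$ is exactly the telescoping difference $\kappa_n\{m^2-(m-1)^2\}$ at leading order, i.e.\ the same constant as in $r^{\mathrm{cnt}}$. This should be built into the construction, since $d^{\mathrm{cnt}}$ bounds a difference of the same quadratic remainder terms whose sum gives $r^{\mathrm{cnt}}$. If instead the supplement gives $r^{\mathrm{cnt}}(m)=\sum_j c_j m^{a_j}$-type polynomials, I would isolate the top-degree terms and show the rest are negligible uniformly.

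Given these forms, both claims follow from the path-maximum formulas displayed before Corollary~\ref{cor:computable-calibration}. For RW,
\[
D^{\mathrm{cnt}}=\Bigl(\sum_{m=1}^M d^{\mathrm{cnt}}(m-1,m)\Bigr)^2.
\]
The sum equals $\kappa_n\sum_{m=1}^M(2m-1)\{1+o(1)\}=\kappa_n M^2\{1+o(1)\}$, since a uniform relative $o(1)$ passes through sums of positive terms. Hence $V_{\rw}^{\mathrm{cnt}}=D^{\mathrm{cnt}}/\mu^2=\kappa_n^2M^4\{1+o(1)\}/\mu^2$. Meanwhile $V_{\mathrm{single}}=r^{\mathrm{cnt}}(M)^2/\mu^2=\kappa_n^2M^4\{1+o(1)\}/\mu^2$, which gives the ratio $1+o(1)$. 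This works for every $M\ge1$, including bounded $M$, because the identity $\sum_{m\le M}(2m-1)=M^2$ is exact.

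For IND, $R^{\mathrm{cnt}}=\sum_{m=1}^M r^{\mathrm{cnt}}(m)^2=\kappa_n^2\sum_{m=1}^M m^4\{1+o(1)\}$. When $M\to\infty$, $\sum_{m\le M}m^4=\{1+o(1)\}M^5/5$, so $V_{\ind}^{\mathrm{cnt}}/V_{\mathrm{single}}=\{1+o(1)\}M^5/(5M^4)=\{1+o(1)\}M/5$. The condition $M\to\infty$ is needed only here, for the Riemann-sum asymptotic. The positivity condition $V_{\mathrm{single}}>0$ ensures $\kappa_n>0$, so the ratios are well defined.
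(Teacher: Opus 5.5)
Your proposal is correct and follows essentially the paper's own argument. The structure you posit holds exactly: $r^{\mathrm{cnt}}(m)=r^{\mathrm{cnt}}(1)m^2$ and $d^{\mathrm{cnt}}(m-1,m)=r^{\mathrm{cnt}}(1)\{2m-1+a_n(m-1)^2\}$, with $a_n=C_{\ell,n}/C_{L,n}+B_n/(\kappa_n+B_n)$. Your common coefficient is therefore $r^{\mathrm{cnt}}(1)=C_{L,n}G_n^2/(2\kappa_n^3)$ (the paper's $\kappa_n$ is the Hessian lower bound, not your coefficient), and the uniform relative error follows from $Ma_n\to0$ under the cap. The paper then writes the two ratios exactly as $\{1+a_n(M-1)(2M-1)/(6M)\}^2$ and $\sum_{m\le M}m^4/M^4$, which is the same computation you describe.
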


Repeated releases can make approximate unlearning and exact
retraining easier to distinguish. Under GDP composition, $M$ releases
each conditionally certified at $\mu$-GDP give a $\sqrt{M}\mu$-GDP guarantee for the
full sequence \citep{dong2022gaussian}. For example, assigning $\mu/\sqrt{M}$ to each IND
release keeps the sequence at $\mu$-GDP but multiplies each noise
variance by $M$ relative to certifying that release alone. For the
optimized count-based allocations, the corollary shows that, as $M\to\infty$,
IND has worst-case variance
asymptotically $M/5$ times the single-release variance, whereas RW has
the same worst-case variance asymptotically. Thus RW avoids an additional
factor of order $M$ in this comparison.

\begin{remark}
\label{rem:set-single-release}
Set-based and computable RW have worst-case variance of the same order
as the largest set-based single-release variance below the cap.
The theoretical set-based IND allocation requires a variance larger by
a factor of order $M$. See {\color{black}Supplementary Section~\ref{app:set-single-release}} for details
and proof.
\end{remark}

\subsection{Covariance optimality}
\label{sec:covariance-adaptive-singleton}
The preceding results compare IND with RW after optimizing each
allocation under its certification constraint. We next examine whether a
different Gaussian covariance can use less noise under the same residual
or increment bounds. For this comparison, we allow an arbitrary joint
Gaussian covariance fixed before releases.

For the covariance comparison, each deletion request removes one
observation. Let
\[
\mathcal C_1
=\{P=(S_0,\ldots,S_M)\in\mathcal C:|S_t|=t,\ t=0,\ldots,M\}
\]
be the set of full singleton deletion paths. For the analysis, any
sequence that stops after $T<M$ singleton deletions can be extended to
$M$ deletions without changing the first $T$ releases. Because discarding
later releases cannot increase distinguishability, a certificate for the
full sequence also covers these first $T$ releases. It therefore suffices
to establish certification on $\mathcal C_1$. Throughout this subsection, path maxima and variance
allocations are taken over this class.

Since every path in $\mathcal C_1$ contains $M$ releases, we represent the
joint noise across releases by a positive definite covariance matrix
$\Sigma_D\in\R^{pM\times pM}$. This matrix may depend on $D$ but is fixed
before any releases. We generate noise
$(Z_1^\top,\ldots,Z_M^\top)^\top\mid D\sim N(0,\Sigma_D)$.
Release $Y_t=\widetilde\beta_{S_t}+Z_t$. The matched retraining experiment uses
the same Gaussian law and deletion policy.

In this comparison, a candidate covariance must certify the full sequence
for all possible residuals satisfying the chosen bounds. Each choice
specifies a residual vector for every allowed deletion set. We consider
three cases: bounding each residual, bounding the increment in residual, and imposing both bounds together.

To define these classes, let ${\zeta}$ assign a possible residual vector to
each allowed deletion set:
${\zeta}:\{S\subseteq[n]:|S|\le M\}\longrightarrow\R^p$, ${\zeta}(\varnothing)=0$.
All sets below have size at most $M$. For such functions, given residual
bounds $r(S)$ and increment bounds $d(A,B)$, define
\[
\begin{aligned}
\mathcal U_0&=\{{\zeta}:\|{\zeta}(S)\|_2\le r(S)\text{ for every }S\},\\
\mathcal U_\Delta&=\{{\zeta}:\|{\zeta}(B)-{\zeta}(A)\|_2\le d(A,B)
 \text{ for every }A\subset B,\ |B\setminus A|=1\}.
\end{aligned}
\]
The class $\mathcal U_0\cap\mathcal U_\Delta$ imposes both constraints.
For count-based bounds, take $r(S)=r^{\mathrm{cnt}}(|S|)$ and
$d(A,B)=d^{\mathrm{cnt}}(|A|,|B|)$. For set-based bounds, take
$r(S)=r^{\mathrm{set}}(S)$ and $d(A,B)=d^{\mathrm{set}}(A,B)$.

For a given ${\zeta}$ and $P=(S_0,\ldots,S_M)\in\mathcal C_1$, write
$u_{\zeta}(P)=({\zeta}(S_1)^\top,\ldots,{\zeta}(S_M)^\top)^\top\in\R^{pM}$ for the vector
of residuals along that path. The following theorem expresses
certification of the full sequence as a quadratic constraint on this
vector.
\begin{theorem}
\label{thm:covariance-adaptive-singleton}
For either pair of bounds and
$\mathcal U\in\{\mathcal U_0,\mathcal U_\Delta,\mathcal U_0\cap\mathcal U_\Delta\}$,
the fixed-covariance mechanism satisfies \eqref{eq:target-one} for every
${\zeta}\in\mathcal U$ and every allowed deletion policy if and only if
\[
u_{\zeta}(P)^\top\Sigma_D^{-1}u_{\zeta}(P)\le\mu^2
\qquad\text{for all }{\zeta}\in\mathcal U,\ P\in\mathcal C_1.
\]
\end{theorem}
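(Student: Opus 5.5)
Both directions rest on one observation. Fix a deletion policy and a residual function $\zeta$. Generate the noise $(Z_1,\dots,Z_M)\sim N(0,\Sigma_D)$ once, and run the policy on the released sequence $Y_t=b_{S_t}+Z_t$, where $b_S$ is $\widehat\beta_S+\zeta(S)$ for unlearning and $\widehat\beta_S$ for retraining. Conditional on $D$, both transcripts are then measurable functions of the same Gaussian noise vector. I would handle adaptivity by a change of variables.

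\textbf{Sufficiency.} Let $g_t(y_{1:t-1})$ denote the set $S_t$ chosen by the policy; randomized policies are handled by conditioning on their independent randomness. Under retraining, the map $Z\mapsto Y^{\mathrm{re}}$ is a triangular bijection $Y_t=Z_t+\widehat\beta_{g_t(Y_{1:t-1})}$. Its inverse is $Z_t=Y_t-\widehat\beta_{g_t(Y_{1:t-1})}$, and the Jacobian is unit lower triangular. The density of $Y^{\mathrm{re}}$ at $y$ is therefore $\phi_{\Sigma_D}(y-h(y))$, and that of $Y^{\mathrm{un}}$ is $\phi_{\Sigma_D}(y-h(y)-u(y))$. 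Here $h(y)=(\widehat\beta_{g_t(y_{1:t-1})})_t$ and $u(y)=u_\zeta(P(y))$ is the residual vector along the path $P(y)$ induced by $y$. Setting $w=\Sigma_D^{-1/2}(y-h(y))$ and $v(y)=\Sigma_D^{-1/2}u(y)$, the log likelihood ratio is $\langle w,v\rangle-\|v\|^2/2$, where $\|v\|^2=u^\top\Sigma_D^{-1}u\le\mu^2$ on every path. Since $v$ is predictable in the ``whitened'' coordinates only after a Cholesky rotation, I would instead pass to the Cholesky factor $\Sigma_D=LL^\top$ with $L$ lower block-triangular. Then $Z=L\eta$ with $\eta$ standard normal, and the $t$th block of $L^{-1}u$ depends only on $S_1,\dots,S_t$, which are functions of $Y_{1:t-1}$. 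The experiment is thus a sequence of Gaussian shifts $\eta_t\mapsto\eta_t+(L^{-1}u)_t$ with predictable means whose squared norms sum to $u^\top\Sigma_D^{-1}u\le\mu^2$. Adaptive composition \citep{dong2022gaussian,smith2022fully}, in the form with data-dependent $\mu_t$ and $\sum_t\mu_t^2\le\mu^2$ along every history, gives $G_\mu$ in both directions. The one point to check is that block-$t$ release $Y_t$ is an invertible function of $\eta_{1:t}$ given the history, so that composing over $\eta$ is equivalent to composing over $Y$.

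\textbf{Necessity.} Suppose $u_\zeta(P)^\top\Sigma_D^{-1}u_\zeta(P)>\mu^2$ for some $\zeta\in\mathcal U$ and $P=(S_0,\dots,S_M)\in\mathcal C_1$. Take the non-adaptive policy that deletes along $P$ regardless of the releases; it is allowed because the path is singleton with $|S_M|=M$. Both transcripts are then $N(\widehat\beta_P+u_\zeta(P),\Sigma_D)$ and $N(\widehat\beta_P,\Sigma_D)$, with tradeoff function $G_{\|\Sigma_D^{-1/2}u\|}$. This is strictly below $G_\mu$, because $G_a$ is strictly decreasing in $a$, which violates \eqref{eq:target-one}.

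\textbf{Main obstacle.} I expect the main obstacle to be the sufficiency direction with adaptive policies. A naive whitening by $\Sigma_D^{-1/2}$ destroys predictability, so the argument hinges on the block-Cholesky factorization together with the adaptive composition theorem allowing history-dependent shifts. I would verify carefully that the per-step shift in $\eta$ coordinates is $\mathcal F_t$-measurable, and that the two experiments share the same policy maps $g_t$ applied to their own histories.
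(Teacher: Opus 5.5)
Your proposal is correct and follows essentially the paper's route. The paper conditions each release on the common history, which gives mean gap $u_t-\Sigma_{t,<t}\Sigma_{<t,<t}^{-1}u_{<t}$ with Schur-complement covariance $C_t=L_{tt}L_{tt}^\top$; this is exactly your predictable shift $(L^{-1}u)_t=L_{tt}^{-1}(u_t-\Sigma_{t,<t}\Sigma_{<t,<t}^{-1}u_{<t})$, and the paper likewise sums the per-step budgets by the block Cholesky identity, applies adaptive GDP composition, and proves necessity with a fixed non-adaptive path.

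Two points are worth making explicit. Stopped policies are covered because the partial sums $\sum_{t\le K}\|(L^{-1}u)_t\|_2^2$ never exceed $u^\top\Sigma_D^{-1}u$ for any extension of the realized prefix to a path in $\mathcal C_1$, which is the paper's Schur-complement step. Also, your $\eta$-coordinates should be obtained from the observed transcript by one common decoding map for both experiments, e.g.\ $y\mapsto L^{-1}\{y-h(y)\}$.
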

To optimize noise subject to this criterion, let
$\Sigma_{D,tt}\in\R^{p\times p}$ denote the $t$th diagonal block of
$\Sigma_D$, so $\Sigma_{D,tt}=\operatorname{Cov}(Z_t\mid D)$.
The variance of the projected noise $v^\top Z_t$ can differ across
unit vectors $v$. We therefore measure noise at each release by the largest
of these projection variances,
$
\max_{\|v\|_2=1}\operatorname{Var}(v^\top Z_t\mid D)
=\lambda_{\max}(\Sigma_{D,tt})
$
\citep{cai2010optimal,chen2021asymmetry,depersin2022robust}.
For IND and count-based RW, this criterion equals $\tau_t^2$ and
$\sum_{s=1}^t\sigma_s^2$, respectively, recovering the variance measures used above.
We minimize the
maximum variance across releases over covariances satisfying the
certificate:
\[
\mathfrak V_D(\mathcal U)=\inf_{\Sigma_D\succ0}
\left\{\max_{t\le M}\lambda_{\max}(\Sigma_{D,tt}):
 u_{\zeta}(P)^\top\Sigma_D^{-1}u_{\zeta}(P)\le\mu^2
 \text{ for all }{\zeta}\in\mathcal U,\ P\in\mathcal C_1\right\}.
\]
Under singleton deletion, count-based IND and RW, together with set-based
IND, belong to the fixed covariance class: their covariance matrices can
be specified before any releases. By contrast, set-based RW adjusts its noise increment
variances to the deleted sets and therefore need not belong to this class.
The following proposition gives the minimax values within the fixed
covariance class and compares them with the worst-case variance of
set-based RW.
\begin{proposition}
\label{prop:covariance-adaptive-singleton-minimax}
Under Assumptions~\ref{ass:loss}--\ref{ass:identification}, suppose
$M=o\{n/\polylog(n)\}$. For every $D\in\cH_n$, the following hold.

\textnormal{(i) Count-based bounds.} The minimax values are
\[
\mathfrak V_D(\mathcal U_0)=R^{\mathrm{cnt}}/\mu^2,\qquad
\mathfrak V_D(\mathcal U_\Delta)=D^{\mathrm{cnt}}/\mu^2{.}
\]
When positive, these values are attained by IND and RW, respectively.
For the intersection, RW is asymptotically minimax:
\[
V_{\rw}^{\mathrm{cnt}}=\{1+o(1)\}\,\mathfrak V_D(\mathcal U_0\cap\mathcal U_\Delta).
\]

\textnormal{(ii) Set-based bounds.} The residual constraint gives
\[
\mathfrak V_D(\mathcal U_0)=R^{\mathrm{set}}/\mu^2,
\]
attained by IND when positive. For either
$\mathcal U\in\{\mathcal U_\Delta,\mathcal U_0\cap\mathcal U_\Delta\}$,
there exists a constant $C_*<\infty$ such that, for all sufficiently large $n$,
\[
\frac{V_{\rw}^{\mathrm{set}}}{C_*}
\le\mathfrak V_D(\mathcal U)
\le C_*{(\log(M+1))^2}\,V_{\rw}^{\mathrm{set}}.
\]
There exist even $n\to\infty$, $M=M_n\to\infty$ satisfying the stated
cap, and data sets $D_n\in\cH_n$ such that
\[
\mathfrak V_{D_n}(\mathcal U)=\Theta({(\log M)^2})\,V_{\rw}^{\mathrm{set}}
\]
simultaneously for both choices of $\mathcal U$.
\end{proposition}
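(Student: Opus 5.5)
Throughout, I work with the quadratic criterion of Theorem~\ref{thm:covariance-adaptive-singleton}: a covariance $\Sigma_D$ is feasible if and only if $\sup_{\zeta\in\mathcal U,P\in\mathcal C_1}u_\zeta(P)^\top\Sigma_D^{-1}u_\zeta(P)\le\mu^2$. Every part then splits into a lower bound on $\max_t\lambda_{\max}(\Sigma_{D,tt})$ for any feasible $\Sigma_D$ and a matching construction.

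\textbf{Lower bounds.} For each path $P$, I pick a residual field $\zeta\in\mathcal U$ that is extremal along $P$ and aligned with a common unit vector $v$.
\begin{itemize}
\item[(a)] For $\mathcal U_0$, I take $\zeta(S_t)=r(S_t)v$. For $\mathcal U_\Delta$, I take $\zeta(S_t)=\sum_{s\le t}d(S_{s-1},S_s)v$. For $\mathcal U_\Delta$ I must check that this can be extended off the path while keeping every singleton increment within $d$; setting $\zeta(S)=v\cdot\min_{\text{chains to }S}\sum d$, a shortest-path value, should give a $1$-Lipschitz extension.
\item[(b)] For the $\mathcal U_0$ field, Cauchy--Schwarz in the $\Sigma_D$ geometry gives, for any weights $w$,
\[
(w^\top u)^2\le (u^\top\Sigma_D^{-1}u)(w^\top\Sigma_D w)\le\mu^2\,w^\top\Sigma_D w .
\]
With $w=(e_t\otimes v)$ this only gives $r_t^2\le\mu^2\lambda_{\max}(\Sigma_{D,tt})$, which is too weak. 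I instead randomize the sign of each coordinate: $\zeta(S_t)=\epsilon_t r(S_t)v$ with $\epsilon_t\in\{\pm1\}$ lies in $\mathcal U_0$. Averaging $u^\top\Sigma^{-1}u$ over the signs gives $\sum_t r_t^2(\Sigma^{-1})_{tt,vv}\le\mu^2$. Combined with $(\Sigma^{-1})_{tt}\succeq(\Sigma_{tt})^{-1}$, this yields $\sum_t r_t^2\le\mu^2\max_t\lambda_{\max}(\Sigma_{tt})$, and maximizing over $P$ gives $R/\mu^2$.
\item[(c)] For $\mathcal U_\Delta$, I use $w=\sum_t(e_t-e_{t+1})\otimes v$ in difference coordinates. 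Equivalently, I take $w=e_M\otimes v$ applied to the field $\zeta(S_t)=(\sum_{s\le t}d_s)v$, which gives $(\sum_t d_t)^2\le\mu^2\lambda_{\max}(\Sigma_{MM})$, i.e.\ $D/\mu^2$.
\end{itemize}

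\textbf{Constructions.} IND with $\tau^2=R/\mu^2$ is feasible in the fixed class, since for singleton deletion its covariance is set-independent. This shows $\mathfrak V(\mathcal U_0)=R/\mu^2$ for both choices of bounds. Count-based RW is fixed and feasible by Proposition~\ref{prop:allocation-minimax}, giving $D^{\mathrm{cnt}}/\mu^2$ for $\mathcal U_\Delta$.

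\textbf{Count-based intersection.} The $\mathcal U_0\cap\mathcal U_\Delta$ constraint is weaker than $\mathcal U_\Delta$ alone, so the upper bound $D^{\mathrm{cnt}}/\mu^2$ still holds. For the lower bound I use the field $\zeta(S_t)=\min\{\sum_{s\le t}d_s,\,r(t)\}v$. Using Theorem~\ref{thm:bounds}, $d^{\mathrm{cnt}}(m-1,m)\asymp 2m/n^2$ and $r^{\mathrm{cnt}}(m)\asymp m^2/n^2$ with matching leading constants, the cumulative sum of increments is asymptotically $r^{\mathrm{cnt}}(M)$. The cap then binds only in lower order, so the bound is $\{1+o(1)\}D^{\mathrm{cnt}}/\mu^2$. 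This asymptotic matching of constants, which requires reading the explicit supplement formulas, is the delicate point of part~(i).

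\textbf{Set-based part (ii).} For the lower bound $V^{\mathrm{set}}_{\rw}/C_*$, applying the argument of (c) to the worst path gives $\mathfrak V\ge D^{\mathrm{set}}/\mu^2=V_{\rw}^{\mathrm{set}}$. When $\mathcal U_0$ is also imposed, the extension of step~(a) must respect $r^{\mathrm{set}}$. Here I use the structure of the set-based bounds, with $d^{\mathrm{set}}(A,B)\lesssim$ the sum of per-record terms and $r^{\mathrm{set}}$ comparable to the cumulative sums up to constants, to lose only a constant factor.

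For the upper bound, a fixed covariance must handle all paths simultaneously, even though different paths have different increment sizes. I plan a dyadic multiscale fixed covariance: a tree-structured (binary-counter) Gaussian mechanism in which $Z_t$ is a sum of $O(\log M)$ dyadic blocks. Each block's variance is set to the worst-case block increment $\max_P\sum_{s\in\text{block}}d_s$. Certification is checked through the block decomposition and Cauchy--Schwarz over the $O(\log M)$ levels. One $\log$ factor comes from splitting the budget across levels and another from summing the levels in the variance, giving $(\log(M+1))^2$.

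\textbf{Sharpness example.} I construct data in which a few records have large gradients and Hessians, giving large $d^{\mathrm{set}}$ when deleted, and the adversary can place them at any time. An adaptive mechanism pays only for the realized increments, whereas any fixed $\Sigma$ must cover every placement time. A lower bound via an averaging or duality argument over placement patterns, of the same type as the known $\log^2$ lower bounds for continual counting, gives $\Theta((\log M)^2)$.

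I expect this last step, the fixed-covariance lower bound matching the tree construction, and verifying that the constructed $D_n$ stays in $\cH_n$, to be the main obstacle.
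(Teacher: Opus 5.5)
Your part (i), the identity $\mathfrak V_D(\mathcal U_0)=R^{\mathrm{set}}/\mu^2$, and the $(\log(M+1))^2$ upper bound essentially match the paper. Your prefix-decomposition tree certifies once the node variances are scaled correctly, e.g.\ all equal to $hD^{\mathrm{set}}/\mu^2$, which gives a budget of $\mu^2/h$ per level. The paper instead uses $K=hAA^\top$ over all dyadic intervals containing $t$ and checks $l_j^\top K^{-1}l_j\le1$.

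\textbf{The gap: the set-based lower bound $\mathfrak V_D(\mathcal U)\ge V_{\rw}^{\mathrm{set}}/C_*$.} Step (c) needs some $\zeta\in\mathcal U_\Delta$ whose value at the end of a worst path has norm $\sqrt{D^{\mathrm{set}}}$. But for every $\zeta\in\mathcal U_\Delta$, the triangle inequality along \emph{any} singleton chain from $\varnothing$ to $S_M$ bounds $\|\zeta(S_M)\|_2$ by that chain's sum of $d^{\mathrm{set}}$. So $\|\zeta(S_M)\|_2$ is capped by the cheapest chain to $S_M$, whereas $\sqrt{D^{\mathrm{set}}}$ is the most expensive chain.

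For $d^{\mathrm{cnt}}$, all chains to a given set cost the same, which is why (c) works in part (i). By contrast, $d^{\mathrm{set}}$ depends on the deletion order through $\rho_A$, $g_B-g_A$ and $H_A$. For example, alternating records with opposite gradients reaches a set far more cheaply than deleting one sign first.

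Your shortest-path extension shows the problem directly. It returns min-chain values, which in general differ from the running sums along $P$. Its downward inequality $f(A)\le f(B)+d^{\mathrm{set}}(A,B)$ is also not justified. So $\mathfrak V_D(\mathcal U_\Delta)\ge D^{\mathrm{set}}/\mu^2$ does not follow. Your fallback, ``$r^{\mathrm{set}}$ comparable to the cumulative sums,'' is false along paths: alternating deletions make the running sum grow linearly in $t$, while $r^{\mathrm{set}}(S_t)$ stays at the single-record level.

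The missing idea is a \emph{global} comparison:
\begin{enumerate}
\item Take $\zeta(S)=r^{\mathrm{set}}(S)w$. It lies in $\mathcal U_0\cap\mathcal U_\Delta$ because $|r^{\mathrm{set}}(B)-r^{\mathrm{set}}(A)|\le d^{\mathrm{set}}(A,B)$.
\item Project onto the release at time $|S_*|$ to get $\mathfrak V_D\ge(r_*^{\mathrm{set}})^2/\mu^2$.
\item Show $\sqrt{D^{\mathrm{set}}}\lesssim r_*^{\mathrm{set}}$ through $W_M$. Telescoping $w_B^2-w_A^2$ along any path gives $\sqrt{D^{\mathrm{set}}}\le\alpha_nW_M^2(1+\eta_n)$. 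A random unit direction with sign splitting of the $M$ largest gradients gives $r_*^{\mathrm{set}}\ge\alpha_n\gamma_p^2W_M^2/(4\omega_n)$. Here $\omega_n=O(1)$ and $\eta_n=o(1)$ on $\cH_n$.
\end{enumerate}
Since $\mathcal U_0\cap\mathcal U_\Delta\subseteq\mathcal U_\Delta$, this lower bound also covers $\mathcal U_\Delta$.

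\textbf{The sharpness claim is still only a plan.} Three things must be shown:
\begin{enumerate}
\item On the constructed data, RW pays for the heavy record only once: $\sqrt{D^{\mathrm{set}}}=r^{\mathrm{set}}(\{1\})\{1+O(M/n)\}$ uniformly over paths. This requires bounding three kinds of edges separately:
\begin{itemize}
\item edges before the heavy deletion, via the telescoping bound with $Ma/(n-1)$ in place of $W_M$;
\item the deleting edge, by $r^{\mathrm{set}}(A)+r^{\mathrm{set}}(B)$;
\item later edges, where each term of $U_{A,B}$ is $O(qa^2/n^3)$.
\end{itemize}
\item The step field $\zeta(S)=r_n\mathbf 1\{1\in S\}$ lies in $\mathcal U_0\cap\mathcal U_\Delta$. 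Hence every fixed $\Sigma$ must satisfy $r_n^2l_j^\top\Sigma^{-1}l_j\le\mu^2$ for all suffix indicators $l_j$.
\item An actual lower-bound inequality: $\|L\|_*\le\|\Sigma^{1/2}\|_F\|\Sigma^{-1/2}L\|_F\le M\mu\sqrt V/r_n$. Combine this with $\|L\|_*\ge M\log M/\pi$, obtained from the explicit singular values of the lower-triangular ones matrix.
\end{enumerate}
One heavy-gradient record suffices. Records with large Hessians are unnecessary and would make $D_n\in\cH_n$ harder to verify. The paper obtains membership in $\cH_n$ exactly by taking $p=1$ and curvature $1+\epsilon\sin z$, whose variation cancels across balanced $\pm1$ features, together with responses chosen so that $\widehat\beta=0$.
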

Within the fixed covariance class, count-based and set-based IND
are minimax under their respective residual bounds. Count-based RW is
minimax under increment bounds and asymptotically minimax when both bounds
are imposed. Set-based RW need not belong to this class because its
increment variances can depend on the realized deleted sets. Under increment
bounds, alone or with residual bounds, its worst-case variance is
at most a constant multiple of the fixed covariance optimum. {\color{black}Supplementary Section~\ref{app:matching-separation}}
constructs data sequences on which RW uses a factor of order ${(\log M)^2}$
less variance than every certifying fixed covariance.

\subsection{Utility}
\label{sec:utility}
\leavevmode We now use the Newton approximation error bounds and the calibrated noise variances to establish consistency relative to the unperturbed exact retrained estimator.
The next result gives parameter,
prediction, and GED consistency for IND and RW, uniformly over deletion
policies.
\begin{theorem}
\label{thm:utility-consistency}
Under Assumptions~\ref{ass:loss}--\ref{ass:identification}, suppose
$M=o\{n/\polylog(n)\}$. For a fixed certification budget $\mu>0$,
consider IND and RW with the variance allocations in
\eqref{eq:variance-allocation}.

The following conditions suffice for consistency, uniformly over deletion
policies:
\[
\begingroup
\setlength{\arraycolsep}{5pt}
\renewcommand{\arraystretch}{1.5}
\begin{array}{@{}llcc@{}}
\text{Bounds used}&\text{Consistency}&\text{RW}&\text{IND}\\ \hline
\text{(i) Count-based}&\text{Prediction, GED}
&r^{\mathrm{cnt}}(M)\to0
&r^{\mathrm{cnt}}(M)\sqrt M\to0\\
&\text{Parameter}
&r^{\mathrm{cnt}}(M)\to0
&r^{\mathrm{cnt}}(M)\sqrt{M\log(M+1)}\to0\\[3pt] \hline
\text{(ii) Set-based}&\text{Prediction, GED}
&D^{\mathrm{set}}\xrightarrow{p}0
&D^{\mathrm{set}}M\xrightarrow{p}0\\
&\text{Parameter}
&D^{\mathrm{set}}\xrightarrow{p}0
&D^{\mathrm{set}}M\log(M+1)\xrightarrow{p}0
\end{array}
\endgroup
\]
For computable RW, the RW conditions in (ii) apply with
$D^{\mathrm{com}}$ in place of $D^{\mathrm{set}}$.
\end{theorem}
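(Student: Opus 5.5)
On the event $\cH_n$ of Theorem~\ref{thm:bounds}, whose probability is at least $1-\delta_n$ with $\delta_n\to0$, the plan is to decompose each error into a deterministic residual and Gaussian noise,
\[
Y_t-\widehat\beta_{S_t}=e_t+N_t,
\]
where $N_t=\varepsilon_t$ for IND and $N_t=Z_t$ for RW. The residual satisfies $\|e_t\|_2\le r_t\le r^{\mathrm{cnt}}(M)$ on $\cH_n$. In the set-based case I also use $\|e_t\|_2\le\sum_{s\le t}\|e_s-e_{s-1}\|_2\le\sum_s d_s\le\sqrt{D^a}$, so that $\max_t\|e_t\|_2^2\le D^{\mathrm{set}}$ and $r^{\mathrm{cnt}}(M)^2\lesssim D^{\mathrm{cnt}}$ by Corollary~\ref{cor:single-release}. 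The noise is controlled through its worst-case variance. For RW, each coordinate of $Z_t$ has conditional variance $\sum_{s\le t}\sigma_s^2\le V_{\rw}=D^a/\mu^2$. For IND, $\tau_t^2=R^a/\mu^2$, and $R^a\le M\max_t r_t^2$, which is at most $M\,r^{\mathrm{cnt}}(M)^2$ in the count case. In the set case I use the analogous bound $R^{\mathrm{set}}\le M D^{\mathrm{set}}$, obtained from $r^{\mathrm{set}}$ versus the cumulative increments via Theorem~\ref{thm:variance}. Count-based RW has $D^{\mathrm{cnt}}\asymp r^{\mathrm{cnt}}(M)^2$ by Corollary~\ref{cor:single-release}, which explains the differing conditions in the table.

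\emph{Prediction and GED.} Since $x_0$ is independent of $\mathcal F_t$ and the fresh noise,
\[
\Pred_t=(e_t+\bar N)^\top\Gamma(e_t+\bar N)+\E\{\tr(\Gamma\,\mathrm{Cov})\},
\]
and I bound this by $2C_X\|e_t\|_2^2$ plus a noise term. The subtlety is that for RW $Z_{t-1}$ is $\mathcal F_t$-measurable, so the noise term is only partly averaged. I therefore bound $\Pred_t\le 2C_X(\|e_t\|_2^2+\|Z_{t-1}\|_2^2+p\sigma_t^2)$, which requires a maximal bound on $\max_t\|Z_t\|_2$. For IND the fresh $\varepsilon_t$ is fully averaged, giving $\Pred_t\le C(r_t^2+p\tau_t^2)$ with no log factor. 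For GED I use the mean value theorem together with the polynomial growth in Assumption~\ref{ass:loss}:
\[
|\ell(y_0,a)-\ell(y_0,b)|\le C(1+|y_0|^s+|a|^s+|b|^s)\,|a-b|.
\]
Cauchy--Schwarz then gives $\GED_t\le C\,\{\E(1+|y_0|^{2s}+|x_0^\top Y_t|^{2s}+|x_0^\top\widehat\beta_{S_t}|^{2s})\}^{1/2}\Pred_t^{1/2}$. The moment factor is bounded because $\widehat\beta_{S_t}$ lies in the compact set $\cN$ on $\cH_n$ (this would be built into the event), $\sup_n\E|y|^{2s}<\infty$, $x$ is sub-Gaussian, and the perturbation $Y_t-\widehat\beta_{S_t}$ is bounded on the good event.

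\emph{Parameter consistency and the main obstacle.} For the realized error I need $\max_{t\le T}\|N_t\|_2\to0$ in probability, uniformly over policies. For IND, $\varepsilon_t/\tau_t$ are conditionally standard Gaussian vectors and $T\le M$. A union bound with the Gaussian tail, valid under adaptivity because each step is conditionally Gaussian given $\mathcal F_t$ and $\tau_t$ has a deterministic upper bound, gives $\max_t\|\varepsilon_t\|_2=O_p\bigl(\sqrt{\tau^2\log(M+1)}\bigr)$. This yields the conditions $r^{\mathrm{cnt}}(M)\sqrt{M\log(M+1)}\to0$ and $D^{\mathrm{set}}M\log(M+1)\to0$. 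For RW, $Z_t$ is a vector martingale with predictable variances and total variance at most $V_{\rw}$. Doob's $L^2$ maximal inequality applied to each coordinate gives $\E\max_t\|Z_t\|_2^2\le 4pV_{\rw}$ with no log factor, and this bound is deterministic, hence uniform over $\pi$. This maximal step is the main obstacle: I must handle variances chosen adaptively (the $\mathcal F_t$-measurability of $\sigma_t^2$) and random stopping at $T$. I would extend the process with zero increments after $T$ and apply Doob's inequality to the stopped martingale. Finally, the set-based conditions are stated in probability, so I intersect $\cH_n$ with the events $\{D^{\mathrm{set}}\le\eta\}$ and let $\eta\to0$. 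Computable RW follows verbatim with $D^{\mathrm{com}}$ in place of $D^{\mathrm{set}}$, since $V_{\rw}^{\mathrm{com}}\le D^{\mathrm{com}}/\mu^2$ by Corollary~\ref{cor:computable-calibration}.
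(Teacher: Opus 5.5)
Your proposal is correct and follows essentially the same route as the paper. The shared steps are the residual-plus-noise decomposition, the prediction identities with $Z_{t-1}$ held fixed, Doob's maximal inequality for the RW martingale, a $\log(M+1)$ Gaussian-maximum bound for IND (the paper bounds the expectation via log-sum-exp rather than using a union bound, to the same effect), and a conditional Markov argument for the in-probability set-based conditions.

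One wording fix in the GED step: the fresh Gaussian noise is not bounded on any good event. The moment factor should instead be controlled through conditional moments of $x_0^\top(Y_t-\widehat\beta_{S_t})$, as the paper does to obtain $\GED_t\le C\sqrt{\Pred_t}\{1+\Pred_t^{s/2}\}$, or after restricting to $\{\max_t\|Z_t\|_2\le1\}$, which is harmless.
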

The sufficient conditions are weaker for RW than for IND,
reflecting its lower calibrated noise variance. For IND, controlling the
largest realized noise norm across releases introduces the additional
logarithmic factor in the parameter consistency condition. A martingale
maximal bound avoids this factor for RW, so the same condition suffices
for parameter, prediction, and GED consistency.

The count-based conditions also suffice for the corresponding set-based
allocations and for computable RW. Substituting
$r^{\mathrm{cnt}}(M)=O\{M^2\polylog(n)/n^2\}$ gives a simple sufficient
range for polynomial deletion caps $M=\Theta(n^\gamma)$: all three
consistency conclusions hold for RW when $0<\gamma<1$ and for IND when
$0<\gamma<4/5$.
{\color{black}Section~\ref{sec:separation-experiment}} examines a deletion cap
within the sufficient range for RW but outside that for IND. In that
experiment, the mean maximum errors decrease for RW and increase overall
for IND over the sample sizes considered.

\section{Numerical experiments}
\label{sec:experiments}
We now evaluate the methods numerically. We start with synthetic logistic
regression experiments comparing count-based IND, count-based RW,
and computable RW under fixed and adaptive deletion rules.
We then apply the same methods to credit default data.
{\color{black}Supplementary Section~\ref{app:running-times}} compares the running times of
these methods with exact retraining.

\subsection{Simulation setup}
\leavevmode The synthetic experiments use a logistic regression model with $p=4$.
A common coefficient vector $\beta^\star$ is drawn once, uniformly from
the unit sphere, and held fixed across all experiments. Each configuration
uses $100$ independent training samples generated with this coefficient
vector. Within each sample, $x_1,\ldots,x_n$ are i.i.d. uniform
on the unit sphere. The responses are then generated independently as
$y_i\mid x_i,\beta^\star \sim \operatorname{Bernoulli}\{1/[1+\exp(-x_i^\top\beta^\star)]\}$.
We fit ridge logistic regression to the generated data. After deleting the
observations indexed by $A$, the exact retrained estimator minimizes
\[
L_A(\beta)
=
\sum_{i\notin A}
\left[
\log\{1+\exp(x_i^\top\beta)\}-y_ix_i^\top\beta
\right]
+\lambda\|\beta\|_2^2.
\]
We use $\lambda=1$ and $\mu=1$ throughout.

All three methods use the Newton approximation in \eqref{eq:one-step}.
We compare the following combinations of noise mechanism and bounds:
\begin{itemize}
\item \textbf{IND with count-based bounds.} At each unlearning step, add
noise with variance
$\tau_t^2=R^{\mathrm{cnt}}/\mu^2$.
\item \textbf{RW with count-based bounds.} At each unlearning step, add
noise with increment variance
$\sigma_t^2=d^{\mathrm{cnt}}(m_{t-1},m_t)\sqrt{D^{\mathrm{cnt}}}/\mu^2$.
\item \textbf{RW with computable bounds.} At each unlearning step, add
noise with increment variance
$\sigma_t^2=d^{\mathrm{set}}(S_{t-1},S_t)\sqrt{D^{\mathrm{com}}}/\mu^2$.
\end{itemize}

To implement these allocations, we need numerical bounds on the derivative
and curvature quantities in the assumptions. Their values need not be
known in advance. For logistic regression, we construct valid bounds
from the observed data and the known form of the loss.
Supplementary Section~\ref{app:logistic-recipe} lists the formulas used to
compute these constants.

All experiments delete one observation at a time, so $m_t=t$ and $T=M$.
In Sections~\ref{sec:residual-bounds-experiment}
and~\ref{sec:variance-utility-experiment}, deletion orders are sampled
uniformly without replacement and fixed before any releases. The three
methods share the training data and deletion order. In
{\color{black}Section~\ref{sec:adaptive-deletion-experiment}}, deletion requests are adaptive.
Errors are evaluated
relative to exact retraining on the remaining observations.

\subsection{Residual bounds}\label{sec:residual-bounds-experiment}
We first compare the residuals and their increments with the bounds used
for certification. This experiment uses $100$ independent repetitions
with $n=10{,}000$, $p=4$, and $M=32$, under the data generation and random
deletion rule above. At each deletion step, we compute
$e_{S_t}=\widetilde\beta_{S_t}-\widehat\beta_{S_t}$.
Figure~\ref{fig:residual-bounds} compares $\|e_{S_t}\|_2$
and $\|e_{S_t}-e_{S_{t-1}}\|_2$ with their count-based and set-based bounds.
\begin{figure}[!htbp]
\centering
\includegraphics[width=\linewidth]{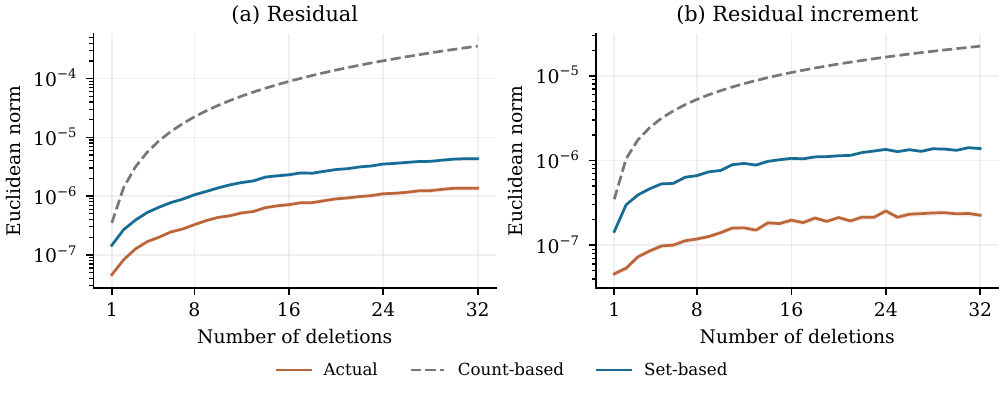}
\caption{Residuals, residual increments, and their bounds along the same deletion
paths. Panel (a) compares $\|e_{S_t}\|_2$,
$r^{\mathrm{set}}(S_t)$, and $r^{\mathrm{cnt}}(m_t)$.
Panel (b) compares $\|e_{S_t}-e_{S_{t-1}}\|_2$,
$d^{\mathrm{set}}(S_{t-1},S_t)$, and $d^{\mathrm{cnt}}(m_{t-1},m_t)$.
Orange curves show observed norms, blue curves show set-based
bounds, and dashed gray curves show count-based bounds.
Curves are means over
$100$ independent repetitions. Bands show one standard error of the mean.
Both vertical axes use a logarithmic scale. Here $n=10{,}000$ and $M=32$.}
\label{fig:residual-bounds}
\end{figure}

In both panels, the bounds lie above the observed errors, and
the set-based bound is tighter than the count-based bound.

\subsection{Variance and utility comparison}\label{sec:variance-utility-experiment}
\leavevmode We next compare the noise variances obtained from the count-based bounds.
The same $100$ training samples as in Figure~\ref{fig:residual-bounds},
with $n=10{,}000$ and $p=4$, are reused across deletion caps
$M\in\{2,4,8,16,32,64\}$. For each sample and cap, we compute the ratio
of the worst-case noise variances
$V_{\rw}^{\mathrm{cnt}}/V_{\ind}^{\mathrm{cnt}}
=D^{\mathrm{cnt}}/R^{\mathrm{cnt}}$.
\begin{figure}[!htbp]
\centering
\includegraphics[width=\linewidth]{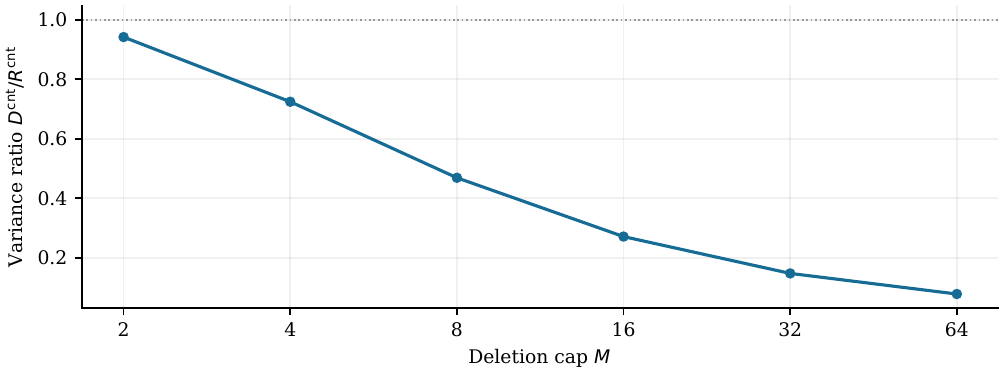}
\caption{Worst-case variance of count-based RW relative to count-based
IND. The curve plots $D^{\mathrm{cnt}}/R^{\mathrm{cnt}}$, averaged over
$100$ independent training samples with a common $\beta^\star$ and $n=10{,}000$.
Error bars show one standard error of the mean
but are largely hidden by the circular markers. The horizontal axis is
logarithmic with base $2$, and the vertical axis is linear.}
\label{fig:calibration-variance}
\end{figure}

In Figure~\ref{fig:calibration-variance}, the variance ratio remains below
one and decreases as the deletion cap grows. Thus count-based RW uses
less worst-case variance than count-based IND throughout this range,
with a larger relative reduction at higher caps.

To examine how these variance differences affect accuracy as the training
sample grows, we take
$n\in\{1000,\allowbreak 2000,\allowbreak 4000,\allowbreak 8000,\allowbreak 16000,\allowbreak 32000,\allowbreak 64000,\allowbreak 128000\}$ and
$M_n=\lfloor\sqrt n/2\rfloor$. At each sample size, we generate $100$
independent training samples with the same fixed $\beta^\star$.
For each sample, we draw a new deletion order uniformly without replacement
and use it for all three methods. Figure~\ref{fig:utility} reports the means of the three
maximum errors
$\max_{t\le M_n}\|Y_t-\widehat\beta_{S_t}\|_2$, $\max_{t\le M_n}\Pred_t$, and $\max_{t\le M_n}\GED_t$.
Each maximum is taken along a trajectory before averaging over
repetitions. {\color{black}Supplementary Section~\ref{app:consistency-experiments}} gives the
evaluation procedure.

\begin{figure}[!htbp]
\centering
\includegraphics[width=\linewidth]{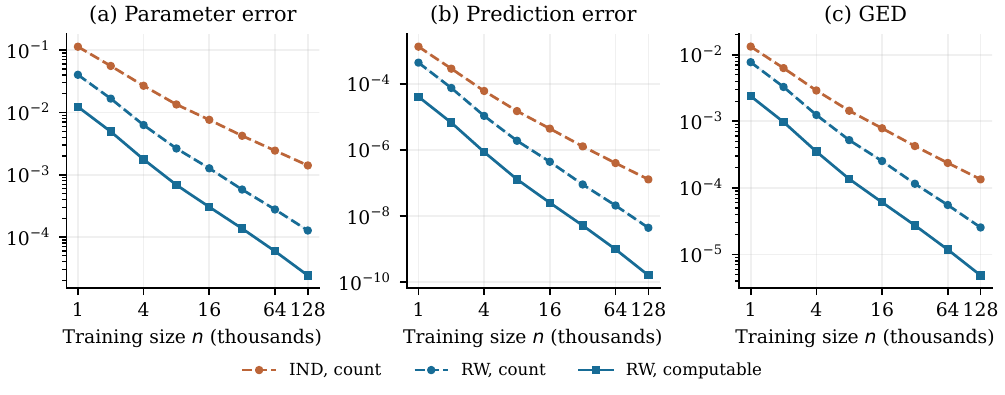}
\caption{Maximum parameter error, prediction error, and GED under random
deletion, with $M_n=\lfloor\sqrt n/2\rfloor$ and $\mu=1$.
Colors distinguish IND and RW, and line styles distinguish the bounds
used. Curves show means over $100$ repetitions at each sample size.
Bands show one standard error of the mean. Both axes are logarithmic.}
\label{fig:utility}
\end{figure}

All three mean maximum errors decrease with sample size for every method,
consistent with the theoretical guarantees in this regime. Both RW
allocations have lower errors than count-based IND, and computable RW
has lower errors than count-based RW. The variance reductions are
therefore accompanied by improved accuracy relative to exact retraining.

\subsection{Adaptive deletion}\label{sec:adaptive-deletion-experiment}
\leavevmode The preceding comparison fixes the deletion order before any releases.
We now allow requests to depend on earlier released models, using the
same sample sizes, deletion caps, and training samples as in
Figure~\ref{fig:utility}. Before any releases, we sample $2M_n-1$
distinct training indices uniformly without replacement. The first is
deleted at release $1$. The others form $M_n-1$ ordered candidate pairs,
one for each subsequent release. Independently, we draw a unit vector
$v$ uniformly from the sphere.

At release $t\ge2$, we delete the first member of the current pair if
$v^\top(Y_{t-1}-\widehat\beta)\ge0$, and the second otherwise, where
$\widehat\beta$ is the full-data fit. All three methods share the training
data, candidate pairs, and projection vector $v$. Each method applies the
rule to its own releases, so the realized deletion paths can differ.
Each method runs one trajectory per repetition.
Figure~\ref{fig:calibration-adaptive} reports the same three maximum
errors, with each release compared against the exact retrained
estimator for that method's deletion set.

\begin{figure}[!htbp]
\centering
\includegraphics[width=\linewidth]{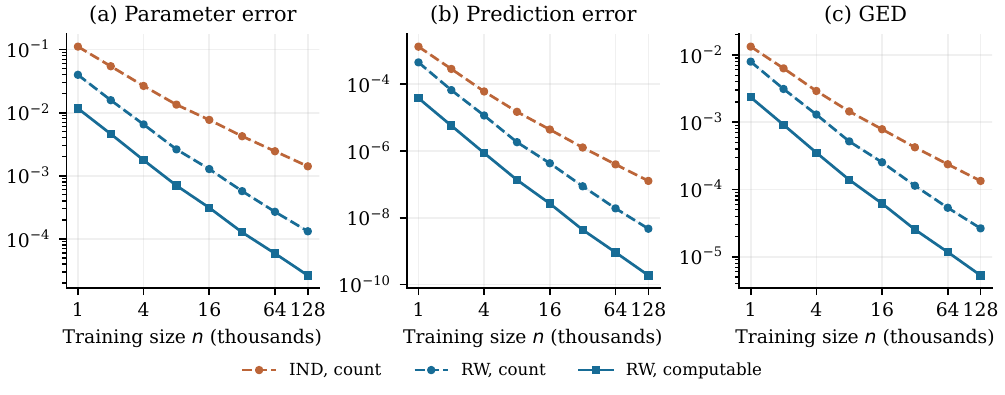}
\caption{Maximum parameter error, prediction error, and GED under
adaptive deletion, with $M_n=\lfloor\sqrt n/2\rfloor$ and $\mu=1$.
Each method chooses deletions using its own preceding releases. Curves
show means over $100$ repetitions at each sample size. Bands show one
standard error of the mean. Both axes are logarithmic.}
\label{fig:calibration-adaptive}
\end{figure}

Under this adaptive rule, the three mean maximum errors again decrease
with sample size. The ordering of the methods is unchanged: both RW
allocations have lower errors than count-based IND, and computable RW
has lower errors than count-based RW.

\subsection{An IND--RW separation}
\label{sec:separation-experiment}
\leavevmode The preceding utility experiments use a deletion cap for which
Theorem~\ref{thm:utility-consistency} gives consistency for both IND and RW.
To examine the difference between their sufficient conditions, we now
compare count-based IND and RW with the cap
$M_n=\lfloor0.05n^{9/10}\rfloor$, which grows more quickly with sample size.
This rate lies within the sufficient range established for RW but outside
the range established for IND.
We retain the same logistic model, fixed $\beta^\star$, $p=4$, and
$\lambda=\mu=1$. The sample sizes are
$n\in\{2000,\allowbreak 4000,\allowbreak 8000,\allowbreak 16000,\allowbreak 32000,\allowbreak 64000,\allowbreak 128000\}$, with respective
caps $46,87,162,303,567,1058,1974$.
Each configuration uses $100$ independent training data sets and the fixed
deletion order $1,2,\ldots,M_n$.
For each training sample, each method generates one release
sequence along this common deletion order.
As before, we report
the three maximum errors, evaluated as described in
{\color{black}Supplementary Section~\ref{app:consistency-experiments}}.

\begin{figure}[!htbp]
\centering
\includegraphics[width=\linewidth]{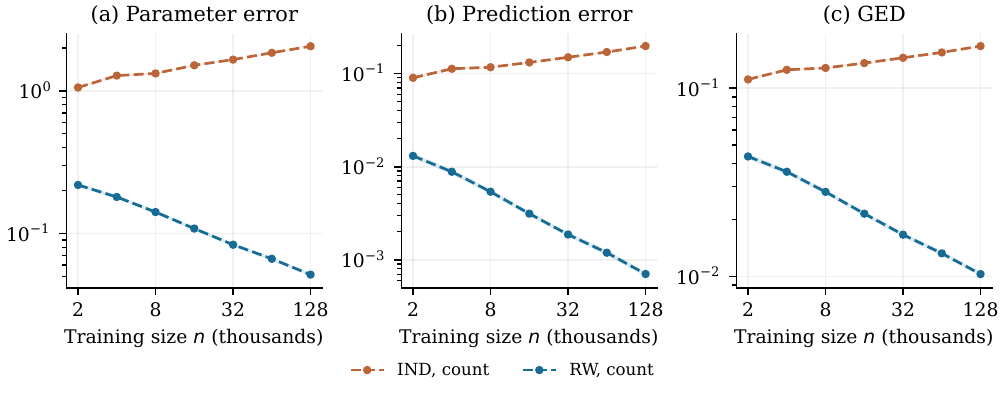}
\caption{Count-based IND and RW with $M_n=\lfloor0.05n^{9/10}\rfloor$
and fixed deletion order. Panels show means of the three maximum errors
over $100$ repetitions. Bands show one standard error of the mean. Both
axes are logarithmic.}
\label{fig:consistency-separation}
\end{figure}

Figure~\ref{fig:consistency-separation} shows different trends under this
larger deletion cap. As $n$ grows, all three mean maximum errors increase
overall for count-based IND and decrease for count-based RW. The maximum
noise variances follow the same respective trends, as shown in
Figure~\ref{fig:consistency-separation-variance} in
{\color{black}Supplementary Section~\ref{app:consistency-experiments}}.

\subsection{Credit default data}
\label{sec:real-data}
We finally evaluate the methods on the Default of Credit Card Clients
data set from the UCI Machine Learning Repository {\citep{default_of_credit_card_clients_350}}, which contains $30{,}000$ records with observed binary
default outcomes.
We select five predictors: credit limit, age, most recent repayment status,
most recent bill amount, and most recent payment amount. Including an
intercept gives $p=6$. We randomly divide the data into $n=24{,}000$
training records and $6000$ test records.

Before fitting the model, we transform the monetary predictors logarithmically, cap repayment
status at $2$, and center and standardize the five predictors using the
training means and standard deviations. After adding an intercept, we
divide all six coordinates by the largest row norm in the training
sample. The same transformation is applied to the test data and held
fixed during deletion. {\color{black}Supplementary Section~\ref{app:credit-details}} specifies the
transforms.

We fit ridge logistic regression with $\lambda=1$ and evaluate accuracy
relative to the same learning algorithm fitted to the retained data.
The working logistic model may be misspecified. We compare count-based
IND, count-based RW, and computable RW, all with certification parameter
$\mu=1$ and deletion cap $M=16$.

The data and training--test split remain fixed over $100$ repetitions.
In each repetition, we independently resample a uniform random deletion
order and release noise. All methods share the deletion order. Exact
retrained estimators are computed only for evaluation.

\begin{figure}[!htbp]
\centering
\includegraphics[width=\linewidth]{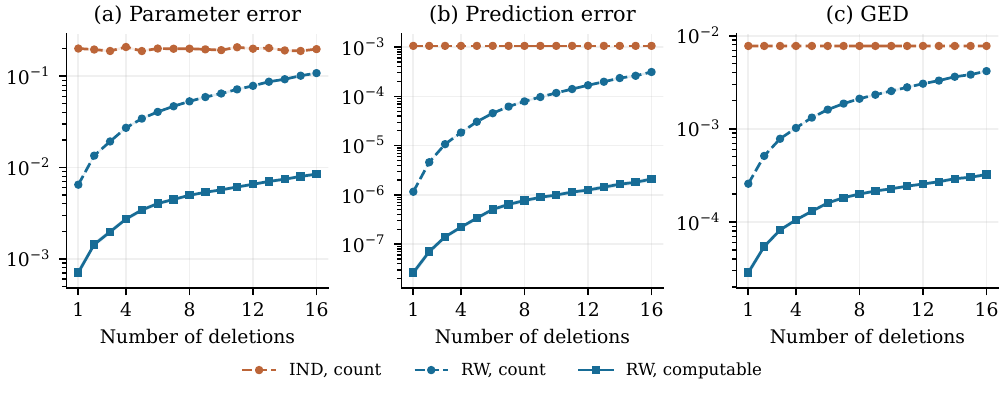}
\caption{Parameter error, prediction error, and GED for IND and RW on
credit default data, with $n=24{,}000$, $p=6$, $M=16$, and
$\lambda=\mu=1$. All errors are relative to exact retraining on the
retained data. Prediction error and GED average over the test sample
and fresh release noise, conditional on the current deletion set and
preceding releases.}
\label{fig:real-credit}
\end{figure}

Figure~\ref{fig:real-credit} compares the three allocations along the
same deletion sequences. As in the simulations, both RW allocations
have lower parameter error, prediction error, and GED than count-based
IND. Computable RW has the smallest errors throughout the evaluated
sequence.

\section{Discussion}
\label{sec:discussion}
We develop Gaussian mechanisms for continual certified unlearning
under adaptive deletions. Calibrating RW to residual increments reduces
the noise cost relative to calibrating IND to residual levels. With
count-based bounds, RW asymptotically matches the worst-case variance of
a single release at the deletion cap. The resulting bounds also give
consistency relative to exact retraining, uniformly over deletion policies,
under weaker sufficient conditions for RW than for IND.

Our analysis assumes fixed dimension, smooth convex losses, and local
population curvature. Extending the residual and increment bounds to
high dimensions is one direction for future work. Our covariance minimax results concern mechanisms with covariance
fixed before release. Characterizing the optimal variance when noise
covariances can adapt to deleted records and preceding releases remains open.

\bibliographystyle{plainnat}
\bibliography{reference}

\clearpage
\section*{Supplementary material}
\pdfbookmark[0]{Supplementary material}{supplement-title}

This supplement gives the residual and increment bounds used in
the main article and the formulas used in the logistic regression experiments.
It derives bounds for ridge logistic regression and provides
details of the numerical experiments and comparisons of running times.
It then examines how the starting point and number of Newton steps affect
computation and approximation error. The final section provides proofs
of the theoretical results in the main article and this supplement.

\setcounter{section}{0}
\renewcommand{\thesection}{S.\arabic{section}}
\renewcommand{\theHsection}{supp.\arabic{section}}

\section{Residual and variance bounds}\label{app:certificates}
We give the bounds on the Newton residual $e_S=\widetilde\beta_S-\widehat\beta_S$
and its increments used to calibrate IND and RW, respectively.
Because deletion sets may be chosen adaptively, these bounds must hold
simultaneously over all sets below the cap. We first specify a common event
on which the exact retrained estimators lie in a common neighborhood
and the loss derivatives are controlled,
then bound the residual levels and the changes along a deletion path.
Finally, we bound the path maxima needed to compute the noise variances.

\subsection{Uniform event and localization}
\leavevmode The following lemma provides the uniform bounds needed
to control the Newton residuals and their increments. It places all exact
retrained estimators in the common neighborhood $\cN$ from
Assumption~\ref{ass:identification}, so the relevant Taylor expansions stay
within that region. The Hessian lower bound controls the inverse Hessians,
and the derivative bounds control the Taylor remainders. These properties
hold on one high-probability event $\cH_n$ for all deletion sets with
$|A|\le M$, so the same bounds apply to adaptive deletion requests. Here $G_n$ and $B_n$ bound
individual gradients and Hessians. $C_{\ell,n}$ and $C_{L,n}$ control Hessian
variation for an individual loss and a retained objective, respectively.

\begin{lemma}
\label{lem:uniform-event}
Under Assumptions~\ref{ass:loss}--\ref{ass:identification}, there are deterministic sequences
$G_n,\allowbreak B_n,\allowbreak C_{\ell,n},\allowbreak C_{L,n}$ such that
\[
G_n,B_n,C_{\ell,n}=O(\polylog(n)),
\qquad
C_{L,n}=O(n\polylog(n)),
\qquad
{\color{black}C_{L,n}\ge C_{\ell,n}.}
\]
For any cap satisfying
$M=o({n/\polylog(n)})$,
{\color{black}the constants satisfy $MC_{\ell,n}/C_{L,n}\to0$, with
$0/0$ interpreted as zero, and}
there are an event $\cH_n$ and a sequence $\kappa_n\ge c_0n/4$ such that
$\Pp(\cH_n)\ge1-\delta_n$, $\delta_n=o(1)$.
On $\cH_n$, for every $|A|\le M$, every $i\le n$, and every
$\beta\in\cN$,
\[
\nabla^2L_A(\beta)\succeq\kappa_nI_p,
\qquad
\|\nabla\ell_i(\beta)\|_2\le G_n,
\qquad
\|\nabla^2\ell_i(\beta)\|_2\le B_n.
\]
Moreover, for every $i\le n$, every $|A|\le M$, and all
$\beta,\beta'\in\cN$,
\[
\|\nabla^2\ell_i(\beta)-\nabla^2\ell_i(\beta')\|_2
\le C_{\ell,n}\|\beta-\beta'\|_2,
\]
and
\[
\|\nabla^2L_A(\beta)-\nabla^2L_A(\beta')\|_2
\le C_{L,n}\|\beta-\beta'\|_2.
\]
The constants $C_{\ell,n}$ and $C_{L,n}$ are uniform over $i$ and $A$,
respectively, and $\widehat\beta_A\in\cN$ for every $|A|\le M$.
Here $\ell$ and $L$ refer to the individual loss $\ell_i$ and the retained
objective $L_A$, respectively.
\end{lemma}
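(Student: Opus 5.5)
We will build $\cH_n$ as the intersection of a small number of high-probability events, each controlled by a union bound or a uniform concentration argument. The deterministic constants are then read off from Assumption~\ref{ass:loss} on that event.

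\emph{Step 1 (truncation event).} Let $\mathcal E_1=\{\max_i|y_i|\le B_y(n),\ \max_i\|x_i\|_2\le C\sqrt{\log n}\}$. Assumption~\ref{ass:sampling} gives $n\Pp\{|y|>B_y(n)\}\to0$, and sub-Gaussianity of $x$ with a union bound over $n$ records and $p$ coordinates gives $\Pp(\mathcal E_1)\to1$. On $\mathcal E_1$, for $\beta\in\cN$ (which is compact, so $\|\beta\|_2\le R_\cN$), we have $|x_i^\top\beta|=O(\sqrt{\log n})$. The growth condition then bounds $|\partial_z\ell|$ and $|\partial_z^3\ell|$ by $\polylog(n)$, so $G_n:=\sup|\partial_z\ell|\cdot C\sqrt{\log n}=O(\polylog n)$. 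For the Hessian, integrate $\partial_z^3\ell$ from a reference point. Convexity gives $\partial_z^2\ell\ge0$. To obtain an upper bound, use convexity together with the polynomial growth of $\ell$ itself: a convex function bounded by $\polylog$ on an interval of width 1 around $z$ has second derivative controlled via third-derivative bounds plus $\partial_z\ell$ differences. Concretely, $\partial_z^2\ell(y,z)\le |\partial_z\ell(y,z+1)-\partial_z\ell(y,z)|+\sup|\partial_z^3\ell|$. This yields $B_n=O(\polylog n)$. Similarly, $C_{\ell,n}=\sup|\partial_z^3\ell|\cdot\max_i\|x_i\|_2^3=O(\polylog n)$. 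Define $C_{L,n}=\lambda C_{\mathrm{reg}}(n)+nC_{\ell,n}$, which is $O(n\polylog n)$ and at least $C_{\ell,n}$ (enlarge if needed). Then $MC_{\ell,n}/C_{L,n}\le M/n\to0$. The Lipschitz bound for $\nabla^2L_A$ follows by summing at most $n$ terms and adding the regularizer's term from Assumption~\ref{ass:identification}.

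\emph{Step 2 (uniform curvature).} We need $\inf_{\beta\in\cN}\lambda_{\min}\{n^{-1}\sum_i\partial_z^2\ell_i(\beta)x_ix_i^\top\}\ge c_0/2$ with high probability. We prove this by a uniform law of large numbers on the compact set $\cN$: take a finite $\eta$-net, apply concentration for the truncated summands at each net point (bounded by $\polylog$ on $\mathcal E_1$, with truncation bias controlled by $\sup_n\E|y|^{2s}$ and sub-Gaussian moments), and pass from the net to all of $\cN$ using the Lipschitz constant $nC_{\ell,n}$ from Step 1 with $\eta=1/\polylog$. Deleting $|A|\le M$ terms removes at most $MB_n=o(n)$ in operator norm, and the regularizer is positive semidefinite. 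Hence $\nabla^2L_A(\beta)\succeq (c_0/2-o(1))nI_p\succeq (c_0n/4)I_p$, and we set $\kappa_n=c_0n/4$.

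\emph{Step 3 (localization).} This is the main obstacle: showing $\widehat\beta_A\in\cN$ for all $|A|\le M$ at once. First show $\|\widehat\beta-\beta_0\|_2\to0$ in probability. Since $\nabla\mathcal R(\beta_0)=0$, the bound $\|n^{-1}\nabla L_\varnothing(\beta_0)\|_2\le \lambda\|\nabla\mathcal P(\beta_0)\|_2/n+O_p(n^{-1/2})$ holds by the CLT (second moments follow from the growth condition and the moment assumptions). Combining this with strong convexity $\kappa_n$ on the ball $\cN$ gives $\|\widehat\beta-\beta_0\|_2=O_p(n^{-1/2})$ via the standard convexity argument. On the boundary $\partial\cN$, $L_\varnothing$ exceeds $L_\varnothing(\beta_0)$, and convexity then traps the minimizer inside. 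Next, for each $A$, $\|\nabla L_A(\widehat\beta)\|_2=\|\sum_{i\in A}\nabla\ell_i(\widehat\beta)\|_2\le MG_n$. The same convexity/curvature argument on a ball around $\widehat\beta$ inside $\cN$ gives $\|\widehat\beta_A-\widehat\beta\|_2\le 2MG_n/\kappa_n=o(1)$ uniformly in $A$. Therefore every $\widehat\beta_A$ lies within $o(1)$ of $\beta_0$, and in particular lies in a fixed inner ball of $\cN$. This requires $\beta_0$ to be interior to $\cN$, which is given by the neighborhood assumption.

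Finally, set $\cH_n$ to be the intersection of the events in Steps 1–3, and let $\delta_n$ be the sum of their failure probabilities, which is $o(1)$. All the stated bounds hold deterministically on $\cH_n$, uniformly over $A$ and $i$.
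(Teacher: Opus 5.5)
Your proposal is correct. Steps 1 and 2 essentially coincide with the paper's proof:
\begin{itemize}
\item the same truncation event on $\max_i\|x_i\|_2$ and $\max_i|y_i|$;
\item the same Taylor bound for $\partial_z^2\ell$ through $\partial_z\ell(y,z+1)-\partial_z\ell(y,z)$ and the third derivative;
\item the same choice $C_{L,n}=nC_{\ell,n}+\lambda C_{\mathrm{reg}}(n)$, so that $MC_{\ell,n}/C_{L,n}\le M/n$;
\item the same passage from full-data curvature $c_0n/2$ to $\kappa_n=c_0n/4$ by discarding at most $MB_n=o(n)$.
\end{itemize}
The only difference in Step 2 is how the uniform law is run. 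The paper uses a fixed $\eta$-net and a pointwise weak law under the envelope $K_i$ with $\sup_n\E K_i^{4/3}<\infty$, obtained by H\"older from $\E|y|^{2s}$. It then uses $n^{-1}\sum_iK_i=O_p(1)$ and lets $\eta\downarrow0$. You instead truncate, apply bounded-summand concentration on a net of mesh $1/\polylog(n)$, and bound the truncation bias. Your version is more quantitative, while the paper's is shorter. In your net-to-set step, note that the population Hessian $\E\nabla^2\ell(\beta)$ also needs a Lipschitz bound. It is $O(1)$ by the same envelope.

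The genuine difference is localization. The paper does it in one stage. On the event $\|n^{-1}\nabla L_\varnothing(\beta_0)\|_2\le c_0\rho_0/32$, it bounds $\|\nabla L_A(\beta_0)\|_2\le\|\nabla L_\varnothing(\beta_0)\|_2+MG_n\le c_0n\rho_0/16$ for every $|A|\le M$. The Taylor lower bound then makes $L_A$ exceed $L_A(\beta_0)$ on the sphere of radius $\rho_0$, and convexity traps every $\widehat\beta_A$ in $B(\beta_0,\rho_0)$ at once.

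You instead first localize $\widehat\beta$ near $\beta_0$. You then use $\nabla L_A(\widehat\beta)=-\sum_{i\in A}\nabla\ell_i(\widehat\beta)$ on a ball around $\widehat\beta$ to get $\|\widehat\beta_A-\widehat\beta\|_2\le 2MG_n/\kappa_n$. This is valid: on the intersected event and for large $n$, that ball lies in $\cN$. It also gives a uniform deletion-displacement bound as a by-product, which the paper derives only later, in the sharper form $\|g_S\|_2/\kappa_n$, inside the proof of Theorem~\ref{thm:bounds}.

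Your CLT step and the $O_p(n^{-1/2})$ rate are unnecessary. Only $n^{-1}\nabla L_\varnothing(\beta_0)\to0$ in probability is used, and that follows from a weak law under the same $4/3$-moment envelope. The same envelope also justifies $\E\nabla\ell(\beta_0)=\nabla\mathcal R(\beta_0)=0$ by differentiating under the expectation, which you assumed without comment. Claiming second moments of $\nabla\ell(\beta_0)$ would require controlling $\E|y|^{2s}\|x\|_2^2$. $\E|y|^{2s}<\infty$ and sub-Gaussian features do not give this by themselves when $x$ and $y$ are dependent, so it is cleaner to drop the rate.
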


\subsection{Count-based and set-based bounds}
\label{app:residual-increment-bounds}
On $\cH_n$, the following bounds control the residuals and their
increments for all deletion sets below the cap.

\textbf{Count-based residuals.}
For $0\le m\le M$, define
\[
r^{\mathrm{cnt}}(m):=\frac{C_{L,n}G_n^2m^2}{2\kappa_n^3}
=O\!\left(\frac{m^2\polylog(n)}{n^2}\right).
\]
Then $\|e_S\|_2\le r^{\mathrm{cnt}}(|S|)$ for every $|S|\le M$.

\textbf{Set-based residuals.}
Let $g_S=\sum_{i\in S}\nabla\ell_i(\widehat\beta)$,
$H_S=\nabla^2L_S(\widehat\beta)$, and
$\rho_S=\|g_S\|_2/\kappa_n$. Define
\[
r^{\mathrm{set}}(S):=\frac{C_{L,n}}2\|H_S^{-1}\|_2\rho_S^2.
\]
Then $\|e_S\|_2\le r^{\mathrm{set}}(S)\le r^{\mathrm{cnt}}(|S|)$.

\textbf{Count-based increments.}
For $0\le k<m\le M$, define
\[
\begin{aligned}
d^{\mathrm{cnt}}(k,m)
&:=\frac{G_n^2(m-k)}{2\kappa_n^3}
\left[C_{L,n}(k+m)+k^2\left(C_{\ell,n}
+\frac{B_nC_{L,n}}{\kappa_n+(m-k)B_n}\right)\right]\\
&=O\!\left(\frac{(m-k)(k+m)\polylog(n)}{n^2}\right).
\end{aligned}
\]
Then $\|e_B-e_A\|_2\le d^{\mathrm{cnt}}(k,m)$ for every
$A\subsetneq B$ with $|A|=k$ and $|B|=m$.

\textbf{Set-based increments.}
For $A\subsetneq B$ and $q=|B\setminus A|$, define
\[
\begin{aligned}
\delta_{A,B}:={}&\frac{1}{\kappa_n}
\min\left\{qG_n,\;\|g_B-g_A\|_2+qB_n\rho_A\right\},\\
U_{A,B}:={}&\frac{\|H_B^{-1}\|_2}{2}
\left[C_{L,n}(\rho_A+\rho_B)\delta_{A,B}
+qC_{\ell,n}\rho_A^2\right]
+\frac{C_{L,n}\rho_A^2}{2}
\|H_B^{-1}(H_A-H_B)H_A^{-1}\|_2.
\end{aligned}
\]
The increment bound is
\[
d^{\mathrm{set}}(A,B)
:=\min\{U_{A,B},r^{\mathrm{set}}(A)+r^{\mathrm{set}}(B)\},
\]
which satisfies
$\|e_B-e_A\|_2\le d^{\mathrm{set}}(A,B)
\le d^{\mathrm{cnt}}(|A|,|B|)$.

For $A\subsetneq B$ with $|B|\le M$, the set-based bounds satisfy
\begin{equation}
d^{\mathrm{set}}(\varnothing,B)=r^{\mathrm{set}}(B),
\qquad
|r^{\mathrm{set}}(B)-r^{\mathrm{set}}(A)|\le d^{\mathrm{set}}(A,B).
\label{eq:bound-relations}
\end{equation}
These relations connect residual levels to increments in the variance
and covariance comparisons. Write
$r_{*}^{\mathrm{set}}=\max_{|S|\le M}r^{\mathrm{set}}(S)$ for the largest
residual bound. When $r_*^{\mathrm{set}}>0$, a maximizing set $S_*$ is
nonempty, so the one-step path $(\varnothing,S_*)$ is allowed and
$d^{\mathrm{set}}(\varnothing,S_*)=r_*^{\mathrm{set}}$ by
\eqref{eq:bound-relations}. Thus
$r_*^{\mathrm{set}}\le\sqrt{D^{\mathrm{set}}}$ and
$(r_*^{\mathrm{set}})^2\le R^{\mathrm{set}}$.
These inequalities also hold when $r_*^{\mathrm{set}}=0$.
The maximum defining $R^{\mathrm{set}}$ is attained by a full singleton
deletion path, since splitting batches into single deletions and extending
the path to $M$ deletions only adds nonnegative terms to the sum.
\subsection{Computable variance bounds}\label{app:computable-caps}

The set-based variance allocations depend on the pathwise quantities
\[
\begin{aligned}
R^{\mathrm{set}}
&=
\max_{P=(S_0,\ldots,S_T)\in\mathcal C}
\sum_{t=1}^T
\left\{r^{\mathrm{set}}(S_t)\right\}^2,\\
D^{\mathrm{set}}
&=
\max_{P=(S_0,\ldots,S_T)\in\mathcal C}
\left\{
\sum_{t=1}^T
d^{\mathrm{set}}(S_{t-1},S_t)
\right\}^2
\end{aligned}
\]
for IND and RW, respectively. Since the set-based bounds depend on the identities of the deleted
observations, evaluating either maximum directly requires optimization over
deletion paths.

For IND, the count-based residual bound immediately gives
\[
R^{\mathrm{set}}
\le
R^{\mathrm{cnt}}
=
\sum_{m=1}^M
\left\{r^{\mathrm{cnt}}(m)\right\}^2.
\]
Using this upper bound in the set-based IND allocation gives exactly the
count-based IND variance. We therefore do not introduce a
computable set-based IND allocation.

For RW, however, replacing $D^{\mathrm{set}}$ with
$D^{\mathrm{cnt}}$ still retains the set-dependent factor
$d^{\mathrm{set}}(S_{t-1},S_t)$ in each increment variance.
We also derive a computable upper bound on $D^{\mathrm{set}}$ from the
gradients at the original fit. Taking the minimum of these two upper bounds
gives $D^{\mathrm{com}}$ and can further reduce the noise variances. All results below hold on $\cH_n$.

\subsubsection{Count-based path maximum}\label{app:count-path-maximum}

We first compute the count-based benchmark $D^{\mathrm{cnt}}$.
For $0\le k<m\le M$, the count-based increment bound satisfies
\[
d^{\mathrm{cnt}}(k,m)
\le
\sum_{t=k+1}^m d^{\mathrm{cnt}}(t-1,t).
\]
Indeed,
$
(m-k)k^2
\le
\sum_{t=k+1}^m(t-1)^2,
{1/(\kappa_n+(m-k)B_n)}
\le
{1/(\kappa_n+B_n)},
$
while the leading term involving $C_{L,n}(m^2-k^2)$ is additive over the
intermediate deletion counts. Hence splitting a batch into singleton
deletions cannot decrease the total count-based increment bound.

It follows that the largest path sum is attained by the full singleton
path, and therefore
\[
\sqrt{D^{\mathrm{cnt}}}
=
\sum_{m=1}^M d^{\mathrm{cnt}}(m-1,m).
\]

\subsubsection{Computable bound for set-based RW}
\label{app:final-computable-bounds}

Direct evaluation of $D^{\mathrm{set}}$ requires maximizing the sum of
set-based increment bounds over all deletion paths. We avoid this
combinatorial optimization by using the fact that each observation can be
deleted at most once.

Let $W_M$ be the sum of the $M$ largest values among
$\{\|\nabla\ell_i(\widehat\beta)\|_2:i=1,\ldots,n\}$. Thus, for every deletion set $S$ with
$|S|\le M$,
$
\sum_{i\in S}\|\nabla\ell_i(\widehat\beta)\|_2\le W_M.
$

If $C_{L,n}=0$, then $C_{\ell,n}=0$ and all residual and increment bounds vanish, so no
noise is required. We may therefore assume $C_{L,n}>0$. Define $\alpha_n$ and $\eta_n$ by
$
\alpha_n={C_{L,n}/(2\kappa_n^3)},
\eta_n={MC_{\ell,n}/C_{L,n}+3MB_n/\kappa_n}.
$
Summing the set-based increment bound along an arbitrary deletion path and
using that each deleted observation contributes at most once gives
\[
\sqrt{D^{\mathrm{set}}}
\le
\alpha_n W_M^2(1+\eta_n).
\]
The proof is given in {\color{black}Supplementary Section~\ref{app:computable-proof}}.

Combining this bound with
$D^{\mathrm{set}}\le D^{\mathrm{cnt}}$, define
\[
D^{\mathrm{com}}
=
\min\left\{
D^{\mathrm{cnt}},
\alpha_n^2W_M^4(1+\eta_n)^2
\right\}.
\]
Then
$
D^{\mathrm{set}}
\le
D^{\mathrm{com}}
\le
D^{\mathrm{cnt}}.
$ The resulting computable RW allocation uses the realized set-based
increment bound at each release,
\[
(\sigma_t^{\mathrm{com}})^2
=
\frac{
d^{\mathrm{set}}(S_{t-1},S_t)
\sqrt{D^{\mathrm{com}}}
}{\mu^2}.
\]

Given the gradients at the original fit, $W_M$ is obtained by sorting their norms,
so $D^{\mathrm{com}}$ can be computed in $O(n\log n)$ operations.

\section{Logistic regression formulas used in the experiments}
\label{app:logistic-recipe}

\leavevmode To implement the allocations in
{\color{black}Section~\ref{sec:experiments}}, we specialize the preceding bounds to ridge
logistic regression with binary responses, $\|x_i\|_2\le1$, and $\lambda>0$.
We first locate a ball containing all exact retrained estimators, then
bound the derivatives on that ball.
{\color{black}Supplementary Section~\ref{app:implementation}} gives the full derivation and guarantees.
At the full-data fit $\widehat\beta$, compute
\[
\begin{aligned}
\pi_i&=\{1+\exp(-x_i^\top\widehat\beta)\}^{-1},
&g_i&=(\pi_i-y_i)x_i,\\
H_i&=\pi_i(1-\pi_i)x_ix_i^\top,
&H_0&=2\lambda I_p+\sum_{i=1}^n H_i.
\end{aligned}
\]
Let $\|H\|_{(j)}$ be the $j$th largest value of $\|H_i\|_2$.
Set $h_i=\pi_i(1-\pi_i)x_i^\top H_0^{-1}x_i$, and let $h_{(j)}$
be the $j$th largest $h_i$.

\begin{samepage}
The bound $W_M$ from Supplementary Section~\ref{app:computable-caps} satisfies
$\|\nabla L_A(\widehat\beta)\|_2\le W_M$ for every $|A|\le M$.
For the Hessian bound
$\nabla^2L_A(\widehat\beta)\succeq\underline\alpha I_p$, compute
\[
\underline\alpha
=\max\Bigl\{2\lambda,\lambda_{\min}(H_0)-\sum_{j=1}^M\|H\|_{(j)},
\Bigl(1-\sum_{j=1}^M h_{(j)}\Bigr)\lambda_{\min}(H_0)\Bigr\}.
\]
\end{samepage}
Omit the last candidate for $\underline\alpha$ if $\sum_{j=1}^M h_{(j)}\ge1$.
Let $\rho$ be the unique nonnegative solution of
$2\lambda\rho+(\underline\alpha-2\lambda)(1-e^{-\rho})=W_M$.
The ball of radius $\rho$ around $\widehat\beta$ then contains
$\widehat\beta_A$ for every $|A|\le M$.

On this ball, each linear predictor lies in an interval that we
use to bound the loss derivatives. For each $i$, set $a_i=x_i^\top\widehat\beta-\rho\|x_i\|_2$ and
$b_i=x_i^\top\widehat\beta+\rho\|x_i\|_2$.
Write $\pi(z)=\{1+e^{-z}\}^{-1}$,
$w(z)=\pi(z)\{1-\pi(z)\}$, and
$m_i=\max_{z\in[a_i,b_i]}|w'(z)|$.
The maximum of $w$ is attained at the point of $[a_i,b_i]$ closest to
zero. To compute $m_i$, check the interval endpoints and any of
$\pm\log(2+\sqrt3)$ inside it.
To account for feature directions in the retained Hessian, form
$C_L^{\mathrm{mat}}=\|\sum_{i=1}^n m_i\|x_i\|_2x_ix_i^\top\|_2$.
The constants used in the residual and increment bounds are
\[
\begin{aligned}
\underline\kappa&=2\lambda+e^{-\rho}(\underline\alpha-2\lambda),\\
G&=\max_i\|x_i\|_2
    \max\{|\pi(a_i)-y_i|,|\pi(b_i)-y_i|\},\\
B&=\max_i\|x_i\|_2^2\max_{z\in[a_i,b_i]}w(z),
&C_\ell&=\max_i m_i\|x_i\|_2^3,\\
C_L&=\min\{nC_\ell,\ \max[C_L^{\mathrm{mat}},\,M C_\ell\log(n+1)]\}.
\end{aligned}
\]
This choice also ensures $MC_\ell/C_L\to0$ when $M=o(n)$,
with the ratio interpreted as zero if $C_\ell=0$.
Substitute $(\underline\kappa,G,B,C_\ell,C_L)$ for
$(\kappa_n,G_n,B_n,C_{\ell,n},C_{L,n})$ in the residual and increment
bounds of Supplementary Section~\ref{app:residual-increment-bounds} and the computable
variance bound of Supplementary Section~\ref{app:computable-caps}.
These bounds and the resulting $D^{\mathrm{com}}$ determine the
noise variances used in the experiments.

\section{Derivation of the logistic regression bounds}\label{app:implementation}
For ridge logistic regression with binary responses,
$\|x_i\|_2\le1$, and $\lambda>0$, evaluating the bounds in
Supplementary Section~\ref{app:certificates} requires
$(\underline\kappa,G,B,C_\ell,C_L)$ satisfying the following inequalities
for every $|A|\le M$, every observation $i$, and every $\beta,\beta'$ in a
common ball around $\widehat\beta$ containing all retained fits:
\begin{equation}
\begin{aligned}
\nabla^2L_A(\beta)&\succeq\underline\kappa I_p,\\
\|\nabla\ell_i(\beta)\|_2&\le G,
&\|\nabla^2\ell_i(\beta)\|_2&\le B,\\
\|\nabla^2\ell_i(\beta)-\nabla^2\ell_i(\beta')\|_2
&\le C_\ell\|\beta-\beta'\|_2,\\
\|\nabla^2L_A(\beta)-\nabla^2L_A(\beta')\|_2
&\le C_L\|\beta-\beta'\|_2.
\end{aligned}
\label{eq:logistic-targets}
\end{equation}
The general theory establishes the existence of these constants but does
not give computable values for a given dataset. For ridge logistic
regression, we first find the common ball from the observed data and the
full-data fit. We then compute the five constants on that ball.

\subsection{A common ball and a Hessian lower bound}
We seek a radius $\rho$ such that
$\|\widehat\beta_A-\widehat\beta\|_2\le\rho$ for every $|A|\le M$.
At the full-data fit, deletion changes both the gradient and the Hessian
of the objective. Bounds on these changes, combined with the way the
logistic Hessian varies away from $\widehat\beta$, determine $\rho$.

\noindent\hspace*{\parindent}Let $\pi_i=\{1+\exp(-x_i^\top\widehat\beta)\}^{-1}$.
The gradients and Hessians at the original fit are $g_i=(\pi_i-y_i)x_i$ and
$H_i=\pi_i(1-\pi_i)x_ix_i^\top$, with
$H_0=2\lambda I_p+\sum_{i=1}^nH_i$.
For a deletion set $A$, the full-data first-order condition
gives $\nabla L_A(\widehat\beta)=-\sum_{i\in A}g_i$, while
$\nabla^2L_A(\widehat\beta)=H_0-\sum_{i\in A}H_i$.
For every $|A|\le M$, we have
$\|\nabla L_A(\widehat\beta)\|_2\le W_M$.
We next compute $\underline\alpha$ such that
$\nabla^2L_A(\widehat\beta)\succeq\underline\alpha I_p$
for every $|A|\le M$.

\noindent\hspace*{\parindent}To compute $\underline\alpha$, we bound
$\|\sum_{i\in A}H_i\|_2$ and
$\sup_{v\ne0}\frac{v^\top(\sum_{i\in A}H_i)v}{v^\top H_0v}$.
Let $\|H\|_{(1)}\ge\cdots\ge\|H\|_{(n)}$ be the decreasing rearrangement
of $\{\|H_i\|_2\}_{i=1}^n$. For the relative bound, the maximum
fraction of the original curvature contributed by observation $i$
over all directions is
\[
h_i=\sup_{v\ne0}\frac{v^\top H_iv}{v^\top H_0v}
=\pi_i(1-\pi_i)x_i^\top H_0^{-1}x_i.
\]
Let $h_{(1)}\ge\cdots\ge h_{(n)}$ be the decreasing rearrangement of
$\{h_i\}_{i=1}^n$.
Thus $\nabla^2L_A(\widehat\beta)\succeq\underline\alpha I_p$
for every $|A|\le M$, where
\[
\underline\alpha
=
\max\left\{
2\lambda,\ 
\lambda_{\min}(H_0)-\sum_{j=1}^M\|H\|_{(j)},\
\Bigl(1-\sum_{j=1}^M h_{(j)}\Bigr)\lambda_{\min}(H_0)
\right\},
\]
where the last candidate is omitted if $\sum_{j=1}^M h_{(j)}\ge1$.

By the triangle inequality and the definition of $W_M$ in
Supplementary Section~\ref{app:computable-caps},
$\|\nabla L_A(\widehat\beta)\|_2
\le\sum_{i\in A}\|g_i\|_2\le W_M$
for every $|A|\le M$.

For $\|h\|_2=r$, the logistic Hessian satisfies
$\nabla^2L_A(\widehat\beta+h)\succeq
\{2\lambda+e^{-r}(\underline\alpha-2\lambda)\}I_p$.
Integrating this lower bound to $\widehat\beta_A$ gives
\[
2\lambda r_A+(\underline\alpha-2\lambda)(1-e^{-r_A})
\le W_M,
\qquad r_A=\|\widehat\beta_A-\widehat\beta\|_2.
\]
The equation $2\lambda\rho+(\underline\alpha-2\lambda)(1-e^{-\rho})
=W_M$ has a unique nonnegative root. The ball of radius
$\rho$ around $\widehat\beta$ contains all retained fits. On this ball,
the same Hessian bound gives
$$\underline\kappa=2\lambda+e^{-\rho}(\underline\alpha-2\lambda).$$
\begin{proposition}
\label{prop:logistic-certificate}
Suppose $\|x_i\|_2\le1$ and $\lambda>0$. The values $\rho$
and $\underline\kappa$ defined above satisfy, for every $|A|\le M$,
\[
\|\widehat\beta_A-\widehat\beta\|_2\le\rho,
\]
and, simultaneously for every $|A|\le M$ and
$\|\beta-\widehat\beta\|_2\le\rho$,
\[
\nabla^2L_A(\beta)\succeq\underline\kappa I_p.
\]
\end{proposition}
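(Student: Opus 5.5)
The plan is to prove the two claims in the order (a) a Hessian lower bound at $\widehat\beta$, (b) a multiplicative comparison of logistic curvature between $\widehat\beta$ and nearby points, and (c) a one-dimensional integration along the segment from $\widehat\beta$ to $\widehat\beta_A$ that yields the radius. The Hessian claim on the ball will then follow from (b) applied with $r=\rho$.

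\emph{Step (a): curvature at $\widehat\beta$.} Write $\nabla^2L_A(\widehat\beta)=H_0-\sum_{i\in A}H_i=2\lambda I_p+\sum_{i\notin A}H_i$, where each $H_i\succeq0$. Each candidate for $\underline\alpha$ gives a valid lower bound:
\begin{itemize}
\item The first form gives $\nabla^2L_A(\widehat\beta)\succeq 2\lambda I_p$ directly.
\item Weyl's inequality gives $\lambda_{\min}(H_0-\sum_{i\in A}H_i)\ge\lambda_{\min}(H_0)-\sum_{i\in A}\|H_i\|_2$. Since $|A|\le M$, this is at least $\lambda_{\min}(H_0)-\sum_{j\le M}\|H\|_{(j)}$.
\item By the definition of $h_i$, $v^\top H_iv\le h_i\,v^\top H_0v$. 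Hence $v^\top\nabla^2L_A(\widehat\beta)v\ge(1-\sum_{j\le M}h_{(j)})\,v^\top H_0v$. When the coefficient is positive, this is at least $(1-\sum_{j\le M}h_{(j)})\lambda_{\min}(H_0)\|v\|_2^2$.
\end{itemize}
So $\nabla^2L_A(\widehat\beta)\succeq\underline\alpha I_p$, and in particular $\sum_{i\notin A}H_i\succeq(\underline\alpha-2\lambda)I_p$ with $\underline\alpha-2\lambda\ge0$.

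\emph{Step (b): comparison away from $\widehat\beta$.} This is the step I expect to need the most care. The logistic weight $w=\pi(1-\pi)$ satisfies $(\log w)'(z)=1-2\pi(z)\in[-1,1]$. Therefore $w(z+\delta)\ge e^{-|\delta|}w(z)$. Fix $\|h\|_2\le r$. Since $\|x_i\|_2\le1$, we have $|x_i^\top h|\le r$, and so
\[
\nabla^2L_A(\widehat\beta+h)=2\lambda I_p+\sum_{i\notin A}w(x_i^\top\widehat\beta+x_i^\top h)x_ix_i^\top\succeq 2\lambda I_p+e^{-r}\sum_{i\notin A}H_i .
\]
Combining with step (a), this is $\succeq\kappa(r)I_p$, where $\kappa(r):=2\lambda+e^{-r}(\underline\alpha-2\lambda)$. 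Because $\kappa$ is nonincreasing in $r$, the bound holds uniformly on the ball of radius $r$.

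\emph{Step (c): localization.} Fix $|A|\le M$ with $\widehat\beta_A\neq\widehat\beta$. Set $r_A=\|\widehat\beta_A-\widehat\beta\|_2$, $u=(\widehat\beta_A-\widehat\beta)/r_A$, and $\varphi(s)=u^\top\nabla L_A(\widehat\beta+su)$. The first-order condition for $\widehat\beta_A$ gives $\varphi(r_A)=0$. The full-data first-order condition gives $\nabla L_A(\widehat\beta)=-\sum_{i\in A}g_i$, so $\varphi(0)\ge-W_M$. By step (b), $\varphi'(s)=u^\top\nabla^2L_A(\widehat\beta+su)u\ge\kappa(s)$. Integrating from $0$ to $r_A$ gives
\[
W_M\ge\int_0^{r_A}\kappa(s)\,ds=2\lambda r_A+(\underline\alpha-2\lambda)(1-e^{-r_A})=:F(r_A).
\]

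To conclude, note that $F(0)=0$, $F'(r)=\kappa(r)\ge2\lambda>0$, and $F(r)\to\infty$. So $F$ is a strictly increasing bijection of $[0,\infty)$, the root $\rho$ of $F(\rho)=W_M$ is unique, and $F(r_A)\le W_M$ forces $r_A\le\rho$. Finally, applying step (b) with $r=\rho$ gives $\nabla^2L_A(\beta)\succeq\kappa(\rho)I_p=\underline\kappa I_p$ for all $\|\beta-\widehat\beta\|_2\le\rho$, uniformly in $|A|\le M$.
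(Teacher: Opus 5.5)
Your proof is correct and follows the paper's argument essentially step for step. It uses the same three candidate lower bounds for $\nabla^2L_A(\widehat\beta)$ (ridge term, Weyl's inequality, and $H_i\preceq h_iH_0$), the same weight comparison $w(z+u)\ge e^{-|u|}w(z)$ from $|(\log w)'|\le1$ and $\|x_i\|_2\le1$, and the same integration of the directional derivative along the segment to $\widehat\beta_A$ followed by strict monotonicity of $2\lambda r+(\underline\alpha-2\lambda)(1-e^{-r})$; bounding the integrand pointwise by $\kappa(s)$ before integrating is only a cosmetic rearrangement of the paper's computation.
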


\subsection{Derivative bounds on the localization ball}
\begingroup
\color{black}
The common ball now gives a fixed interval for each predictor.
For $\|\beta-\widehat\beta\|_2\le\rho$, $x_i^\top\beta\in[a_i,b_i]$,
where $a_i=x_i^\top\widehat\beta-\rho\|x_i\|_2$ and
$b_i=x_i^\top\widehat\beta+\rho\|x_i\|_2$.
Write $\pi(z)=(1+e^{-z})^{-1}$ and $w(z)=\pi(z)\{1-\pi(z)\}$.

The gradient is
$\nabla\ell_i(\beta)=\{\pi(x_i^\top\beta)-y_i\}x_i$. Since $\pi$ is
increasing, its norm is largest at an interval endpoint. Thus the
gradient inequality in \eqref{eq:logistic-targets} holds with
\[
G=\max_i\|x_i\|_2
  \max\{|\pi(a_i)-y_i|,|\pi(b_i)-y_i|\}.
\]
The Hessian is
$\nabla^2\ell_i(\beta)=w(x_i^\top\beta)x_ix_i^\top$, so its norm is
bounded by
\[
B=\max_i\|x_i\|_2^2\max_{z\in[a_i,b_i]}w(z).
\]
The function $w(z)$ is maximized at $z=0$ and decreases as
$|z|$ increases. Thus its maximum on $[a_i,b_i]$ is $w(0)=1/4$ if the
interval contains zero, and its value at the endpoint closest to zero
otherwise.

To control the change in this Hessian, set
$m_i=\max_{z\in[a_i,b_i]}|w'(z)|$.
For $\beta,\beta'$ in the ball, the mean value theorem gives

\[
\begin{aligned}
\|\nabla^2\ell_i(\beta)-\nabla^2\ell_i(\beta')\|_2
&=|w(x_i^\top\beta)-w(x_i^\top\beta')|\|x_i\|_2^2\\
&\le m_i|x_i^\top(\beta-\beta')|\|x_i\|_2^2\\
&\le m_i\|x_i\|_2^3\|\beta-\beta'\|_2.
\end{aligned}
\]

Thus we can take
\[
C_\ell=\max_i m_i\|x_i\|_2^3.
\]
To compute $m_i$, check the two endpoints and any of
$\pm\log(2+\sqrt3)$ in the interval.

For the retained objective, summing the individual Hessian
changes in matrix form gives the bound
\[
C_L^{\mathrm{mat}}
=\left\|\sum_{i=1}^n m_i\|x_i\|_2x_ix_i^\top\right\|_2.
\]
The variance comparison also requires $MC_\ell/C_L\to0$
when $M=o(n)$. We therefore use the valid, possibly larger bound
\[
C_L=\min\left\{nC_\ell,
  \max\left[C_L^{\mathrm{mat}},\{\log(n+1)\}MC_\ell\right]\right\}.
\]

\begin{proposition}
\label{prop:logistic-constants}
For binary responses, $G$, $B$, $C_\ell$, and $C_L$ above
satisfy their respective inequalities in \eqref{eq:logistic-targets}. In
particular, for every deletion set $|A|\le M$ and
every $\beta,\beta'$ in the localization ball,
\[
\|\nabla^2L_A(\beta)-\nabla^2L_A(\beta')\|_2
\le C_L^{\mathrm{mat}}\|\beta-\beta'\|_2
\le C_L\|\beta-\beta'\|_2.
\]
Moreover,
$C_\ell\le C_L^{\mathrm{mat}}\le C_L\le nC_\ell$ and
\[
\frac{MC_\ell}{C_L}
\le\max\left\{\frac Mn,\frac1{\log(n+1)}\right\}\longrightarrow0
\quad\text{if }M=o(n),
\]
where the ratio is zero when $C_\ell=0$.
\end{proposition}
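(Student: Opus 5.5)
We verify the five inequalities in \eqref{eq:logistic-targets} in the order $G$, $B$, $C_\ell$, $C_L$, and then the ratio bound. Throughout we work on the localization ball of radius $\rho$ from Proposition~\ref{prop:logistic-certificate}. For any $\beta$ in that ball, Cauchy--Schwarz gives $|x_i^\top\beta-x_i^\top\widehat\beta|\le\rho\|x_i\|_2$, so $x_i^\top\beta\in[a_i,b_i]$. Every per-observation bound then reduces to a scalar extremum over $[a_i,b_i]$.

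\textbf{Gradient.} The gradient is $\nabla\ell_i(\beta)=\{\pi(x_i^\top\beta)-y_i\}x_i$. Since $\pi$ is monotone, $z\mapsto|\pi(z)-y_i|$ is monotone on $[a_i,b_i]$ for $y_i\in\{0,1\}$. Its maximum is therefore attained at an endpoint, which gives the bound $G$.

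\textbf{Hessian.} The Hessian is $\nabla^2\ell_i(\beta)=w(x_i^\top\beta)x_ix_i^\top$, with operator norm $w(x_i^\top\beta)\|x_i\|_2^2$. Since $w$ is even and decreasing in $|z|$, this gives $B$.

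\textbf{Per-observation Lipschitz constant.} The mean value theorem bound displayed above gives $C_\ell$ directly. For computing $m_i$, I would note that $w'=w(1-2\pi)$ and $w''=w(1-6w)$. Hence the critical points of $w'$ solve $w=1/6$, i.e.\ $z=\pm\log(2+\sqrt3)$, which justifies checking those points together with the endpoints.

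\textbf{Retained-objective Lipschitz constant.} This is the step needing care: a naive sum would only give $nC_\ell$, and we want the matrix-form bound. Write $\nabla^2L_A(\beta)-\nabla^2L_A(\beta')=\sum_{i\notin A}\{w(x_i^\top\beta)-w(x_i^\top\beta')\}x_ix_i^\top$; the ridge Hessian $2\lambda I_p$ cancels. Fix a unit vector $v$ and bound
\[
\Bigl|v^\top(\cdot)v\Bigr|\le\sum_{i\notin A}m_i|x_i^\top(\beta-\beta')|(x_i^\top v)^2\le\|\beta-\beta'\|_2\,v^\top\Bigl(\sum_{i=1}^n m_i\|x_i\|_2x_ix_i^\top\Bigr)v.
\]
The second inequality uses $|x_i^\top(\beta-\beta')|\le\|x_i\|_2\|\beta-\beta'\|_2$ and, since all terms are nonnegative, adds back the indices in $A$. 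The difference matrix is symmetric, so its operator norm is the supremum over unit $v$ of the absolute quadratic form. This yields the bound $C_L^{\mathrm{mat}}\|\beta-\beta'\|_2$ uniformly in $A$.

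\textbf{Ordering of the constants.} We need $C_\ell\le C_L^{\mathrm{mat}}\le nC_\ell$.
\begin{itemize}[label={}]
\item For the upper bound, the triangle inequality gives $C_L^{\mathrm{mat}}\le\sum_i m_i\|x_i\|_2^3\le nC_\ell$.
\item For the lower bound, take $v=x_j/\|x_j\|_2$ with $j$ the maximizer of $m_j\|x_j\|_2^3$. The quadratic form is at least $m_j\|x_j\|_2^3=C_\ell$.
\end{itemize}
Then $C_L=\min\{nC_\ell,\max[C_L^{\mathrm{mat}},MC_\ell\log(n+1)]\}$ is at least $C_L^{\mathrm{mat}}$, because both arguments of the minimum dominate $C_L^{\mathrm{mat}}$. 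It is also at most $nC_\ell$ trivially.

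\textbf{Ratio bound.} Assume $C_\ell>0$. Each argument of the minimum defining $C_L$ is either $nC_\ell$ or at least $MC_\ell\log(n+1)$. Hence
\[
\frac{MC_\ell}{C_L}\le\max\Bigl\{\frac{M}{n},\frac{1}{\log(n+1)}\Bigr\},
\]
which tends to zero when $M=o(n)$. The case $C_\ell=0$ holds by convention.

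The main obstacle is the matrix-form Lipschitz bound for $L_A$, specifically routing through quadratic forms so that it holds uniformly in $A$. All other steps are scalar calculus.
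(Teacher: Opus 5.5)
Your proof is correct and takes essentially the same route as the paper. You use scalar extrema on $[a_i,b_i]$ for $G$, $B$, $C_\ell$, and a quadratic-form version of the paper's Loewner sandwich $-\|\beta-\beta'\|_2S_A\preceq\nabla^2L_A(\beta)-\nabla^2L_A(\beta')\preceq\|\beta-\beta'\|_2S_A$ with $S_A\preceq S_\varnothing$ for the uniform-in-$A$ bound. For the ratio you use $C_L\ge\min\{nC_\ell,MC_\ell\log(n+1)\}$, as the paper does, and your check that $w''=w(1-6w)$ places the critical points of $w'$ at $\pm\log(2+\sqrt3)$ justifies a step the paper states without proof.
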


\endgroup

\section{Experiment details}
\label{app:consistency-experiments}

\subsection{Credit data scaling and calibration}
\label{app:credit-details}
We first scale the credit predictors to satisfy the norm bound
of Proposition~\ref{prop:logistic-certificate}, then evaluate the conditional
utility quantities using the held-out test sample.
A fixed random permutation independent of the labels selects $24{,}000$
training records and $6000$ test records.
{\color{black}The predictors are LIMIT\_BAL, AGE, PAY\_0, BILL\_AMT1,
and PAY\_AMT1. We apply $\log u$ to LIMIT\_BAL,
$\operatorname{sign}(u)\log(1+|u|)$ to BILL\_AMT1,
and $\log(1+u)$ to PAY\_AMT1, and replace PAY\_0 by
$\min(\mathrm{PAY\_0},2)$. AGE is unchanged at this stage.
Let $z_i\in\mathbb R^5$ contain the transformed predictors.
For coordinate $j$, let $\bar z_j$ and $s_j$ be its training mean and
sample standard deviation, with denominator $n-1$.
Set $\widetilde z_{ij}=(z_{ij}-\bar z_j)/s_j$,
$a_i=(1,\widetilde z_i^\top)^\top$, and
$K=\max_{i\in\mathrm{training}}\|a_i\|_2$.
Both training and test features are $x_i=a_i/K$, using the same training
means, standard deviations, and $K$. Every training row therefore satisfies
$\|x_i\|_2\le1$. All preprocessing quantities remain fixed during deletion.
The ridge penalty $\lambda\|\beta\|_2^2$ includes the intercept, and exact
retained retraining uses this same feature map and objective.}

{\color{black}Parameter error uses each actual release. Prediction error
and GED have the conditional definitions in {\color{black}Section~\ref{sec:mechanisms}},
with the independent test distribution approximated by the empirical
distribution of the $6000$ test pairs. To integrate over the fresh release noise, write its
conditional mean and scalar variance as follows.
Let $a_t=\widetilde\beta_{S_t}$ and $s_t^2=\tau_t^2$
for IND. For RW, let $a_t=\widetilde\beta_{S_t}+Z_{t-1}$ and
$s_t^2=\sigma_t^2$, where $Z_{t-1}$ is the realized preceding noise.
For each fixed test feature, Lemma~\ref{lem:adaptive-identities} gives
\[
\widehat{\Pred}_t
=\frac1{6000}\sum_{j=1}^{6000}
\left[\{x_j^\top(a_t-\widehat\beta_{S_t})\}^2+s_t^2\|x_j\|_2^2\right].
\]
For GED, we sample test pairs uniformly with replacement and draw independent
standard normal scalars $\eta_j$. We average
$|\ell(y_j,x_j^\top a_t+s_t\|x_j\|_2\eta_j)
-\ell(y_j,x_j^\top\widehat\beta_{S_t})|$.
These evaluation draws are separate from release noise and are shared across
methods and release times. In particular, the realized preceding RW noise
is held fixed. Only its fresh increment is averaged out.}

\subsection{Three error terms}
Figures~\ref{fig:utility} and~\ref{fig:calibration-adaptive} use $n\in\{1000,\allowbreak 2000,\allowbreak 4000,\allowbreak 8000,\allowbreak 16000,\allowbreak 32000,\allowbreak 64000,\allowbreak 128000\}$ and $M_n=\lfloor\sqrt n/2\rfloor$, giving caps $15,22,31,44,63,89,126,178$.

For each simulated deletion trajectory, we evaluate the three utility
quantities used in {\color{black}Section~\ref{sec:mechanisms}}.

The parameter error at release $t$ is computed directly as
$\|Y_t-\widehat\beta_{S_t}\|_2$.

For prediction error, conditional on the realized history before release
$t$, let $a_t=\widetilde\beta_{S_t}$ and $s_t^2=\tau_t^2$ for IND, and let
$a_t=\widetilde\beta_{S_t}+Z_{t-1}$ and $s_t^2=\sigma_t^2$ for RW.
The synthetic features are uniformly distributed on the unit sphere.
Their second moment matrix is $I_p/p$, so
Lemma~\ref{lem:adaptive-identities} gives
\[
\operatorname{Pred}_t
=
\frac{\|a_t-\widehat\beta_{S_t}\|_2^2}{p}+s_t^2.\]

For GED, we approximate the conditional expectation by Monte Carlo.
We generate independent test observations $(x_j,y_j)$ from the data-generating
model and independent $\eta_j\sim N(0,1)$, and compute
\[
\widehat{\GED}_t
=
\frac1B\sum_{j=1}^B
\left|
\ell\!\left(y_j,x_j^\top a_t+s_t\eta_j\right)
-
\ell\!\left(y_j,x_j^\top\widehat\beta_{S_t}\right)
\right|.
\]

For each trajectory, we take the maximum of the parameter error,
prediction error, and GED over all release times. The figures report the
mean of these maxima over the simulation repetitions, with
standard-error bands.
\subsection{Noise variances in the IND--RW example}
Figure~\ref{fig:consistency-separation-variance} gives the maximum noise
variances for the experiment in {\color{black}Section~\ref{sec:separation-experiment}}.
The mean maximum variance rises
from $0.09014$ to $0.1978$ for IND and falls from
$0.01095$ to $0.0005564$ for RW.

\begin{figure}[!htbp]
\centering
\includegraphics[width=\linewidth]{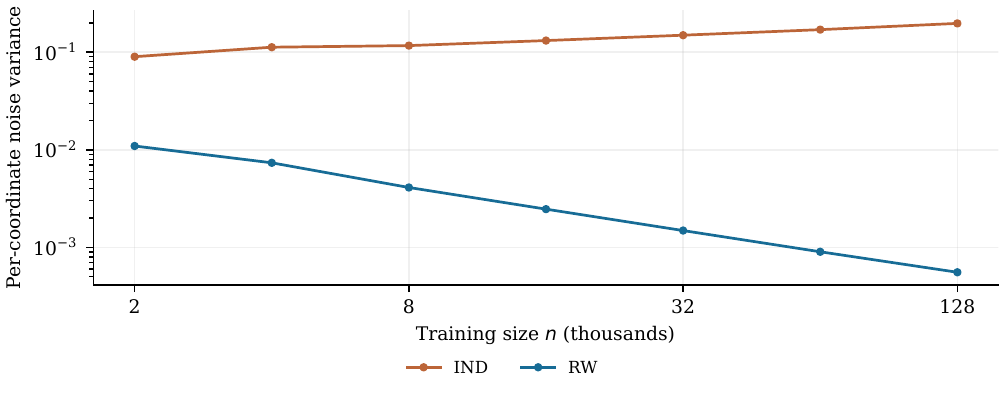}
\caption{Maximum noise variance for count-based IND and RW in the
separation experiment, with $\mu=1$. Curves show means over $100$
repetitions, and bands show one standard error.}
\label{fig:consistency-separation-variance}
\end{figure}
\subsection{Running times}\label{app:running-times}

Tables~\ref{tab:runtime-random}--\ref{tab:runtime-separation} report
median cumulative running times across $100$ repetitions for the complete release
sequences in the corresponding experiments. Times are in milliseconds.
For count-based IND, count-based RW, and computable RW, the times include
unlearning updates, computation of the bounds, variance calibration, and
Gaussian noise sampling and addition. Exact retraining is initialized
at the preceding retained fit. Its times include retraining only,
without certification calibration or added noise.

In the evaluated settings, longer release sequences give larger relative
time savings. For random deletion, the count-based methods are about
$5$ times faster than exact retraining at $M=15$ and $56$ times faster
at $M=178$. Computable RW is about $1.9$ and $38$ times faster,
respectively. In the IND--RW separation experiment, the count-based
methods are about $9$ times faster at $M=46$ and $360$ times faster
at $M=1974$.
\begin{table}[!htbp]
\color{black}
\centering
\caption{Median cumulative running times (ms) for the settings
of Figures~\ref{fig:residual-bounds}, \ref{fig:utility}, and
\ref{fig:real-credit}.}
\label{tab:runtime-random}
\small
\setlength{\tabcolsep}{3pt}
\begin{tabular}{lrrrrrr}
\toprule
Setting & $n$ & $M$ & Count IND & Count RW & Computable RW & Retraining\\
\midrule
Residual & 10000 & 32 & 2.412 & 2.417 & 4.065 & 20.489\\
Random & 1000 & 15 & 0.523 & 0.507 & 1.371 & 2.536\\
Random & 2000 & 22 & 0.797 & 0.784 & 2.014 & 4.473\\
Random & 4000 & 31 & 1.299 & 1.289 & 3.017 & 9.983\\
Random & 8000 & 44 & 2.382 & 2.371 & 4.851 & 25.056\\
Random & 16000 & 63 & 4.235 & 4.228 & 7.826 & 69.137\\
Random & 32000 & 89 & 8.306 & 8.303 & 13.858 & 209.203\\
Random & 64000 & 126 & 16.394 & 16.376 & 25.441 & 607.739\\
Random & 128000 & 178 & 33.364 & 33.343 & 49.071 & 1853.179\\
Credit & 24000 & 16 & 4.786 & 4.775 & 6.181 & 22.436\\
\bottomrule
\end{tabular}
\par\vspace{12pt}
\caption{Median cumulative running times (ms) for adaptive
deletion in Figure~\ref{fig:calibration-adaptive}.}
\label{tab:runtime-adaptive}
\small
\setlength{\tabcolsep}{4pt}
\begin{tabular}{rrrrrr}
\toprule
$n$ & $M$ & Count IND & Count RW & Computable RW & Retraining\\
\midrule
1000 & 15 & 0.517 & 0.507 & 1.346 & 2.506\\
2000 & 22 & 0.795 & 0.787 & 2.023 & 4.431\\
4000 & 31 & 1.300 & 1.284 & 2.990 & 9.820\\
8000 & 44 & 2.392 & 2.380 & 4.844 & 24.812\\
16000 & 63 & 4.250 & 4.236 & 7.891 & 68.018\\
32000 & 89 & 8.326 & 8.312 & 13.991 & 207.281\\
64000 & 126 & 16.402 & 16.390 & 25.472 & 605.442\\
128000 & 178 & 33.386 & 33.381 & 48.943 & 1837.089\\
\bottomrule
\end{tabular}
\par\vspace{12pt}
\caption{Median cumulative running times (ms) for the
IND--RW separation experiment in Figure~\ref{fig:consistency-separation}.}
\label{tab:runtime-separation}
\small
\begin{tabular}{rrrrr}
\toprule
$n$ & $M$ & Count IND & Count RW & Retraining\\
\midrule
2000 & 46 & 0.944 & 0.950 & 8.111\\
4000 & 87 & 1.781 & 1.796 & 24.252\\
8000 & 162 & 3.393 & 3.436 & 79.766\\
16000 & 303 & 6.143 & 6.229 & 273.181\\
32000 & 567 & 12.003 & 12.165 & 1149.982\\
64000 & 1058 & 23.969 & 24.178 & 4685.625\\
128000 & 1974 & 46.153 & 46.787 & 16958.835\\
\bottomrule
\end{tabular}
\end{table}

\section{Newton update computation and extensions}\label{app:newton-extensions}
We compare the computational costs of two starting points for one-step Newton updates and quantify the residual reduction from further Newton steps.

\subsection{Computational comparison}\label{app:computational-comparison}
\leavevmode To explain the computational choice in
{\color{black}Section~\ref{sec:gaussian-releases}}, we compare starting each Newton update
from $\widehat\beta$ (fixed start) with starting from the preceding internal
approximation (moving start). Both methods take one step per release and
start from $\widehat\beta$ for the first release.

Consider $M$ singleton deletions, with $M<n$. For the first
release, both updates form the retained Hessian at $\widehat\beta$ from
$n-1$ records. The fixed-start update can retain those contributions:
each later deletion subtracts one Hessian, and each release adds one
gradient of a deleted record to the cumulative gradient. The moving-start update instead evaluates and sums the
derivatives of the remaining $n-t$ records at its new starting point in
release $t\ge2$.
The resulting counts are
\begin{center}
\small
\begin{tabular}{@{}lcc@{}}
\toprule
& Fixed start & Moving start \\
\midrule
Hessian evaluations
& $n-1$ & $Mn-M(M+1)/2$ \\
Gradient evaluations
& $M$ & $(M-1)n-M(M+1)/2+2$ \\
Hessian additions
& $n+M-3$ & $Mn-M(M+1)/2-M$ \\
Gradient additions
& $M-1$ & $(M-1)n-M(M+1)/2-M+2$ \\
\bottomrule
\end{tabular}
\end{center}
For $p\times p$ Hessians, a matrix addition or subtraction costs
$O(p^2)$, while a gradient addition costs $O(p)$. For the single-index
loss in Assumption~\ref{ass:loss}, forming a record's Hessian and
gradient costs $O(p^2)$ and $O(p)$, respectively, if the scalar loss
derivatives have constant evaluation cost. Thus the Hessian computation
and summation in the moving-start update are of order $M$ times those
of the fixed-start update. This factor concerns Hessian work only. Both updates also
require $M$ linear solves, whose common cost is excluded from the table. Retaining all Hessian contributions
from the first release uses $O(np^2)$ storage. Computing a contribution
again when its record is deleted reduces this storage without changing
the order of the fixed-start work.

\subsection{Further Newton steps}\label{app:multi-newton}
\leavevmode Additional Newton steps can reduce the residual at
each release. To quantify this gain, fix a deletion set $A$ with $|A|\le M$ and
set $b_A^{(0)}=\widehat\beta$ and
$b_A^{(j+1)}=b_A^{(j)}-\{\nabla^2L_A(b_A^{(j)})\}^{-1}
\nabla L_A(b_A^{(j)})$ for $j\ge0$.
Thus $b_A^{(1)}=\widetilde\beta_A$ and, on $\cH_n$,
$\|b_A^{(0)}-\widehat\beta_A\|_2\le\rho_A$.
If the ball of radius $\rho_A$ around $\widehat\beta_A$ lies in $\cN$
and $C_{L,n}\rho_A/(2\kappa_n)<1$, then every iterate stays in the ball and
\[
\|b_A^{(K)}-\widehat\beta_A\|_2
\le
\left(\frac{C_{L,n}}{2\kappa_n}\right)^{2^K-1}
\rho_A^{2^K},\qquad K\ge0.
\]

Indeed, for $e_j=b_A^{(j)}-\widehat\beta_A$, Taylor's integral formula
and the bounds on curvature and its Lipschitz constant give
$\|e_{j+1}\|_2\le C_{L,n}\|e_j\|_2^2/(2\kappa_n)$ on this ball.
Induction proves the bound and keeps the iterates in the ball.
Since $\rho_A\le |A|G_n/\kappa_n$, the residual bound has order
$O\{(|A|/n)^{2^K}\polylog(n)\}$ for each fixed $K$ and tends to zero
as $K\to\infty$ for fixed data under the stated condition.
If $C_{L,n}=0$, one step is exact.

We focus on one Newton step per release because it already suffices
for the certification and consistency results established in this article.
Further steps can reduce the approximation error but require additional
derivative evaluations and linear solves.

\section{Technical proofs}\label{app:proofs}
This section provides proofs of the theoretical results in the main article
and this supplement.

\subsection{Proof of Theorem~\ref{thm:bounds}}\label{app:bounds-proof}
\begin{proof}
We first establish the common event in Lemma~\ref{lem:uniform-event}.
On this event, Taylor remainders give the residual bounds, and
subtracting the remainder equations gives the increment bounds.
Finally, we verify \eqref{eq:bound-relations}, which is used in the later
path comparisons.

\par\medskip\noindent\textbf{Uniform event and localization.}
To use the same Taylor bounds for all deletion sets, we need uniform
derivative bounds on $\cN$ and all retained estimators to lie in $\cN$.

\textit{Derivative bounds.}
The growth assumption controls the first and third predictor
derivatives. The second derivative has the same growth bound: Taylor's
theorem applied to $\partial_z\ell(y,\cdot)$ gives
\[
|\partial_z^2\ell(y,z)|
\le |\partial_z\ell(y,z+1)|+|\partial_z\ell(y,z)|
+\frac12\sup_{u\in[0,1]}|\partial_z^3\ell(y,z+u)|.
\]
Since $p$ is fixed, sub-Gaussian tails and the response tail condition
give a deterministic $u_n=O(\sqrt{\log n})$ such that the event
\[
\max_{i\le n}\|x_i\|_2\le u_n,
\qquad \max_{i\le n}|y_i|\le B_y(n)
\]
has probability $1-o(1)$ by a union bound.
On the event that both maxima are below these thresholds, compactness
of $\cN$ and the chain rule therefore give deterministic bounds
$G_n,B_n,C_{\ell,n}=O(\polylog(n))$ for the individual gradients,
Hessians, and Hessian Lipschitz constants, respectively.
Set
$C_{L,n}=nC_{\ell,n}+\lambda C_{\mathrm{reg}}(n)$.
This bounds the Hessian Lipschitz constant of every retained objective,
and gives $C_{L,n}=O(n\polylog(n))$, $C_{L,n}\ge C_{\ell,n}$, and
$MC_{\ell,n}/C_{L,n}\le M/n\to0$, with $0/0$ interpreted as zero.

\textit{Curvature and localization.}
We next prove that, on a common event of probability $1-o(1)$,
for every $|A|\le M$,
\[
\nabla^2L_A(\beta)\succeq\kappa_n I_p
\quad\text{for every }\beta\in\cN,
\qquad\text{and}\qquad
\widehat\beta_A\in\cN.
\]

\textit{1. Population curvature.}
Assumption~\ref{ass:identification} gives
\[
\E\nabla^2\ell_i(\beta)\succeq c_0I_p
\qquad\text{for every }\beta\in\cN.
\]

\textit{2. Full-data curvature.}
To transfer this lower bound to the sample, we first prove
\[
\sup_{\beta\in\cN}
\left\|\frac1n\sum_{i=1}^n\nabla^2\ell_i(\beta)
-\E\nabla^2\ell_i(\beta)\right\|_2\xrightarrow{p}0.
\]
Whenever the supremum is at most $c_0/4$, convexity of the regularizer
and the population lower bound give, for every $\beta\in\cN$,
\[
\begin{aligned}
\nabla^2L_\varnothing(\beta)
&=\sum_{i=1}^n\nabla^2\ell_i(\beta)
+\lambda\nabla^2\mathcal P(\beta)\\
&\succeq n\left(\E\nabla^2\ell_i(\beta)-\frac{c_0}{4}I_p\right)
\succeq\frac{c_0n}{2}I_p.
\end{aligned}
\]

To justify the uniform convergence, we use a finite net.
To control the supremum, we also bound the Hessian variation between
nearby points. By Assumption~\ref{ass:loss}, the individual gradient and Hessian
norms and Hessian Lipschitz constants on $\cN$ are bounded by
\[
K_i=C(1+|y_i|^s+\|x_i\|_2^s)(1+\|x_i\|_2^3).
\]
In particular,
$\|\nabla^2\ell_i(\beta)-\nabla^2\ell_i(\beta')\|_2
\le K_i\|\beta-\beta'\|_2$ for $\beta,\beta'\in\cN$.
The term involving both the response and the features satisfies
\[
\E\!\left[|y_i|^{4s/3}(1+\|x_i\|_2^3)^{4/3}\right]
\le(\E|y_i|^{2s})^{2/3}
\{\E(1+\|x_i\|_2^3)^4\}^{1/3}.
\]
The response moment assumption and sub-Gaussian feature moments therefore
give $\sup_n\E K_i^{4/3}<\infty$. The remaining terms involve only
feature moments.

Fix $\eta>0$ and take a finite $\eta$-net $\mathcal N_\eta$ of $\cN$.
The uniform $4/3$ moment bound gives the weak law at each net point.
To extend it to all of $\cN$, the Hessian variation bound gives
\[
\begin{aligned}
&\sup_{\beta\in\cN}
\left\|\frac1n\sum_{i=1}^n\nabla^2\ell_i(\beta)
-\E\nabla^2\ell_i(\beta)\right\|_2\\
&\quad\le
\max_{\beta'\in\mathcal N_\eta}
\left\|\frac1n\sum_{i=1}^n\nabla^2\ell_i(\beta')
-\E\nabla^2\ell_i(\beta')\right\|_2
+\eta\left(\frac1n\sum_{i=1}^nK_i+\E K_i\right).
\end{aligned}
\]
For fixed $\eta$, the maximum over the net converges to zero in
probability, while $n^{-1}\sum_iK_i=O_p(1)$ by Markov's inequality.
Letting $n\to\infty$ and then $\eta\downarrow0$ proves the claimed
uniform convergence.

\textit{3. Retained curvature.}
On the event that both observation maxima satisfy their bounds,
$\sum_{i\in A}\nabla^2\ell_i(\beta)\preceq MB_nI_p$ and $MB_n=o(n)$.
Thus, whenever the full-data lower bound holds, for all sufficiently
large $n$, every $|A|\le M$, and every $\beta\in\cN$,
\[
\nabla^2L_A(\beta)
\succeq(c_0n/2-MB_n)I_p\succeq\kappa_nI_p,
\qquad \kappa_n:=c_0n/4.
\]

To prove that every retained estimator lies in $\cN$, we compare
$L_A(\beta_0)$ with its values on the boundary of a ball contained in $\cN$.
Choose a fixed $\rho_0>0$ with
$\overline B(\beta_0,\rho_0)\subset\operatorname{int}(\cN)$.
The bound $K_i$ also justifies differentiation under the expectation and
the weak law for the gradient. Since $\beta_0$ is an interior minimizer,
$\E\nabla\ell_i(\beta_0)=\nabla\mathcal R(\beta_0)=0$. Hence
\[
\frac1n\nabla L_\varnothing(\beta_0)
=\frac1n\sum_{i=1}^n\nabla\ell_i(\beta_0)
+\frac{\lambda}{n}\nabla\mathcal P(\beta_0)
\xrightarrow{p}0.
\]
Let $\cH_n$ be the event on which the two bounds on the observation
maxima above and the following inequalities hold:
\[
\begin{aligned}
\sup_{\beta\in\cN}
\left\|\frac1n\sum_{i=1}^n\nabla^2\ell_i(\beta)
-\E\nabla^2\ell_i(\beta)\right\|_2&\le\frac{c_0}{4},\\
\left\|\frac1n\nabla L_\varnothing(\beta_0)\right\|_2
&\le\frac{c_0\rho_0}{32}.
\end{aligned}
\]
Then $\delta_n:=\Pp(\cH_n^c)=o(1)$, and the Hessian bounds above hold on
$\cH_n$. Since $MG_n=o(n)$, on this event, for all sufficiently large $n$
and every $|A|\le M$,
\[
\|\nabla L_A(\beta_0)\|_2
\le\|\nabla L_\varnothing(\beta_0)\|_2+MG_n
\le c_0n\rho_0/16.
\]
For $\|h\|_2=\rho_0$, Taylor's theorem therefore gives
\[
L_A(\beta_0+h)-L_A(\beta_0)
\ge-\frac{c_0n\rho_0^2}{16}
+\frac{\kappa_n\rho_0^2}{2}>0.
\]
If a retained minimizer were on or outside this ball, the segment from
$\beta_0$ to that minimizer would meet its boundary at a point with
objective value at most $L_A(\beta_0)$ by convexity, contradicting the
strict boundary inequality. Thus
$\widehat\beta_A\in B(\beta_0,\rho_0)\subset\cN$ for every $|A|\le M$,
including $A=\varnothing$.
This proves all conclusions of
Lemma~\ref{lem:uniform-event} on one event for all deletion sets.

\par\medskip\noindent\textbf{The four residual and increment bounds.}
Fix $D\in\cH_n$ from Lemma~\ref{lem:uniform-event}. If
$C_{L,n}=0$, all remainders and bounds vanish. Otherwise the following
argument applies.

\textit{Residual bounds.}
To bound $e_S$, write the retained first-order equation as a Taylor
remainder at $\widehat\beta$.
Let $v_S=\widehat\beta_S-\widehat\beta$.
The segment from $\widehat\beta$ to $\widehat\beta_S$ lies in $\cN$
by convexity.
Since $\nabla L_S(\widehat\beta)=-g_S$ and
$\nabla L_S(\widehat\beta_S)=0$, the Hessian lower bound gives
\[
\begin{aligned}
\kappa_n\|v_S\|_2^2
&\le\int_0^1 v_S^\top\nabla^2L_S(\widehat\beta+tv_S)v_S\,dt\\
&=v_S^\top\{\nabla L_S(\widehat\beta+v_S)
-\nabla L_S(\widehat\beta)\}
=v_S^\top g_S
\le\|v_S\|_2\|g_S\|_2.
\end{aligned}
\]
Thus
\begin{equation}
\|v_S\|_2\le\frac{\|g_S\|_2}{\kappa_n}=\rho_S.
\label{eq:fit-displacement}
\end{equation}
Define the gradient remainder
$F_S(v)=\nabla L_S(\widehat\beta+v)-\nabla L_S(\widehat\beta)-H_Sv$.
Taylor's integral formula and the Hessian Lipschitz bound give
\[
F_S(v)
=\int_0^1\{\nabla^2L_S(\widehat\beta+uv)-H_S\}v\,du,
\]
and
\begin{equation}
\|F_S(v_S)\|_2
\le C_{L,n}\|v_S\|_2^2\int_0^1u\,du
\le\frac{C_{L,n}}2\rho_S^2.
\label{eq:gradient-remainder-bound}
\end{equation}
The retained first-order equation is
$0=-g_S+H_Sv_S+F_S(v_S)$.
Since $\widetilde\beta_S=\widehat\beta+H_S^{-1}g_S$,
$e_S=H_S^{-1}g_S-v_S=H_S^{-1}F_S(v_S)$.
Thus
\[
\|e_S\|_2\le r^{\mathrm{set}}(S)
:=\frac{C_{L,n}}2\|H_S^{-1}\|_2\rho_S^2.
\]

Since $\rho_S\le |S|G_n/\kappa_n$ and $\|H_S^{-1}\|_2\le1/\kappa_n$,
\[
\|e_S\|_2\le r^{\mathrm{set}}(S)
\le\frac{C_{L,n}}2\frac1{\kappa_n}
\left(\frac{|S|G_n}{\kappa_n}\right)^2
=r^{\mathrm{cnt}}(|S|)
:=\frac{C_{L,n}G_n^2|S|^2}{2\kappa_n^3}.
\]

\textit{Set-based increments.}
Fix $A\subsetneq B$ and $q=|B\setminus A|$.
To bound $\|e_B-e_A\|_2$, we use the difference of the two
remainder equations and the residual bounds.
We first bound $\|v_B-v_A\|_2$ to control the difference between the
remainders at the two retained fits.
Since $\nabla L_B(\widehat\beta_B)=0$ and
$\nabla L_B(\widehat\beta_A)
=-\sum_{i\in B\setminus A}\nabla\ell_i(\widehat\beta_A)$,
the Hessian lower bound gives
\[
\begin{aligned}
\kappa_n\|v_B-v_A\|_2^2
&\le\int_0^1(v_B-v_A)^\top
\nabla^2L_B\bigl(\widehat\beta_A+t(v_B-v_A)\bigr)
(v_B-v_A)\,dt\\
&=(v_B-v_A)^\top
\{\nabla L_B(\widehat\beta_B)-\nabla L_B(\widehat\beta_A)\}\\
&=(v_B-v_A)^\top
\sum_{i\in B\setminus A}\nabla\ell_i(\widehat\beta_A)\\
&\le\|v_B-v_A\|_2
\left\|\sum_{i\in B\setminus A}\nabla\ell_i(\widehat\beta_A)\right\|_2.
\end{aligned}
\]
The individual gradient bounds give
$\|\sum_{i\in B\setminus A}\nabla\ell_i(\widehat\beta_A)\|_2\le qG_n$.
Expanding each gradient at
$\widehat\beta$ also gives
\[
\sum_{i\in B\setminus A}\nabla\ell_i(\widehat\beta_A)
=g_B-g_A+
\int_0^1\sum_{i\in B\setminus A}
\nabla^2\ell_i(\widehat\beta+uv_A)v_A\,du
\]
whose norm is at most $\|g_B-g_A\|_2+qB_n\rho_A$.
Taking the smaller of these two bounds yields
\[
\|v_B-v_A\|_2\le\delta_{A,B}
:=\frac1{\kappa_n}
\min\{qG_n,\|g_B-g_A\|_2+qB_n\rho_A\}.
\]

Subtracting $H_Ae_A=F_A(v_A)$ from $H_Be_B=F_B(v_B)$ and
adding and subtracting $F_B(v_A)$ gives
\begin{equation}
H_B(e_B-e_A)
=F_B(v_B)-F_B(v_A)
-\{F_A(v_A)-F_B(v_A)\}+(H_A-H_B)e_A.
\label{eq:error-increment-decomposition}
\end{equation}
We bound the three terms on the right separately.

To bound $\|F_B(v_B)-F_B(v_A)\|_2$, integrate
$\nabla F_B(v)=\nabla^2L_B(\widehat\beta+v)-H_B$ along
$v=(1-u)v_A+uv_B$, $0\le u\le1$, to obtain
\[
F_B(v_B)-F_B(v_A)
=\int_0^1\nabla F_B\bigl((1-u)v_A+uv_B\bigr)(v_B-v_A)\,du.
\]
On this segment, $\|v\|_2\le(1-u)\rho_A+u\rho_B$
by \eqref{eq:fit-displacement}.
Since $H_B=\nabla^2L_B(\widehat\beta)$, the Hessian Lipschitz bound gives
\[
\begin{aligned}
\|\nabla F_B(v)\|_2
&=\|\nabla^2L_B(\widehat\beta+v)-\nabla^2L_B(\widehat\beta)\|_2\\
&\le C_{L,n}\|(\widehat\beta+v)-\widehat\beta\|_2
=C_{L,n}\|v\|_2.
\end{aligned}
\]
Thus
\[
\begin{aligned}
\|F_B(v_B)-F_B(v_A)\|_2
&\le\|v_B-v_A\|_2
\int_0^1\|\nabla F_B((1-u)v_A+uv_B)\|_2\,du\\
&\le C_{L,n}\|v_B-v_A\|_2
\int_0^1\{(1-u)\rho_A+u\rho_B\}\,du\\
&\le\frac{C_{L,n}}2(\rho_A+\rho_B)\delta_{A,B}.
\end{aligned}
\]

To bound $\|F_A(v_A)-F_B(v_A)\|_2$, use
$L_A-L_B=\sum_{i\in B\setminus A}\ell_i$ to obtain
\[
\begin{aligned}
F_A(v_A)-F_B(v_A)
&=\sum_{i\in B\setminus A}\int_0^1
\{\nabla^2\ell_i(\widehat\beta+uv_A)
-\nabla^2\ell_i(\widehat\beta)\}v_A\,du,\\
\|F_A(v_A)-F_B(v_A)\|_2
&\le\frac{qC_{\ell,n}}2\rho_A^2.
\end{aligned}
\]

For the third term, $e_A=H_A^{-1}F_A(v_A)$ and
\eqref{eq:gradient-remainder-bound} with $S=A$ give
\[
\begin{aligned}
\|H_B^{-1}(H_A-H_B)e_A\|_2
&\le\|H_B^{-1}(H_A-H_B)H_A^{-1}\|_2\|F_A(v_A)\|_2\\
&\le\|H_B^{-1}(H_A-H_B)H_A^{-1}\|_2
\frac{C_{L,n}}2\rho_A^2.
\end{aligned}
\]

Multiplying \eqref{eq:error-increment-decomposition} by $H_B^{-1}$ and applying the
triangle inequality yields $\|e_B-e_A\|_2\le U_{A,B}$, where
\[
U_{A,B}:=\|H_B^{-1}\|_2\frac{C_{L,n}}2(\rho_A+\rho_B)\delta_{A,B}
+\|H_B^{-1}\|_2\frac{qC_{\ell,n}}2\rho_A^2
+\|H_B^{-1}(H_A-H_B)H_A^{-1}\|_2\frac{C_{L,n}}2\rho_A^2.
\]
The residual bounds also give
\[
\begin{aligned}
\|e_B-e_A\|_2
&\le\|e_A\|_2+\|e_B\|_2\\
&\le\frac{C_{L,n}}2
\left(\|H_A^{-1}\|_2\rho_A^2+\|H_B^{-1}\|_2\rho_B^2\right)
=r^{\mathrm{set}}(A)+r^{\mathrm{set}}(B).
\end{aligned}
\]
Taking the smaller of these two bounds gives
\[
\|e_B-e_A\|_2
\le d^{\mathrm{set}}(A,B)
:=\min\left\{U_{A,B},\frac{C_{L,n}}2
\left(\|H_A^{-1}\|_2\rho_A^2+\|H_B^{-1}\|_2\rho_B^2\right)\right\}.
\]

\textit{Count-based increments.}
To replace the set-dependent quantities in $U_{A,B}$ by deletion counts,
we first bound $\|H_B^{-1}-H_A^{-1}\|_2$.
Write $k=|A|$, $m=|B|$, and $q=m-k$. Convexity gives
$H_B\preceq H_A\preceq H_B+qB_nI_p$. Monotonicity of the matrix inverse
and $H_B\succeq\kappa_nI_p$ imply
\[
\begin{aligned}
0\preceq H_B^{-1}-H_A^{-1}
&\preceq H_B^{-1}-(H_B+qB_nI_p)^{-1},\\
\|H_B^{-1}(H_A-H_B)H_A^{-1}\|_2
&=\|H_B^{-1}-H_A^{-1}\|_2
\le\frac{qB_n}{\kappa_n(\kappa_n+qB_n)}.
\end{aligned}
\]
For the last inequality, diagonalizing $H_B$ gives eigenvalues
$qB_n/\{t(t+qB_n)\}$ for
$H_B^{-1}-(H_B+qB_nI_p)^{-1}$, with $t\ge\kappa_n$.
This expression decreases with $t$, so its maximum is at most the
displayed bound.
Substitute this bound, $\|H_B^{-1}\|_2\le1/\kappa_n$,
$\rho_A\le kG_n/\kappa_n$, $\rho_B\le mG_n/\kappa_n$, and
$\delta_{A,B}\le qG_n/\kappa_n$ into $U_{A,B}$ to obtain
\[
\begin{aligned}
\|e_B-e_A\|_2
&\le d^{\mathrm{set}}(A,B)\le U_{A,B}\\
&\le\frac{C_{L,n}G_n^2q(k+m)}{2\kappa_n^3}
+\frac{qC_{\ell,n}k^2G_n^2}{2\kappa_n^3}
+\frac{qB_nC_{L,n}k^2G_n^2}{2\kappa_n^3(\kappa_n+qB_n)}\\
&=d^{\mathrm{cnt}}(k,m),
\end{aligned}
\]
where
\[
d^{\mathrm{cnt}}(k,m):=\frac{G_n^2(m-k)}{2\kappa_n^3}
\left[C_{L,n}(k+m)+k^2\left(C_{\ell,n}
+\frac{B_nC_{L,n}}{\kappa_n+(m-k)B_n}\right)\right].
\]
Since $k^2\le M(k+m)$,
\[
\frac{k^2}{k+m}\left(\frac{C_{\ell,n}}{C_{L,n}}
+\frac{B_n}{\kappa_n+(m-k)B_n}\right)
\le\frac{MC_{\ell,n}}{C_{L,n}}+\frac{MB_n}{\kappa_n}=o(1)
\]
by Lemma~\ref{lem:uniform-event} and the cap condition. Hence, uniformly
over $0\le k<m\le M$,
\[
d^{\mathrm{cnt}}(k,m)
=\frac{C_{L,n}G_n^2(m-k)(k+m)}{2\kappa_n^3}\{1+o(1)\}
=O\!\left(\frac{(m-k)(k+m)\polylog(n)}{n^2}\right).
\]
The same constant bounds give
$r^{\mathrm{cnt}}(m)=O(m^2\polylog(n)/n^2)$.

\par\medskip\noindent\textbf{Relations between residual and increment bounds.}
For $A=\varnothing$, $\rho_A=0$ and $\delta_{A,B}=\rho_B$, giving
$d^{\mathrm{set}}(\varnothing,B)=r^{\mathrm{set}}(B)$.
To prove the second relation in \eqref{eq:bound-relations}, we bound
$|\rho_B^2-\rho_A^2|$ and $\|H_B^{-1}-H_A^{-1}\|_2$.
Since $\norm{g_B-g_A}\le qG_n$, the definition of $\delta_{A,B}$ gives
$|\rho_B-\rho_A|\le\norm{g_B-g_A}/\kappa_n\le \delta_{A,B}$.
Hence $|\rho_B^2-\rho_A^2|=(\rho_A+\rho_B)|\rho_B-\rho_A|
\le(\rho_A+\rho_B)\delta_{A,B}$.
Since $H_B^{-1}-H_A^{-1}=H_B^{-1}(H_A-H_B)H_A^{-1}$,
\begin{align*}
|r^{\mathrm{set}}(B)-r^{\mathrm{set}}(A)|
&\le\frac{C_{L,n}}2\left[
\|H_B^{-1}\|_2|\rho_B^2-\rho_A^2|
+\rho_A^2\big|\|H_B^{-1}\|_2-\|H_A^{-1}\|_2\big|\right]\\
&\le\frac{C_{L,n}}2\left[
\|H_B^{-1}\|_2(\rho_A+\rho_B)\delta_{A,B}
+\rho_A^2\|H_B^{-1}(H_A-H_B)H_A^{-1}\|_2\right]
\\&\le U_{A,B}.
\end{align*}
Also $|r^{\mathrm{set}}(B)-r^{\mathrm{set}}(A)|\le r^{\mathrm{set}}(A)+r^{\mathrm{set}}(B)$, proving the claimed inequality.
\end{proof}

\subsection{Proof of Proposition~\ref{prop:allocation-minimax}}
\begin{proof}
Fix $D$, $M$, and $a\in\{\mathrm{cnt},\mathrm{set}\}$. Use the corresponding
bounds $r_t,d_t$ and path maxima $R^a,D^a$ from the main text.
For each mechanism, we check that the proposed allocation attains its
claimed value, then bound every feasible allocation from below on each
path. Maximizing these lower bounds over paths proves optimality.
If the relevant path maximum is zero, all corresponding bounds vanish.
Zero budgets and zero noise attain value zero. We therefore assume the
relevant path maximum is positive in the calculations below.

\textit{IND.} For the upper bound, the proposed budgets satisfy
$\sum_{t=1}^T\mu_t^2 ={(\mu^2/R^a)}\sum_{t=1}^T r_t^2 \le\mu^2$
on every $P\in\mathcal C$.
The corresponding variance is $\tau_t^2=R^a/\mu^2$, so the largest
variance on every path is $R^a/\mu^2$.
For the lower bound, consider any allocation satisfying the IND constraint in
\eqref{eq:certificate-budgets}. On each path,
\[
\sum_{t=1}^T r_t^2
\le\left(\max_{1\le t\le T}\tau_t^2\right)
\sum_{t=1}^T\frac{r_t^2}{\tau_t^2}
\le\mu^2\max_{1\le t\le T}\tau_t^2.
\]
Taking the supremum over $P\in\mathcal C$ gives the lower bound
$R^a/\mu^2$, which the proposed budgets attain.

\textit{RW.} For the upper bound, the proposed budgets satisfy
$\sum_{t=1}^T\mu_t^2 ={(\mu^2/\sqrt{D^a})}\sum_{t=1}^T d_t \le\mu^2$
on every $P\in\mathcal C$.
They give $\sigma_t^2=d_t\sqrt{D^a}/\mu^2$. Since the increment
variances are nonnegative,
\[
\max_{1\le t\le T}\sum_{s=1}^t\sigma_s^2
=\sum_{t=1}^T\sigma_t^2
=\frac{\sqrt{D^a}}{\mu^2}\sum_{t=1}^T d_t
\le\frac{D^a}{\mu^2},
\]
with equality on a path maximizing $\sum_t d_t$.
For the lower bound, consider any allocation satisfying the RW constraint in
\eqref{eq:certificate-budgets}. Cauchy--Schwarz gives
\[
\left(\sum_{t=1}^T d_t\right)^2
\le\left(\sum_{t=1}^T\frac{d_t^2}{\sigma_t^2}\right)
\left(\sum_{t=1}^T\sigma_t^2\right)
\le\mu^2\sum_{t=1}^T\sigma_t^2.
\]
Thus its worst-case accumulated variance is at least $D^a/\mu^2$,
which the proposed budgets attain. Both arguments for the lower bounds hold for
every realized history, so they also apply to the stated
$\mathcal F_t$-measurable allocations. This proves the proposition for
both choices of bounds. Terms with zero bounds contribute zero cost.
\end{proof}

The count-based residual maximum has the explicit form
$R^{\mathrm{cnt}}=\sum_{m=1}^M r^{\mathrm{cnt}}(m)^2$:
the positive counts on a path are distinct, and the full singleton path
visits all of them. Supplementary Section~\ref{app:computable-caps} shows that splitting batches into
single-record deletions cannot decrease the sum of count-based increment bounds.
Thus the full singleton path also attains the count-based increment maximum.

\subsection{Proof of Theorem~\ref{thm:transcript}}\label{app:allocation-proof}
\begin{proof}
The result follows directly from Theorem~\ref{thm:bounds},
Proposition~\ref{prop:allocation-minimax}, and adaptive GDP composition.
\end{proof}

\subsection{Proof of Corollary~\ref{cor:computable-calibration}}\label{app:computable-proof}
\begin{proof}
Fix $D\in\cH_n$. If $C_{L,n}=0$, all increment bounds vanish and no
noise is needed. We therefore assume $C_{L,n}>0$. We use
$\|g_A\|_2\le\sum_{i\in A}\|\nabla\ell_i(\widehat\beta)\|_2$
to bound $\sum_{t=1}^{T}d^{\mathrm{set}}(S_{t-1},S_t)$
along every allowed deletion path. The resulting
leading term telescopes, while the remaining terms sum over at most $M$
deleted records.
Recall
$\alpha_n=C_{L,n}/(2\kappa_n^3)$, $\eta_n=MC_{\ell,n}/C_{L,n}+3MB_n/\kappa_n$,
and $W_M$ from Supplementary Section~\ref{app:computable-caps}.
For a deletion set $A$, put $w_A=\sum_{i\in A}\|\nabla\ell_i(\widehat\beta)\|_2$.
For an edge $A\subset B$, let $q=|B\setminus A|$.
The bounds $\|H_A^{-1}\|_2,\|H_B^{-1}\|_2\le1/\kappa_n$ and
$\|H_A-H_B\|_2\le qB_n$, together with
$\delta_{A,B}\le(\|g_B-g_A\|_2+qB_n\rho_A)/\kappa_n$, give
\[
\begin{aligned}
d^{\mathrm{set}}(A,B)\le\alpha_n\biggl[&
(\norm{g_A}+\norm{g_B})(w_B-w_A)\\
&+\frac{qB_n}{\kappa_n}(\norm{g_A}+\norm{g_B})\norm{g_A}
+q\left(\frac{C_{\ell,n}}{C_{L,n}}+\frac{B_n}{\kappa_n}\right)\norm{g_A}^2\biggr].
\end{aligned}
\]
Since $\norm{g_A}\le w_A\le W_M$, the first term is at most
$(w_A+w_B)(w_B-w_A)=w_B^2-w_A^2$.
Since $\norm{g_A},\norm{g_B}\le W_M$, the other two terms satisfy
\[
\begin{aligned}
\frac{qB_n}{\kappa_n}(\norm{g_A}+\norm{g_B})\norm{g_A}
&\le\frac{2qB_n}{\kappa_n}W_M^2,\\
q\left(\frac{C_{\ell,n}}{C_{L,n}}+\frac{B_n}{\kappa_n}\right)\norm{g_A}^2
&\le q\left(\frac{C_{\ell,n}}{C_{L,n}}+\frac{B_n}{\kappa_n}\right)W_M^2.
\end{aligned}
\]
Their sum is at most
$qW_M^2\{C_{\ell,n}/C_{L,n}+3B_n/\kappa_n\}$.
Along a path $P=(S_0,\ldots,S_T)$, the first bound telescopes because
$\sum_{(A,B)\in P}(w_B^2-w_A^2)=w_{S_T}^2$ and $w_{S_0}=0$.
Also $\sum_{(A,B)\in P}q=|S_T|\le M$. Hence
\[
\sum_{(A,B)\in P}d^{\mathrm{set}}(A,B)
\le\alpha_n\left[w_{S_T}^2+
MW_M^2\left(\frac{C_{\ell,n}}{C_{L,n}}+\frac{3B_n}{\kappa_n}\right)\right]
\le\alpha_n W_M^2(1+\eta_n).
\]
Taking the maximum over all allowed paths gives
\begin{equation}
\sqrt{D^{\mathrm{set}}}
=\max_{P\in\mathcal C}\sum_{(A,B)\in P}d^{\mathrm{set}}(A,B)
\le\alpha_n W_M^2(1+\eta_n).
\label{eq:set-path-bound}
\end{equation}
The transition comparison $d^{\mathrm{set}}(A,B)\le
d^{\mathrm{cnt}}(|A|,|B|)$ also gives
$D^{\mathrm{set}}\le D^{\mathrm{cnt}}$.
Taking the minimum of these two path bounds proves
\[
D^{\mathrm{set}}\le D^{\mathrm{com}}\le D^{\mathrm{cnt}},
\qquad
\sqrt{D^{\mathrm{com}}}\le\alpha_n W_M^2(1+\eta_n).
\]
If $D^{\mathrm{com}}=0$, all set-based increment bounds vanish on every
path, so no noise is needed for transcript certification. We therefore
assume $D^{\mathrm{com}}>0$. The proposed RW variances
satisfy, on every allowed path,
\[
\sum_{t=1}^T
\frac{d^{\mathrm{set}}(S_{t-1},S_t)^2}{(\sigma_t^{\mathrm{com}})^2}
=\frac{\mu^2}{\sqrt{D^{\mathrm{com}}}}
\sum_{t=1}^T d^{\mathrm{set}}(S_{t-1},S_t)
\le\mu^2\sqrt{\frac{D^{\mathrm{set}}}{D^{\mathrm{com}}}}
\le\mu^2.
\]
When $d^{\mathrm{set}}(S_{t-1},S_t)=0$, the corresponding summand
in the GDP budget is defined to be zero.
Adaptive GDP composition, as used in the proof of Theorem~\ref{thm:transcript},
gives the same $(\delta_n,\mu)$ transcript guarantee.

For the same deletions, \eqref{eq:cert} and
$D^{\mathrm{com}}\le D^{\mathrm{cnt}}$ give
\[
(\sigma_t^{\mathrm{com}})^2
=\frac{d^{\mathrm{set}}(S_{t-1},S_t)\sqrt{D^{\mathrm{com}}}}{\mu^2}
\le\frac{d^{\mathrm{cnt}}(m_{t-1},m_t)\sqrt{D^{\mathrm{cnt}}}}{\mu^2}
=(\sigma_t^{\mathrm{cnt}})^2.
\]
Consequently, for every $u\le T$,
$\sum_{t=1}^u(\sigma_t^{\mathrm{com}})^2
\le\sum_{t=1}^u(\sigma_t^{\mathrm{cnt}})^2$.
\end{proof}

\subsection{Proof of Theorem~\ref{thm:variance}}\label{app:ratio-proof}
\label{app:count-variance-proof}\label{app:set-variance-proof}
\begin{proof}
We first evaluate $D^{\mathrm{cnt}}/R^{\mathrm{cnt}}$ from the explicit
path maxima. We then bound $D^{\mathrm{set}}/R^{\mathrm{set}}$ by comparing
both quantities with $W_M^4$. These bounds also give the comparison
between computable and set-based RW.

\par\medskip\noindent\textbf{Count-based bounds.}
The exact path maximum and the sum of fourth powers determine the variance
ratio. If $R^{\mathrm{cnt}}=0$, all count-based bounds and variances
vanish. The stated ratios have positive denominators. We therefore
assume $R^{\mathrm{cnt}}>0$ and write
$a_n=C_{\ell,n}/C_{L,n}+B_n/(\kappa_n+B_n)$.
The count-based residual bound in Supplementary Section~\ref{app:certificates} gives
$r^{\mathrm{cnt}}(m)=r^{\mathrm{cnt}}(1)m^2$.
Substituting $k=m-1$ into the count-based increment bound gives
\[
\begin{aligned}
d^{\mathrm{cnt}}(m-1,m)
&=\frac{G_n^2}{2\kappa_n^3}
\left[C_{L,n}(2m-1)+(m-1)^2
\left(C_{\ell,n}+\frac{B_nC_{L,n}}{\kappa_n+B_n}\right)\right]\\
&=r^{\mathrm{cnt}}(1)\{2m-1+a_n(m-1)^2\}.
\end{aligned}
\]
The full singleton path attains both count-based maxima, as stated after
Theorem~\ref{thm:transcript}. Supplementary Section~\ref{app:count-path-maximum}
proves this for the increment sum. Thus
\[
\begin{aligned}
\sqrt{D^{\mathrm{cnt}}}
&=\sum_{m=1}^M d^{\mathrm{cnt}}(m-1,m)
=r^{\mathrm{cnt}}(1)\left\{M^2+\frac{a_nM(M-1)(2M-1)}6\right\},\\
R^{\mathrm{cnt}}
&=\sum_{m=1}^M r^{\mathrm{cnt}}(m)^2
=r^{\mathrm{cnt}}(1)^2\sum_{m=1}^M m^4
\end{aligned}
\]

Consequently,
\[
\frac{V_{\rw}^{\mathrm{cnt}}}{V_{\ind}^{\mathrm{cnt}}}
=\frac{D^{\mathrm{cnt}}}{R^{\mathrm{cnt}}}
=\frac{\left\{M^2+a_nM(M-1)(2M-1)/6\right\}^2}{\sum_{m=1}^M m^4}.
\]
Lemma~\ref{lem:uniform-event} gives
$MC_{\ell,n}/C_{L,n}\to0$, $B_n=O(\polylog(n))$, and
$\kappa_n\ge c_0n/4$. Since $M=o\{n/\polylog(n)\}$,
\[
0\le Ma_n
=\frac{MC_{\ell,n}}{C_{L,n}}
+\frac{MB_n}{\kappa_n+B_n}
\le\frac{MC_{\ell,n}}{C_{L,n}}+\frac{4MB_n}{c_0n}
\longrightarrow0.
\]
Since $(M-1)(2M-1)/(6M)\le M/3$ and
$M^5/5\le\sum_{m=1}^M m^4\le M^5$,
$D^{\mathrm{cnt}}/R^{\mathrm{cnt}}=\Theta(M^{-1})$.
If $M\to\infty$, substitute
$\sum_{m=1}^M m^4=M^5\{1+O(M^{-1})\}/5$ to obtain
\[
\frac{V_{\rw}^{\mathrm{com}}}{V_{\ind}^{\mathrm{cnt}}}
\le\frac{V_{\rw}^{\mathrm{cnt}}}{V_{\ind}^{\mathrm{cnt}}}
=\frac{5+o(1)}{M},
\]
where the inequality follows from
Corollary~\ref{cor:computable-calibration}.

\par\medskip\noindent\textbf{Set-based bounds.}
We bound $D^{\mathrm{set}}$ from above and $R^{\mathrm{set}}$ from below
to show that $D^{\mathrm{set}}/R^{\mathrm{set}}=O(M^{-1})$. The upper bound on $D^{\mathrm{set}}$ comes from summing score norms.
To obtain a lower bound on $R^{\mathrm{set}}$, we construct a deletion path
with at least $M/4$ sets satisfying
$\|g_{S_t}\|_2\ge\frac12\max_{|S|\le M}\|g_S\|_2$.

To bound $D^{\mathrm{set}}/R^{\mathrm{set}}$, assume $n\ge2M$ and $M\ge2$.
We also assume $C_{L,n}>0$ and $\nabla\ell_i(\widehat\beta)\ne0$ for at least one $i$.
Otherwise, $R^{\mathrm{set}}=D^{\mathrm{set}}=0$, which is the trivial case.
To compare residual bounds with squared score norms, we use the ratio
$\omega_n=\lambda_{\min}(H_\varnothing)/\kappa_n$.
Recall $\eta_n=MC_{\ell,n}/C_{L,n}+3MB_n/\kappa_n$.
Let $\gamma_p=\mathbb E|U_1|$, where $U$ is
uniform on the unit sphere in $\mathbb R^p$, with $\gamma_1=1$.

Since $p$ is fixed, $\gamma_p$ is a positive constant independent of $n$.
We first prove the bound
\[
\frac1M\le\frac{D^{\mathrm{set}}}{R^{\mathrm{set}}}
\le\frac{1024\omega_n^2(1+\eta_n)^2}{\gamma_p^4M}.
\]

Recall that $W_M$ is the sum of the $M$ largest values among
$\{\|\nabla\ell_i(\widehat\beta)\|_2:i=1,\ldots,n\}$,
and $\alpha_n=C_{L,n}/(2\kappa_n^3)$.
The Hessian lower bound and $\nabla^2\ell_i(\widehat\beta)\succeq0$ give
$\kappa_nI_p\preceq H_A\preceq H_\varnothing$, so
\[
\frac1{\lambda_{\min}(H_\varnothing)}
\le\|H_A^{-1}\|_2
=\frac1{\lambda_{\min}(H_A)}
\le\frac1{\kappa_n}.
\]
Using
\[
r^{\mathrm{set}}(A)
=\frac{C_{L,n}}{2\kappa_n^2}\|H_A^{-1}\|_2\|g_A\|_2^2
=\alpha_n\kappa_n\|H_A^{-1}\|_2\|g_A\|_2^2
\]
and $\omega_n=\lambda_{\min}(H_\varnothing)/\kappa_n$ gives
\[
\frac{\alpha_n}{\omega_n}\|g_A\|_2^2
\le r^{\mathrm{set}}(A)\le\alpha_n\|g_A\|_2^2.
\]

\textit{An upper bound for $D^{\mathrm{set}}$.}
By \eqref{eq:set-path-bound}, we have
\[
\sqrt{D^{\mathrm{set}}}\le\alpha_n W_M^2(1+\eta_n).
\]

\textit{A lower bound for $R^{\mathrm{set}}$.}
Let $s_*:=\max_{|S|\le M}\|g_S\|_2$, and let $J$ index the $M$ largest
values of $\|\nabla\ell_i(\widehat\beta)\|_2$.
By rotational invariance, $U^\top a$ has the same distribution as
$\|a\|_2U_1$ for every $a\in\mathbb R^p$, so
$\mathbb E_U|U^\top a|=\gamma_p\|a\|_2$. Hence
\[
\mathbb E_U\sum_{i\in J}|U^\top\nabla\ell_i(\widehat\beta)|
=\gamma_p\sum_{i\in J}\|\nabla\ell_i(\widehat\beta)\|_2
=\gamma_pW_M.
\]
There is therefore a unit direction $u$ satisfying
$\sum_{i\in J}|u^\top\nabla\ell_i(\widehat\beta)|\ge\gamma_pW_M$.
Partition $J$ by the sign of these projections and take $S$ to be the
group with the larger absolute projection sum. Since the projections in
$S$ have the same sign, $|u^\top g_S|\ge\gamma_pW_M/2$.
Using $|S|\le M$ gives
\[
s_*\ge\|g_S\|_2\ge|u^\top g_S|\ge\gamma_pW_M/2.
\]
Taking maxima in the residual lower bound gives
\[
\begin{aligned}
r_*^{\mathrm{set}}
&=\max_{|A|\le M}r^{\mathrm{set}}(A)
\ge\frac{\alpha_n}{\omega_n}\max_{|A|\le M}\|g_A\|_2^2\\
&=\frac{\alpha_n}{\omega_n}s_*^2
\ge\frac{\alpha_n\gamma_p^2W_M^2}{4\omega_n}.
\end{aligned}
\]

Choose a set $S_*$ satisfying $|S_*|\le M$ and $\|g_{S_*}\|_2=s_*$,
and write $k=|S_*|$. Since $s_*>0$, set $u=g_{S_*}/s_*$. Then
$\|u\|_2=\|g_{S_*}\|_2/s_*=1$ and $u^\top g_{S_*}=s_*$.
For every $i\in S_*$, $u^\top\nabla\ell_i(\widehat\beta)\ge0$:
otherwise, removing $i$ would give
$\|g_{S_*\setminus\{i\}}\|_2\ge u^\top g_{S_*\setminus\{i\}}
=s_*-u^\top\nabla\ell_i(\widehat\beta)>s_*$, contradicting the definition of $s_*$.
We construct a deletion path from $S_0=\varnothing$ with at least $M/4$
states satisfying $\|g_{S_t}\|_2\ge s_*/2$.

If $k\ge M/2$, order $S_*=\{i_1,\ldots,i_k\}$ by decreasing
$u^\top\nabla\ell_{i_j}(\widehat\beta)$, and delete the records in that order,
so $S_j=\{i_1,\ldots,i_j\}$ for $1\le j\le k$.
The first $j$ projections have average at least the average $s_*/k$
over all $k$ records. Hence
\[
u^\top g_{S_j}
=\sum_{\ell=1}^j u^\top\nabla\ell_{i_\ell}(\widehat\beta)
\ge\frac jk s_*\ge\frac{s_*}{2},
\qquad \lceil k/2\rceil\le j\le k.
\]
There are $\lfloor k/2\rfloor+1\ge k/2\ge M/4$ such states, all satisfying
$\|g_{S_j}\|_2\ge u^\top g_{S_j}\ge s_*/2$.
The path respects the cap because $|S_j|\le k\le M$.

If $k<M/2$, first delete the records in $S_*$ one at a time in any order,
so $S_k=S_*$. We then append $L=\lfloor M/2\rfloor$ remaining records.
Order the remaining records as $i_1,\ldots,i_{n-k}$ so that
$b_j:=u^\top\nabla\ell_{i_j}(\widehat\beta)$ satisfies
$b_1\ge\cdots\ge b_{n-k}$, and append the first $L$ of them.
Thus $S_{k+\ell}=S_*\cup\{i_1,\ldots,i_\ell\}$ for $1\le\ell\le L$,
and all these sets have size at most $k+L\le M$.
If $b_L<-s_*/M$, then every $b_j$ with $j\ge L$ is also below $-s_*/M$.
Since $n\ge2M$, there are at least
$n-k-L+1\ge n-M+1\ge M$ such records.
Choose $M$ of them as $C$. Then $u^\top g_C<-s_*$ and
$\|g_C\|_2\ge|u^\top g_C|>s_*$, contradicting the definition of $s_*$.
Therefore $b_j\ge b_L\ge-s_*/M$ for $1\le j\le L$, and
\[
u^\top g_{S_{k+\ell}}
=s_*+\sum_{j=1}^{\ell}b_j
\ge s_*\left(1-\frac{\ell}{M}\right)
\ge\frac{s_*}{2},
\qquad 0\le\ell\le L.
\]
These $L+1\ge M/4$ states satisfy
$\|g_{S_{k+\ell}}\|_2\ge u^\top g_{S_{k+\ell}}\ge s_*/2$.

For either constructed path $P=(S_0,\ldots,S_T)$, each state with
$\|g_{S_t}\|_2\ge s_*/2$ satisfies
\[
r^{\mathrm{set}}(S_t)
\ge\frac{\alpha_n}{\omega_n}\|g_{S_t}\|_2^2
\ge\frac{\alpha_ns_*^2}{4\omega_n}.
\]
There are at least $M/4$ such states, so
\[
R^{\mathrm{set}}
\ge\sum_{t=1}^T\{r^{\mathrm{set}}(S_t)\}^2
\ge\frac M4\left(\frac{\alpha_ns_*^2}{4\omega_n}\right)^2
=\frac{\alpha_n^2Ms_*^4}{64\omega_n^2}.
\]
Using $s_*\ge\gamma_pW_M/2$ then gives
\[
R^{\mathrm{set}}
\ge\frac{\alpha_n^2M}{64\omega_n^2}
\left(\frac{\gamma_pW_M}{2}\right)^4
=\frac{\alpha_n^2\gamma_p^4MW_M^4}{1024\omega_n^2}.
\]
For the upper bound on $D^{\mathrm{set}}/R^{\mathrm{set}}$, combine this inequality with
$\sqrt{D^{\mathrm{set}}}\le\alpha_n W_M^2(1+\eta_n)$ to obtain
\[
\frac{D^{\mathrm{set}}}{R^{\mathrm{set}}}
\le\frac{\alpha_n^2W_M^4(1+\eta_n)^2}
{\alpha_n^2\gamma_p^4MW_M^4/(1024\omega_n^2)}
=\frac{1024\omega_n^2(1+\eta_n)^2}{\gamma_p^4M}.
\]

For the lower bound on $D^{\mathrm{set}}/R^{\mathrm{set}}$, the identity
$d^{\mathrm{set}}(\varnothing,S)=r^{\mathrm{set}}(S)$ gives
$\sqrt{D^{\mathrm{set}}}\ge r_*^{\mathrm{set}}$. Also,
$R^{\mathrm{set}}\le M(r_*^{\mathrm{set}})^2$, so
\[
\frac{D^{\mathrm{set}}}{R^{\mathrm{set}}}
\ge\frac{(r_*^{\mathrm{set}})^2}{M(r_*^{\mathrm{set}})^2}
=\frac1M.
\]
Thus
\[
\frac1M\le\frac{D^{\mathrm{set}}}{R^{\mathrm{set}}}
\le\frac{1024\omega_n^2(1+\eta_n)^2}{\gamma_p^4M}.
\]

\textit{Uniformity on $\cH_n$.}
To use the finite-sample comparison in the theorem, it remains to show
$\omega_n=O(1)$ and $\eta_n=o(1)$. We first bound $\|H_\varnothing\|_2$.
Assumptions~\ref{ass:loss}--\ref{ass:identification} give
\[
\sup_{\beta\in\cN}\|\E\nabla^2\ell_i(\beta)\|_2<\infty,
\qquad
\sup_{\beta\in\cN}\|\nabla^2\mathcal P(\beta)\|_2<\infty.
\]
On $\cH_n$, the sample Hessian differs from its expectation by at most
$c_0n/4$, as established in the proof of Lemma~\ref{lem:uniform-event}.
Since $\widehat\beta\in\cN$ and $\lambda=\Theta(1)$,
\[
\|H_\varnothing\|_2
\le n\left(\sup_{\beta\in\cN}\|\E\nabla^2\ell_i(\beta)\|_2+\frac{c_0}{4}\right)
+\lambda\sup_{\beta\in\cN}\|\nabla^2\mathcal P(\beta)\|_2
\le C_Hn,
\]
where $C_H$ is independent of $n$.
Since $\kappa_n\ge c_0n/4$,
\[
0<\omega_n
\le\frac{\|H_\varnothing\|_2}{\kappa_n}
\le\frac{C_Hn}{c_0n/4}
=\frac{4C_H}{c_0}
=O(1).
\]
Recall
\[
\eta_n=\frac{MC_{\ell,n}}{C_{L,n}}+\frac{3MB_n}{\kappa_n}.
\]
Lemma~\ref{lem:uniform-event} gives $MC_{\ell,n}/C_{L,n}\to0$ and
$B_n=O(\polylog(n))$. Together with $M=o\{n/\polylog(n)\}$ and
$\kappa_n\ge c_0n/4$, these imply
\[
0\le\eta_n
\le\frac{MC_{\ell,n}}{C_{L,n}}
+\frac{12MB_n}{c_0n}
\longrightarrow0.
\]
Substituting into the finite-sample ratio bounds gives
\[
\frac{V_{\rw}^{\mathrm{set}}}{V_{\ind}^{\mathrm{set}}}
=\frac{D^{\mathrm{set}}}{R^{\mathrm{set}}}
=\Theta(M^{-1})
\]
uniformly on $\cH_n$.

\par\medskip\noindent\textbf{Computable RW.}
If $D^{\mathrm{set}}=0$, both set-based and computable RW require zero
noise. We therefore assume $D^{\mathrm{set}}>0$ when comparing their
ratio.
The accumulated variance uses the realized increment bounds, so maximizing
over deletion paths gives
\[
V_{\rw}^{\mathrm{com}}
=\frac{\sqrt{D^{\mathrm{com}}D^{\mathrm{set}}}}{\mu^2}
\le\frac{D^{\mathrm{com}}}{\mu^2}.
\]
When $D^{\mathrm{set}}>0$, the bound proved in
{\color{black}Section~\ref{app:set-variance-proof}} gives
$\sqrt{D^{\mathrm{set}}}\ge\alpha_n\gamma_p^2W_M^2/(4\omega_n)$.
Together with $D^{\mathrm{set}}\le D^{\mathrm{com}}$ and
$\sqrt{D^{\mathrm{com}}}\le\alpha_n W_M^2(1+\eta_n)$, this yields
\[
1\le\frac{V_{\rw}^{\mathrm{com}}}{V_{\rw}^{\mathrm{set}}}
=\sqrt{\frac{D^{\mathrm{com}}}{D^{\mathrm{set}}}}
\le\frac{4\omega_n(1+\eta_n)}{\gamma_p^2}.
\]
Since $\omega_n=O(1)$ and $\eta_n=o(1)$ uniformly on $\cH_n$, and $p$ is fixed,
$V_{\rw}^{\mathrm{com}}/V_{\rw}^{\mathrm{set}}=\Theta(1)$.
\end{proof}

\subsection{Proof of Corollary~\ref{cor:single-release}}
\begin{proof}
If $r^{\mathrm{cnt}}(1)=0$, all compared variances vanish and the ratios
are not stated. We therefore assume $r^{\mathrm{cnt}}(1)>0$ and retain
$a_n$ from that proof.
Since $V_{\mathrm{single}}=r^{\mathrm{cnt}}(1)^2M^4/\mu^2$, the exact
path maximum and IND sum give
\[
\frac{V_{\rw}^{\mathrm{cnt}}}{V_{\mathrm{single}}}
=\left(1+\frac{a_n(M-1)(2M-1)}{6M}\right)^2,
\qquad
\frac{V_{\ind}^{\mathrm{cnt}}}{V_{\mathrm{single}}}
=\frac{\sum_{m=1}^M m^4}{M^4}.
\]
The cap condition gives $Ma_n\to0$, and $(M-1)(2M-1)/(6M)\le M/3$. Hence
\[
\frac{V_{\rw}^{\mathrm{com}}}{V_{\mathrm{single}}}
\le\frac{V_{\rw}^{\mathrm{cnt}}}{V_{\mathrm{single}}}=1+o(1).
\]
If $M\to\infty$, then
$\sum_{m=1}^M m^4=M^5\{1+O(M^{-1})\}/5$, giving
$V_{\ind}^{\mathrm{cnt}}/V_{\mathrm{single}}=(M/5)\{1+o(1)\}$.
\end{proof}

\subsection{Proof of Remark~\ref{rem:set-single-release}}
\label{app:set-single-release}
Write $V_{\mathrm{single},*}^{\mathrm{set}}:=\max_{|S|\le M}r^{\mathrm{set}}(S)^2/\mu^2$
for the single-release benchmark, where each deletion set is chosen before
release. Under the assumptions of Corollary~\ref{cor:single-release},
we show that
\[
V_{\rw}^{a}=\Theta\!\left(V_{\mathrm{single},*}^{\mathrm{set}}\right)
\quad(a\in\{\mathrm{set},\mathrm{com}\}),
\qquad
V_{\ind}^{\mathrm{set}}=\Theta\!\left(MV_{\mathrm{single},*}^{\mathrm{set}}\right),
\]
uniformly over $D\in\cH_n$.
\begin{proof}
If $r_*^{\mathrm{set}}=0$, all compared variances vanish. For the
nonzero case with $M\ge2$, the proof in
{\color{black}Section~\ref{app:set-variance-proof}} gives
$\sqrt{D^{\mathrm{set}}}\le\alpha_n W_M^2(1+\eta_n)$ and
$r_*^{\mathrm{set}}\ge\alpha_n\gamma_p^2W_M^2/(4\omega_n)$, hence
\[
(r_*^{\mathrm{set}})^2
\le D^{\mathrm{set}}
\le\left\{\frac{4\omega_n(1+\eta_n)}{\gamma_p^2}\right\}^{\!2}
(r_*^{\mathrm{set}})^2.
\]
The uniformity argument in {\color{black}Section~\ref{app:ratio-proof}}
already established $\omega_n=O(1)$ and $\eta_n=o(1)$ uniformly on $\cH_n$.
Since $p$ is fixed, $\gamma_p$ is a positive constant independent of $n$.
Thus the coefficient in the preceding comparison is uniformly bounded
on $\cH_n$. Dividing by $\mu^2$ proves the claim for set-based RW. The result in
Theorem~\ref{thm:variance} gives the same conclusion for computable RW.
Finally, $V_{\ind}^{\mathrm{set}}=\Theta(MV_{\rw}^{\mathrm{set}})$
proves the IND comparison. For $M=1$, both set-based path maxima equal
$(r_*^{\mathrm{set}})^2$, and the computable comparison follows from
Theorem~\ref{thm:variance}.
\end{proof}

\subsection{Proof of Theorem~\ref{thm:covariance-adaptive-singleton}}\label{app:covariance-proof}
\begin{proof}

\textit{Sufficiency.}
Condition on $D$ and suppress its subscript on $\Sigma$. Gaussian conditioning uses
$A_t=\Sigma_{t,<t}\Sigma_{<t,<t}^{-1}$ and
$C_t=\Sigma_{tt}-\Sigma_{t,<t}\Sigma_{<t,<t}^{-1}\Sigma_{<t,t}$.
Fix $\zeta\in\mathcal U$. Extend a realized prefix to $P\in\mathcal C_1$ and
write $u=u_\zeta(P)$, with blocks $u_t=\zeta(S_t)$. Let $u_{1:K}=(\zeta(S_1)^\top,\ldots,\zeta(S_K)^\top)^\top$ denote its first
$K$ blocks, and let $u_{<t}$ denote its first $t-1$ blocks.
At a common history, the conditional mean difference is
$u_t-A_tu_{<t}$ and the covariance is $C_t$.
For $t=1$, there are no preceding releases, so $A_tu_{<t}=0$
and the conditional mean difference is $u_1$.
Block Cholesky factorization gives, for every realized prefix,
\[
\sum_{t\le K}\{u_t-A_tu_{<t}\}^\top
C_t^{-1}\{u_t-A_tu_{<t}\}
=u_{1:K}^\top\Sigma_{1:K,1:K}^{-1}u_{1:K}.
\]
The Schur complement gives
\[
u_{1:K}^\top\Sigma_{1:K,1:K}^{-1}u_{1:K}
\le u^\top\Sigma^{-1}u\le\mu^2.
\]
This proves the pathwise Gaussian budget. Adaptive GDP composition, as used in
{\color{black}Section~\ref{app:allocation-proof}}, establishes sufficiency, including
stopping.

\textit{Necessity.}
Fix $\zeta\in\mathcal U$ and a path
$P\in\mathcal C_1$ before releases, and write $u=u_\zeta(P)$. The two transcript laws are Gaussians
with mean gap $u$, so certification requires
$u^\top\Sigma^{-1}u\le\mu^2$.
Taking all $\zeta\in\mathcal U$ and $P\in\mathcal C_1$ gives the condition.
\end{proof}

\subsection{Proof of Proposition~\ref{prop:covariance-adaptive-singleton-minimax}}
We prove the claims in the order stated. For each upper bound, we give a
covariance that certifies all allowed residual assignments and paths. For
each lower bound, we choose allowed assignments that every feasible
covariance must certify.

\subsubsection{Count-based bounds}
Here $D^{\mathrm{cnt}}=\{\sum_{m=1}^M d^{\mathrm{cnt}}(m-1,m)\}^2$.
If $R^{\mathrm{cnt}}=0$, all residual and increment bounds vanish,
so no noise is needed for certification. We henceforth assume $R^{\mathrm{cnt}}>0$.

We first prove that $\mathfrak V_D(\mathcal U_\Delta)=D^{\mathrm{cnt}}/\mu^2$. For the upper bound, RW is feasible with noise increment variances
$d^{\mathrm{cnt}}(m-1,m)\sqrt{D^{\mathrm{cnt}}}/\mu^2$. So the largest marginal variance of this covariance is
$D^{\mathrm{cnt}}/\mu^2$, and hence
$\mathfrak V_D(\mathcal U_\Delta)\le D^{\mathrm{cnt}}/\mu^2$.

For the lower bound, fix a unit vector $w\in\R^p$ and define
$\zeta(S)=\{\sum_{j=1}^{|S|}d^{\mathrm{cnt}}(j-1,j)\}w$.
On a singleton edge from size $m-1$ to $m$, its increment has norm
$d^{\mathrm{cnt}}(m-1,m)$, so $\zeta\in\mathcal U_\Delta$.
Choose any $P=(S_0,\ldots,S_M)\in\mathcal C_1$. Its residual vector is
$u_\zeta(P)=(\zeta(S_1)^\top,\ldots,\zeta(S_M)^\top)^\top$, and
$\|\zeta(S_M)\|_2^2=D^{\mathrm{cnt}}$ because all increments point in
the same direction. For any feasible $\Sigma$, let
$V=\max_{t\le M}\lambda_{\max}(\Sigma_{tt})$. Certification of this
assignment and path, followed by projection onto the last release,
gives
\[
\mu^2\ge u_\zeta(P)^\top\Sigma^{-1}u_\zeta(P)
\ge \zeta(S_M)^\top\Sigma_{MM}^{-1}\zeta(S_M)
\ge D^{\mathrm{cnt}}/V.
\]
Hence every feasible covariance has $V\ge D^{\mathrm{cnt}}/\mu^2$,
which implies
$\mathfrak V_D(\mathcal U_\Delta)\ge D^{\mathrm{cnt}}/\mu^2$.
Together, the two inequalities give
$\mathfrak V_D(\mathcal U_\Delta)=D^{\mathrm{cnt}}/\mu^2$.

We next prove that $\mathfrak V_D(\mathcal U_0)=R^{\mathrm{cnt}}/\mu^2$. For the upper bound, independent noise with covariance $(R^{\mathrm{cnt}}/\mu^2)I_{pM}$
certifies every assignment in $\mathcal U_0$, since
$\sum_m\|\zeta(S_m)\|_2^2\le R^{\mathrm{cnt}}$ on every path. Thus
$\mathfrak V_D(\mathcal U_0)\le R^{\mathrm{cnt}}/\mu^2$.

For the lower bound, fix a unit vector $w$ and any
$P\in\mathcal C_1$. For each choice of signs $s_m\in\{-1,1\}$, define
$\zeta(\varnothing)=0$ and
$\zeta(S)=s_{|S|}r^{\mathrm{cnt}}(|S|)w$ for $S\ne\varnothing$.
Every such assignment belongs to $\mathcal U_0$ and must be certified
by any feasible $\Sigma$. Let
$V=\max_{t\le M}\lambda_{\max}(\Sigma_{tt})$. Averaging the certificate
inequalities over all $2^M$ sign choices cancels the cross terms and gives
\[
\mu^2\ge\sum_{m=1}^M r^{\mathrm{cnt}}(m)^2
w^\top(\Sigma^{-1})_{mm}w.
\]
The Schur complement gives
$(\Sigma^{-1})_{mm}\succeq\Sigma_{mm}^{-1}$, while the definition of
$V$ gives $\Sigma_{mm}^{-1}\succeq V^{-1}I_p$. Since $\|w\|_2=1$,
$\mu^2\ge R^{\mathrm{cnt}}/V$.
Thus $\mathfrak V_D(\mathcal U_0)\ge R^{\mathrm{cnt}}/\mu^2$.
The two bounds give the claimed equality.

We finally prove that $V_{\rw}^{\mathrm{cnt}}=\{1+o(1)\}\mathfrak V_D(\mathcal U_0\cap\mathcal U_\Delta)$. For the upper bound, the inclusion
$
\mathcal U_0\cap\mathcal U_\Delta\subseteq\mathcal U_\Delta
$
gives
$
\mathfrak V_D(\mathcal U_0\cap\mathcal U_\Delta)
\le\mathfrak V_D(\mathcal U_\Delta)
=D^{\mathrm{cnt}}/\mu^2.
$

For the lower bound, we need an assignment satisfying both constraints
whose final residual is close to $\sqrt{D^{\mathrm{cnt}}}$. We first
compare the sum of the singleton increment bounds with the residual
bound at each count. The definition of $d^{\mathrm{cnt}}$ gives
\[
d^{\mathrm{cnt}}(j-1,j)
=r^{\mathrm{cnt}}(1)\left[(2j-1)
+\left(\frac{C_{\ell,n}}{C_{L,n}}+\frac{B_n}{\kappa_n+B_n}\right)(j-1)^2\right].
\]
The terms $r^{\mathrm{cnt}}(1)(2j-1)$ sum to
$r^{\mathrm{cnt}}(m)=m^2r^{\mathrm{cnt}}(1)$. For the remaining terms,
$\sum_{j=1}^m(j-1)^2\le Mm^2$ and
$B_n/(\kappa_n+B_n)\le B_n/\kappa_n$. Recall that
$\eta_n=MC_{\ell,n}/C_{L,n}+3MB_n/\kappa_n=o(1)$ uniformly on $\cH_n$.
In particular, $M(C_{\ell,n}/C_{L,n}+B_n/\kappa_n)\le\eta_n$.
Thus, for $m\le M$,
\[
\begin{aligned}
\sum_{j=1}^m d^{\mathrm{cnt}}(j-1,j)
&=r^{\mathrm{cnt}}(m)
+r^{\mathrm{cnt}}(1)
\left(\frac{C_{\ell,n}}{C_{L,n}}+\frac{B_n}{\kappa_n+B_n}\right)
\sum_{j=1}^m(j-1)^2\\
&\le r^{\mathrm{cnt}}(m)
\left[1+M\left(\frac{C_{\ell,n}}{C_{L,n}}+\frac{B_n}{\kappa_n}\right)\right]\\
&\le(1+\eta_n)r^{\mathrm{cnt}}(m).
\end{aligned}
\]
Fix a unit vector $w$ and define
$\zeta(S)=(1+\eta_n)^{-1}\{\sum_{j=1}^{|S|}d^{\mathrm{cnt}}(j-1,j)\}w$.
For every $S$ with $|S|=m$, the displayed bound gives
$\|\zeta(S)\|_2\le r^{\mathrm{cnt}}(m)$, so $\zeta\in\mathcal U_0$.
For any singleton edge $A\subset B$ with $|B|=m$,
$\zeta(B)-\zeta(A)=d^{\mathrm{cnt}}(m-1,m)w/(1+\eta_n)$.
Its norm is at most $d^{\mathrm{cnt}}(m-1,m)$ because $\eta_n\ge0$,
so $\zeta\in\mathcal U_\Delta$ as well.
Choose any $P=(S_0,\ldots,S_M)\in\mathcal C_1$. Since
$\sum_{j=1}^M d^{\mathrm{cnt}}(j-1,j)=\sqrt{D^{\mathrm{cnt}}}$,
the final residual satisfies
$\|\zeta(S_M)\|_2^2=D^{\mathrm{cnt}}/(1+\eta_n)^2$.
For any feasible $\Sigma$, let
$V=\max_{t\le M}\lambda_{\max}(\Sigma_{tt})$. Certification of this
assignment and path, followed by projection onto the final release,
gives
\[
\mu^2\ge u_\zeta(P)^\top\Sigma^{-1}u_\zeta(P)
\ge\zeta(S_M)^\top\Sigma_{MM}^{-1}\zeta(S_M)
\ge\frac{D^{\mathrm{cnt}}}{(1+\eta_n)^2V}.
\]
Thus every feasible covariance has
$V\ge D^{\mathrm{cnt}}/\{(1+\eta_n)^2\mu^2\}$. Taking the infimum over
feasible covariances and using the upper bound gives
\[
\frac{V_{\rw}^{\mathrm{cnt}}}{(1+\eta_n)^2}
\le\mathfrak V_D(\mathcal U_0\cap\mathcal U_\Delta)
\le\frac{D^{\mathrm{cnt}}}{\mu^2}
=V_{\rw}^{\mathrm{cnt}}.
\]
Since $\eta_n\to0$ uniformly on $\cH_n$,
\[
V_{\rw}^{\mathrm{cnt}}
=\{1+o(1)\}\mathfrak V_D(\mathcal U_0\cap\mathcal U_\Delta).
\]

\subsubsection{Set-based bounds}\label{app:tree-separation}
Write $r_*^{\mathrm{set}}=\max_{|S|\le M}r^{\mathrm{set}}(S)$ for the
largest residual bound over allowed deletion sets.
If $R^{\mathrm{set}}=0$, all residual and increment bounds vanish,
so no noise is needed for certification. We henceforth
assume $R^{\mathrm{set}}>0$. Then
$D^{\mathrm{set}}>0$ by $r_*^{\mathrm{set}}\le\sqrt{D^{\mathrm{set}}}$.

We first prove that $\mathfrak V_D(\mathcal U_0)=R^{\mathrm{set}}/\mu^2$. For the upper bound, independent noise with covariance $(R^{\mathrm{set}}/\mu^2)I_{pM}$
certifies every assignment in $\mathcal U_0$, since
$\sum_t\|\zeta(S_t)\|_2^2\le R^{\mathrm{set}}$ on every path. Hence
$\mathfrak V_D(\mathcal U_0)\le R^{\mathrm{set}}/\mu^2$.

For the lower bound, choose a path $P\in\mathcal C_1$ attaining
$R^{\mathrm{set}}$. Such a path exists because splitting batches into single deletions and extending
the path to $M$ deletions only adds nonnegative terms to the sum. Fix a unit vector $w$.
For every choice of signs
$s_t\in\{-1,1\}$, define
$\zeta(S_t)=s_t r^{\mathrm{set}}(S_t)w$ on $P$ and set $\zeta=0$ on
all other deletion sets. Each assignment belongs to $\mathcal U_0$
and must be certified by any feasible $\Sigma$. Let
$V=\max_{t\le M}\lambda_{\max}(\Sigma_{tt})$. Averaging the certificate
inequalities over all $2^M$ sign choices cancels the cross terms and gives
\[
\mu^2\ge\sum_{t=1}^M r^{\mathrm{set}}(S_t)^2
w^\top(\Sigma^{-1})_{tt}w.
\]
The Schur complement gives
$(\Sigma^{-1})_{tt}\succeq\Sigma_{tt}^{-1}$, while the definition of
$V$ gives $\Sigma_{tt}^{-1}\succeq V^{-1}I_p$. Since $\|w\|_2=1$,
$\mu^2\ge R^{\mathrm{set}}/V$.
Thus $\mathfrak V_D(\mathcal U_0)\ge R^{\mathrm{set}}/\mu^2$.
The two bounds give the claimed equality.

\noindent For the remaining bounds, fix
$\mathcal U\in\{\mathcal U_\Delta,\mathcal U_0\cap\mathcal U_\Delta\}$.

For the upper bound $\mathfrak V_D(\mathcal U)\le C_*\log^2(M+1)V_{\rw}^{\mathrm{set}}$, we construct a covariance that certifies
$\mathcal U_\Delta$ and has small marginal variances. Let
$K\in\mathbb R^{M\times M}$ be positive definite and set
$\Sigma=(D^{\mathrm{set}}/\mu^2)(K\otimes I_p)$. Here $K_{tt}$ is a
scalar, while the covariance block for release $t$ is
$\Sigma_{tt}=(D^{\mathrm{set}}/\mu^2)K_{tt}I_p$. Thus
$\lambda_{\max}(\Sigma_{tt})=(D^{\mathrm{set}}/\mu^2)K_{tt}$, and
controlling the largest marginal variance amounts to controlling
$\max_tK_{tt}$.

We next determine what $K$ must satisfy for certification. For any
$\zeta\in\mathcal U_\Delta$ and
$P=(S_0,\ldots,S_M)\in\mathcal C_1$, let
$\Delta_j=\zeta(S_j)-\zeta(S_{j-1})$. Since
$\zeta(S_t)=\sum_{j=1}^t\Delta_j$, the stacked residual vector is
$u_\zeta(P)=\sum_{j=1}^M l_j\otimes\Delta_j$, where
$l_j=(\mathbf1_{t\ge j})_{t=1}^M$. As
$\Sigma^{-1}=(\mu^2/D^{\mathrm{set}})(K^{-1}\otimes I_p)$, the
certificate holds if
$\|u_\zeta(P)\|_{K^{-1}\otimes I_p}\le\sqrt{D^{\mathrm{set}}}$.
Here $\|x\|_{K^{-1}\otimes I_p}^2
=x^\top(K^{-1}\otimes I_p)x$.
If $l_j^\top K^{-1}l_j\le1$ for every $j$, then
\[
\begin{aligned}
\|u_\zeta(P)\|_{K^{-1}\otimes I_p}
&=\left\|\sum_{j=1}^M l_j\otimes\Delta_j\right\|_{K^{-1}\otimes I_p}
\le\sum_{j=1}^M\|l_j\otimes\Delta_j\|_{K^{-1}\otimes I_p}\\
&=\sum_{j=1}^M\sqrt{l_j^\top K^{-1}l_j}\,\|\Delta_j\|_2
\le\sum_{j=1}^M d^{\mathrm{set}}(S_{j-1},S_j)
\le\sqrt{D^{\mathrm{set}}}.
\end{aligned}
\]
It remains to construct $K$ with this property and
$\max_tK_{tt}=O(\log^2(M+1))$.

Set $h=1+\lceil\log_2M\rceil$ and $N=2^{h-1}\ge M$. For
$0\le k\le h-1$ and $0\le a<N/2^k$, define
\[
I_{k,a}=\{a2^k+1,\ldots,(a+1)2^k\}.
\]
Use each $I_{k,a}$ as a column index and set
$A_{t,I}=\mathbf1\{t\in I\}$ for $1\le t\le M$. Distinct intervals
remain distinct columns even if they agree on $\{1,\ldots,M\}$. Each
row belongs to $h$ intervals. Starting at $j$, successively choose the
largest $I_{k,a}$ beginning at the next uncovered index. Their lengths
strictly increase, so they partition $\{j,\ldots,N\}$ into at most $h$
intervals. Restricting to the first $M$ coordinates expresses $l_j$ as
the sum of their columns of $A$. The intervals of length one give
$A$ full row rank, so $K=hAA^\top$ is positive definite. The row
count gives $K_{tt}=h^2$. The column representation gives a vector
$c$ with entries in $\{0,1\}$, $Ac=l_j$, and $\|c\|_2^2\le h$. Since $A$ has full row
rank, its least norm solution to $Ac=l_j$ is
$A^\top(AA^\top)^{-1}l_j$. Hence
\[
l_j^\top K^{-1}l_j
=\frac1h\min_{Ac=l_j}\|c\|_2^2\le1.
\]
Thus $\Sigma$ certifies $\mathcal U_\Delta$, and hence also
$\mathcal U$. Its largest marginal variance is at most
$h^2V_{\rw}^{\mathrm{set}}$, so
\[
\mathfrak V_D(\mathcal U)
\le h^2V_{\rw}^{\mathrm{set}}
\le C\log^2(M+1)V_{\rw}^{\mathrm{set}}
\]
for a constant $C$.

For the lower bound $\mathfrak V_D(\mathcal U)\ge V_{\rw}^{\mathrm{set}}/C_*$, fix a unit vector $w$ and define
$\zeta(S)=r^{\mathrm{set}}(S)w$. Then $\zeta(\varnothing)=0$ and
$\|\zeta(S)\|_2=r^{\mathrm{set}}(S)$ for every $S$. For every singleton
edge $A\subset B$, \eqref{eq:bound-relations} gives
$\|\zeta(B)-\zeta(A)\|_2
=|r^{\mathrm{set}}(B)-r^{\mathrm{set}}(A)|
\le d^{\mathrm{set}}(A,B)$. Hence
$\zeta\in\mathcal U_0\cap\mathcal U_\Delta\subseteq\mathcal U$.
Choose a set $S_*$ attaining $r_*^{\mathrm{set}}$ and a full
singleton path $P$ through it at time $t_*=|S_*|\ge1$. For any feasible
$\Sigma$, let $V=\max_{t\le M}\lambda_{\max}(\Sigma_{tt})$.
The certificate, the Schur complement, and
$\Sigma_{t_*t_*}\preceq VI_p$ give, respectively,
\[
\mu^2\ge u_\zeta(P)^\top\Sigma^{-1}u_\zeta(P)
\ge \zeta(S_*)^\top\Sigma_{t_*t_*}^{-1}\zeta(S_*)
\ge (r_*^{\mathrm{set}})^2/V.
\]
Thus every feasible covariance has $V\ge(r_*^{\mathrm{set}})^2/\mu^2$,
so $\mathfrak V_D(\mathcal U)\ge(r_*^{\mathrm{set}})^2/\mu^2$.

If $M=1$, then $D^{\mathrm{set}}=(r_*^{\mathrm{set}})^2$, so
$\mathfrak V_D(\mathcal U)\ge V_{\rw}^{\mathrm{set}}$.
For $M\ge2$, the cap condition gives $n\ge2M$ for all sufficiently
large $n$, and $R^{\mathrm{set}}>0$ implies $C_{L,n}>0$ and $W_M>0$.
Under these conditions, \eqref{eq:set-path-bound} gives
$\sqrt{D^{\mathrm{set}}}\le\alpha_n W_M^2(1+\eta_n)$, and
{\color{black}Section~\ref{app:ratio-proof}} gives
$r_*^{\mathrm{set}}\ge\alpha_n\gamma_p^2W_M^2/(4\omega_n)$.
Since $V_{\rw}^{\mathrm{set}}=D^{\mathrm{set}}/\mu^2$, these inequalities give
\[
\begin{aligned}
\sqrt{D^{\mathrm{set}}}
&\le\alpha_n W_M^2(1+\eta_n)
\le\frac{4\omega_n(1+\eta_n)}{\gamma_p^2}\,r_*^{\mathrm{set}},\\
\mathfrak V_D(\mathcal U)
&\ge\frac{(r_*^{\mathrm{set}})^2}{\mu^2}
\ge\left\{\frac{\gamma_p^2}{4\omega_n(1+\eta_n)}\right\}^{\!2}
V_{\rw}^{\mathrm{set}}.
\end{aligned}
\]
Since $\sup_{D\in\cH_n}\omega_n=O(1)$,
$\sup_{D\in\cH_n}\eta_n=o(1)$, and $\gamma_p>0$ is fixed, there is
$C_*<\infty$, independent of $D$ and $n$, such that
$\{4\omega_n(1+\eta_n)/\gamma_p^2\}^2\le C_*$ for all sufficiently large
$n$. Thus $\mathfrak V_D(\mathcal U)\ge V_{\rw}^{\mathrm{set}}/C_*$.
Increasing $C_*$ to cover the constant $C$ above gives
$V_{\rw}^{\mathrm{set}}/C_*\le\mathfrak V_D(\mathcal U)
\le C_*\log^2(M+1)V_{\rw}^{\mathrm{set}}$.

\subsubsection{A matching lower bound}\label{app:matching-separation}
We now prove that $\mathfrak V_{D_n}(\mathcal U)/V_{\rw}^{\mathrm{set}}=\Theta(\log^2 M)$ for a sequence $D_n\in\cH_n$.
The upper bound was proved above for
$\mathcal U\in\{\mathcal U_\Delta,\mathcal U_0\cap\mathcal U_\Delta\}$.
It remains to construct $D_n\in\cH_n$ for which
$\mathfrak V_{D_n}(\mathcal U)\ge c\log^2 M\,V_{\rw}^{\mathrm{set}}$.
Let $M\to\infty$ in the stated cap regime, so $M=o(n)$.

Fix $a>0$. We first construct data with $\widehat\beta=0$ and
\[
\nabla\ell_1(\widehat\beta)=a,
\qquad
\nabla\ell_i(\widehat\beta)=-\frac{a}{n-1},\quad i\ge2.
\]
Thus deleting record 1 changes the gradient sum by $a$, while at most
$M$ other deletions change it by at most $Ma/(n-1)$.

Take $p=1$, $\mathcal P(\beta)=\beta^2$, and $X$ uniform on $\{-1,1\}$.
Fix $0<\epsilon<1/2$. Let $Y$ be independent of $X$ with a
fixed symmetric distribution supported on $[-a,a]$ having positive mass at $0$, $\pm a$,
and $\pm a/(n-1)$ for every even $n$. Use
\[
\ell(y,z)=\frac{z^2}{2}+\epsilon(z-\sin z)-yz
+\frac{y^2}{2(1-\epsilon)}.
\]
The individual Hessians vary with $\beta$, but their sine terms cancel
between observations with opposite feature signs.
Since $|z-\sin z|\le z^2/2$,
$\ell(y,z)\ge\{(1-\epsilon)z-y\}^2/\{2(1-\epsilon)\}\ge0$.
Also $\partial_z^2\ell(y,z)=1+\epsilon\sin z\ge1-\epsilon$ and
$|\partial_z^3\ell(y,z)|\le\epsilon$. Thus the loss and sampling
assumptions hold. The population risk
$\mathcal R(\beta)=\E\{\ell(Y,X\beta)\}$ satisfies
$\mathcal R'(\beta)=\beta$ and $\mathcal R''(\beta)=1$.
Thus $\beta_0=0$, and we take $\cN=[-1,1]$ as the fixed compact convex
neighborhood of $\beta_0$, with $c_0=1$.

For even $n$, set $x_i=1$ for $i\le n/2$ and $x_i=-1$ for $i>n/2$,
with $y_1=-a$ and $y_i=ax_i/(n-1)$ for $i\ge2$.

To verify $D_n\in\cH_n$, recall that the event constructed in
Section~\ref{app:bounds-proof} requires two observation bounds, a
Hessian bound, and a gradient bound. For this distribution, take
$u_n=1$ and $B_y(n)=a$. The two observation bounds hold because
$\max_{i\le n}|x_i|=1=u_n$ and $\max_{i\le n}|y_i|=a=B_y(n)$.
Choose $\rho_0=1/2$, so $[-\rho_0,\rho_0]\subset\operatorname{int}(\cN)$.
Direct calculation gives the remaining two conditions:
\[
\begin{aligned}
\sup_{\beta\in\cN}
\left|\frac1n\sum_{i=1}^n\nabla^2\ell_i(\beta)
-\E\nabla^2\ell_i(\beta)\right|
&=0\le\frac{c_0}{4}=\frac14,\\
\left|\frac1n\nabla L_\varnothing(\beta_0)\right|
&=0\le\frac{c_0\rho_0}{32}=\frac1{64}.
\end{aligned}
\]
Hence $D_n\in\cH_n$. Strict convexity and
$\nabla L_\varnothing(0)=0$ give $\widehat\beta=0$, with the individual
gradients specified above.

On $\cN$, direct calculation gives
$|\nabla\ell_i(\beta)|\le G_n:=1+2\epsilon+a$ and
$|\nabla^2\ell_i(\beta)|\le B_n:=1+\epsilon$.
The third derivative bound gives $C_{\ell,n}:=\epsilon$.
Since the regularizer has constant Hessian, we take
$C_{L,n}:=nC_{\ell,n}=n\epsilon$.
Finally, $\kappa_n:=n/4$ is valid because, for every $|A|\le M$
and $\beta\in\cN$,
\[
\nabla^2L_A(\beta)\ge(n-M)(1-\epsilon)+2\lambda\ge\kappa_n
\]
for all sufficiently large $n$, since $M=o(n)$.

To compute the RW variance on these data, we first evaluate the
residual bounds. For every $|S|\le M$,
\[
H_S=n-|S|+2\lambda,
\qquad
g_S=
\begin{cases}
a\{1-(|S|-1)/(n-1)\},&1\in S,\\
-a|S|/(n-1),&1\notin S.
\end{cases}
\]
Since $r^{\mathrm{set}}(S)=C_{L,n}g_S^2/(2\kappa_n^2H_S)$ and $M=o(n)$,
\[
r^{\mathrm{set}}(S)=
\begin{cases}
r^{\mathrm{set}}(\{1\})\{1+O(M/n)\},&1\in S,\\
O\{(M/n)^2r^{\mathrm{set}}(\{1\})\},&1\notin S.
\end{cases}
\]
Here $r^{\mathrm{set}}(\{1\})=C_{L,n}a^2/(2\kappa_n^2H_{\{1\}})
=\Theta(a^2/n^2)$.

To bound $\sqrt{D^{\mathrm{set}}}$, we bound the sum of
$d^{\mathrm{set}}(A,B)$ over all edges of a deletion path.
We consider separately paths that delete record 1 and paths that do not.

\textit{Paths that delete record 1.}
Split the sum into the edges before
that deletion, the edge deleting record 1, and the edges after it.
Before record 1 is deleted, all deleted records have gradient norm
$a/(n-1)$, and their norms sum to at most $Ma/(n-1)$.
The calculation in \eqref{eq:set-path-bound} therefore bounds the sum
of $d^{\mathrm{set}}(A,B)$ over these preceding edges by
$\alpha_n\{Ma/(n-1)\}^2(1+\eta_n)
=O\{(M/n)^2r^{\mathrm{set}}(\{1\})\}$.

For the edge $A\subsetneq B$ that deletes record 1,
$d^{\mathrm{set}}(A,B)\le r^{\mathrm{set}}(A)+r^{\mathrm{set}}(B)
=r^{\mathrm{set}}(\{1\})\{1+O(M/n)\}$.

After record 1 has been deleted, an edge adding $q$ records has
$\|g_B-g_A\|_2=qa/(n-1)$, $\rho_A,\rho_B=O(a/n)$,
$\delta_{A,B}=O(qa/n^2)$, and $H_A-H_B=q$.
Each of the three terms in $U_{A,B}$ is therefore
$O(qa^2/n^3)=O\{qr^{\mathrm{set}}(\{1\})/n\}$.
Since $d^{\mathrm{set}}(A,B)\le U_{A,B}$ and $\sum q\le M$,
the sum over these subsequent edges is
$O\{(M/n)r^{\mathrm{set}}(\{1\})\}$.
Adding the three bounds gives a total increment sum of at most
$r^{\mathrm{set}}(\{1\})\{1+O(M/n)\}$.

\textit{Paths that do not delete record 1.}
All deleted records have gradient norm $a/(n-1)$, so their norms sum
to at most $Ma/(n-1)$ over the entire path.
The calculation in \eqref{eq:set-path-bound} bounds the total increment
sum by
\[
\alpha_n\{Ma/(n-1)\}^2(1+\eta_n)
=O\{(M/n)^2r^{\mathrm{set}}(\{1\})\}
=o\{r^{\mathrm{set}}(\{1\})\}.
\]

Thus both cases give a total increment sum of at most
$r^{\mathrm{set}}(\{1\})\{1+O(M/n)\}$.
Taking the maximum over paths gives the upper bound below.
A path starting with $\varnothing\subset\{1\}$ gives the lower bound,
since $d^{\mathrm{set}}(\varnothing,\{1\})=r^{\mathrm{set}}(\{1\})$:
\[
r^{\mathrm{set}}(\{1\})
\le\sqrt{D^{\mathrm{set}}}
\le r^{\mathrm{set}}(\{1\})\{1+O(M/n)\}.
\]
Since $V_{\rw}^{\mathrm{set}}=D^{\mathrm{set}}/\mu^2$, we obtain
\[
V_{\rw}^{\mathrm{set}}
=\frac{\{r^{\mathrm{set}}(\{1\})\}^2}{\mu^2}\{1+O(M/n)\}.
\]

We next prove a lower bound for the largest marginal variance of any
feasible fixed covariance.
The same covariance must certify all possible deletion times of record 1.
To obtain these constraints, define an assignment that is zero before
that deletion and constant afterward:
\[
r_n:=r^{\mathrm{set}}(\{1\})\left(1-\frac{M}{n-1}\right)^2,
\qquad
\zeta(S):=r_n\mathbf1_{\{1\in S\}}.
\]
The explicit formulas for $g_S$ and $H_S$ give
$r^{\mathrm{set}}(S)\ge r_n$ whenever $1\in S$.
On a singleton edge adding record 1,
$\rho_A+\rho_B=a/\kappa_n$ and $\delta_{A,B}\ge a/\kappa_n$.
Thus $U_{A,B}\ge r^{\mathrm{set}}(\{1\})\ge r_n$ and
$r^{\mathrm{set}}(A)+r^{\mathrm{set}}(B)\ge r_n$, so
$d^{\mathrm{set}}(A,B)\ge r_n$. On all other edges, $\zeta$ has zero
increment. Hence $\zeta\in\mathcal U_0\cap\mathcal U_\Delta$.

If record 1 is deleted at step $j$, the residual vector along that path is
$u_\zeta(P)=r_nl_j$, where $l_j=(\mathbf1_{t\ge j})_{t=1}^M\in\R^M$ has
$j-1$ zeros followed by $M-j+1$ ones.
Every $j\in\{1,\ldots,M\}$ is possible, so the same feasible covariance
$\Sigma$ must satisfy
\[
r_n^2l_j^\top\Sigma^{-1}l_j\le\mu^2
\qquad(1\le j\le M).
\]
To turn these constraints into a lower bound on
$V=\max_{t\le M}\Sigma_{tt}$, collect the vectors as columns of
$L=(l_1,\ldots,l_M)$, the lower triangular matrix of ones.
Summing the preceding constraints gives
\[
\|\Sigma^{-1/2}L\|_F^2
=\sum_{j=1}^M\|\Sigma^{-1/2}l_j\|_2^2
=\sum_{j=1}^M l_j^\top\Sigma^{-1}l_j
\le\frac{M\mu^2}{r_n^2}.
\]
Since $\Sigma_{tt}\le V$ at every time $t$,
\[
\|\Sigma^{1/2}\|_F^2
=\operatorname{tr}(\Sigma)
=\sum_{t=1}^M\Sigma_{tt}
\le MV.
\]
Writing $\|L\|_*$ for the sum of its singular values and using
$L=\Sigma^{1/2}(\Sigma^{-1/2}L)$, we obtain
\[
\|L\|_*\le\|\Sigma^{1/2}\|_F\|\Sigma^{-1/2}L\|_F
\le\sqrt{MV}\,\frac{\sqrt M\,\mu}{r_n}
=\frac{M\mu\sqrt V}{r_n}.
\]
Thus
\[
V\ge\frac{r_n^2}{\mu^2}\left(\frac{\|L\|_*}{M}\right)^2.
\]
It remains to show $\|L\|_*\ge M\log M/\pi$.
The inverse of $L$ has ones on the diagonal and minus ones on the
subdiagonal. Thus $(L^{-1})^\top L^{-1}$ is tridiagonal, with diagonal
$(2,\ldots,2,1)$ and adjacent entries $-1$. Its eigenvalues give
$s_j(L)=[2\sin\{(2j-1)\pi/(4M+2)\}]^{-1}$ for $1\le j\le M$.
Using $\sin x\le x$ and
$\sum_{j=1}^M(2j-1)^{-1}\ge\frac12\sum_{j=1}^M j^{-1}\ge\frac12\log M$,
\[
\|L\|_*=\sum_{j=1}^M s_j(L)
\ge\frac{2M+1}{\pi}\sum_{j=1}^M\frac1{2j-1}
\ge\frac{M\log M}{\pi}.
\]
Since $r_n=r^{\mathrm{set}}(\{1\})\{1+O(M/n)\}$, every feasible
fixed covariance satisfies, for all sufficiently large $n$,
\[
V\ge\frac{r_n^2}{\mu^2}\left(\frac{\|L\|_*}{M}\right)^2
\ge\frac{\{r^{\mathrm{set}}(\{1\})\}^2\log^2 M}{4\pi^2\mu^2}.
\]
Taking the infimum over $\Sigma$ and using
$V_{\rw}^{\mathrm{set}}\le2\{r^{\mathrm{set}}(\{1\})\}^2/\mu^2$
for all sufficiently large $n$ gives
$\mathfrak V_{D_n}(\mathcal U)/V_{\rw}^{\mathrm{set}}\ge\log^2 M/(8\pi^2)$.
The upper bound proved above gives
$\mathfrak V_{D_n}(\mathcal U)/V_{\rw}^{\mathrm{set}}=O(\log^2 M)$.
Together with the lower bound, this proves
\[
\frac{\mathfrak V_{D_n}(\mathcal U)}{V_{\rw}^{\mathrm{set}}}
=\Theta(\log^2 M),
\qquad
\mathcal U\in\{\mathcal U_\Delta,\mathcal U_0\cap\mathcal U_\Delta\}.
\]

\subsection{Proof of Theorem~\ref{thm:utility-consistency}}\label{app:utility-proof}
We first prove the conditional prediction identities used both in the
experiments and in the consistency argument.

\begin{lemma}
\label{lem:adaptive-identities}
Conditional on $\mathcal F_t$, suppose the test feature is independent
of the fresh release noise and has second moment matrix $\Gamma$.
For IND and RW,
\begin{equation}
\label{eq:prediction-identities}
\begin{aligned}
\Pred_t^{\ind}
&=e_{S_t}^\top\Gamma e_{S_t}+\tr(\Gamma)\tau_t^2,\\
\Pred_t^{\rw}
&=(e_{S_t}+Z_{t-1})^\top\Gamma(e_{S_t}+Z_{t-1})
+\tr(\Gamma)\sigma_t^2.
\end{aligned}
\end{equation}
\end{lemma}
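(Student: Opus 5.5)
The plan is to compute $\Pred_t$ directly from its definition. Conditional on $\mathcal F_t$, everything except the test feature and the fresh noise is fixed. For IND, write $Y_t^{\ind}-\widehat\beta_{S_t}=e_{S_t}+\varepsilon_t$, using $e_{S_t}=\widetilde\beta_{S_t}-\widehat\beta_{S_t}$ from \eqref{eq:one-step}. For RW, write $Y_t^{\rw}-\widehat\beta_{S_t}=(e_{S_t}+Z_{t-1})+\xi_t$. Here $e_{S_t}$ and $Z_{t-1}$ are $\mathcal F_t$-measurable, since $Z_{t-1}=Y_{t-1}-\widetilde\beta_{S_{t-1}}$ is a function of $D$, $S_{t-1}$ and $Y_{t-1}$. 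Hence in both cases the difference has the form $b_t+\nu_t$. The vector $b_t$ is $\mathcal F_t$-measurable, and $\nu_t\mid\mathcal F_t\sim N(0,s_t^2I_p)$ with $s_t^2\in\{\tau_t^2,\sigma_t^2\}$, which is also $\mathcal F_t$-measurable.

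Next I would expand $\{x_0^\top(b_t+\nu_t)\}^2=(x_0^\top b_t)^2+2(x_0^\top b_t)(x_0^\top\nu_t)+(x_0^\top\nu_t)^2$ and take conditional expectations over $x_0$ and $\nu_t$. By assumption they are conditionally independent given $\mathcal F_t$. I would therefore first condition additionally on $x_0$ and apply the tower property.

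The cross term then vanishes because $\E[\nu_t\mid\mathcal F_t,x_0]=0$. The quadratic noise term gives $s_t^2\|x_0\|_2^2$. Averaging over $x_0$ then yields $b_t^\top\Gamma b_t+s_t^2\,\E\|x_0\|_2^2=b_t^\top\Gamma b_t+s_t^2\tr(\Gamma)$, because $\Gamma$ is the second-moment matrix of $x_0$. Substituting $b_t=e_{S_t}$ with $s_t=\tau_t$, and $b_t=e_{S_t}+Z_{t-1}$ with $s_t=\sigma_t$, gives \eqref{eq:prediction-identities}.

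The only delicate point is justifying that $x_0$ is independent of $\mathcal F_t$ as well as of the fresh noise. Then its conditional second moment remains $\Gamma$, and the integrability needed for Fubini holds. This integrability follows from the sub-Gaussian moments in Assumption~\ref{ass:sampling} together with the Gaussian noise.
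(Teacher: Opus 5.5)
Your proposal is correct and follows the paper's proof essentially step by step. Like the paper, you write the error as an $\mathcal F_t$-measurable vector plus fresh $N(0,s_t^2 I_p)$ noise. You then condition additionally on $x_0$, so the cross term vanishes and the quadratic term gives $s_t^2\|x_0\|_2^2$, and you average over $x_0$ using its second-moment matrix $\Gamma$, taking $b=e_{S_t}$ for IND and $b=e_{S_t}+Z_{t-1}$ for RW. The independence you flag as delicate is already part of the lemma's hypothesis, which is stated conditionally on $\mathcal F_t$, and finiteness of $\Gamma$ together with the Gaussian noise moments gives the needed integrability, so the sub-Gaussian assumption is not required.
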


\begin{proof}
Conditional on $\mathcal F_t$, write the parameter error as
$b+\zeta$, where $b$ is fixed and $\zeta\sim N(0,vI_p)$ is independent
of the test feature. Its conditional mean is zero and its conditional
second moment is $vI_p$. Thus
\[
\begin{aligned}
\E\!\left[\{x_0^\top(b+\zeta)\}^2\mid\mathcal F_t,x_0\right]
&=(x_0^\top b)^2
+2(x_0^\top b)x_0^\top\E(\zeta\mid\mathcal F_t,x_0)\\
&\quad+x_0^\top\E(\zeta\zeta^\top\mid\mathcal F_t,x_0)x_0\\
&=(x_0^\top b)^2+v\|x_0\|_2^2.
\end{aligned}
\]
Taking expectation with respect to the test distribution
gives
\[
\begin{aligned}
\E\!\left[\{x_0^\top(b+\zeta)\}^2\mid\mathcal F_t\right]
&=b^\top\E(x_0x_0^\top\mid\mathcal F_t)b
+v\,\E(\|x_0\|_2^2\mid\mathcal F_t)\\
&=b^\top\Gamma b+v\tr(\Gamma).
\end{aligned}
\]
For IND, take $b=e_{S_t}$, $\zeta=\varepsilon_t$, and $v=\tau_t^2$.
For RW, take $b=e_{S_t}+Z_{t-1}$, $\zeta=\xi_t$, and $v=\sigma_t^2$.
Here $Z_0=0$, and, for $t\ge2$,
$Z_{t-1}=Y_{t-1}^{\rw}-\widetilde\beta_{S_{t-1}}$ is
$\mathcal F_t$-measurable.
\end{proof}

\begin{proof}[Proof of Theorem~\ref{thm:utility-consistency}]
We first control the maximum parameter and prediction errors by the
residual bounds and allocated variances. We then apply the conditions in
the theorem and transfer prediction consistency to GED consistency.
Throughout, condition on $D\in\cH_n$. Strict nesting gives $T\le M$.
After stopping, extend the sequence to $M$ slots with zero RW increments.
All bounds below hold for every allowed deletion policy.

\par\medskip\noindent\textbf{Parameter and prediction errors.}
The released parameter errors satisfy
\[
Y_t^{\ind}-\widehat\beta_{S_t}=e_{S_t}+\varepsilon_t,
\qquad
Y_t^{\rw}-\widehat\beta_{S_t}=e_{S_t}+Z_t,
\quad Z_t=\sum_{s\le t}\xi_s.
\]
We use \eqref{eq:prediction-identities} for the prediction errors.

\textit{RW.} To control the accumulated noise, use the fact that $Z_t$
is a martingale. The accumulated variance is bounded by $V_{\rw}$
on every path. Doob's inequality and orthogonality of the increments give
\[
\E_\pi\!\left[\max_{t\le T}\|Z_t\|_2^2\mid D\right]
\le4\E_\pi[\|Z_M\|_2^2\mid D]
=4p\E_\pi\!\left[\sum_{t\le T}\sigma_t^2\mid D\right]
\le4pV_{\rw}.
\]
For count-based RW, $\|e_{S_t}\|_2\le r^{\mathrm{cnt}}(M)$ and
$\sigma_t^2\le V_{\rw}^{\mathrm{cnt}}$.
For each release, the prediction identity and
$\|a+b\|_2^2\le2\|a\|_2^2+2\|b\|_2^2$ give
\[
\begin{aligned}
\Pred_t^{\rw}
&=\|\Gamma^{1/2}(e_{S_t}+Z_{t-1})\|_2^2
+\tr(\Gamma)\sigma_t^2\\
&\le2\|\Gamma^{1/2}e_{S_t}\|_2^2
+2\|\Gamma^{1/2}Z_{t-1}\|_2^2
+\tr(\Gamma)\sigma_t^2\\
&\le2\|\Gamma\|_2r^{\mathrm{cnt}}(M)^2
+2\|\Gamma^{1/2}Z_{t-1}\|_2^2
+\tr(\Gamma)V_{\rw}^{\mathrm{cnt}}.
\end{aligned}
\]
Taking the maximum over releases and enlarging the noise maximum
from $0\le t\le T-1$ to $0\le t\le T$ yields
\[
\max_{t\le T}\Pred_t^{\rw}
\le2\|\Gamma\|_2r^{\mathrm{cnt}}(M)^2
+2\max_{0\le t\le T}\|\Gamma^{1/2}Z_t\|_2^2
+\tr(\Gamma)V_{\rw}^{\mathrm{cnt}}.
\]
The transformed process $\Gamma^{1/2}Z_t$ is also a martingale.
Its increments remain orthogonal in conditional expectation given $D$,
and
$\E_\pi(\|\Gamma^{1/2}\xi_t\|_2^2\mid\mathcal F_t)
=\tr(\Gamma)\sigma_t^2$.
Thus Doob's inequality gives
\[
\begin{aligned}
\E_\pi\!\left[\max_{t\le T}\|\Gamma^{1/2}Z_t\|_2^2\mid D\right]
&\le4\E_\pi[\|\Gamma^{1/2}Z_M\|_2^2\mid D]\\
&=4\tr(\Gamma)\E_\pi\!\left[\sum_{t\le T}\sigma_t^2\mid D\right]\\
&\le4\tr(\Gamma)V_{\rw}^{\mathrm{cnt}}.
\end{aligned}
\]
Taking conditional expectation in the prediction bound therefore gives
\[
\E_\pi\!\left[\max_{t\le T}\Pred_t^{\rw}\mid D\right]
\le2\|\Gamma\|_2r^{\mathrm{cnt}}(M)^2
+9\tr(\Gamma)V_{\rw}^{\mathrm{cnt}}.
\]

For parameter error, use
$Y_t^{\rw}-\widehat\beta_{S_t}=e_{S_t}+Z_t$ and the same squared
norm inequality to obtain
\[
\max_{t\le T}\|Y_t^{\rw}-\widehat\beta_{S_t}\|_2^2
\le2r^{\mathrm{cnt}}(M)^2+2\max_{t\le T}\|Z_t\|_2^2.
\]
Taking conditional expectation and using the bound on
$\E_\pi[\max_{t\le T}\|Z_t\|_2^2\mid D]$ gives
\[
\E_\pi\!\left[
\max_{t\le T}\|Y_t^{\rw}-\widehat\beta_{S_t}\|_2^2\mid D
\right]
\le2r^{\mathrm{cnt}}(M)^2+8pV_{\rw}^{\mathrm{cnt}}.
\]
For set-based and computable RW, the same bounds hold
with $r_*^{\mathrm{set}}$ in place of $r^{\mathrm{cnt}}(M)$ and with
$V_{\rw}^{\mathrm{set}}$ or $V_{\rw}^{\mathrm{com}}$, respectively.

\textit{IND.} Its noise variance $V_{\ind}$ is constant across releases
conditional on $D$. Generate $M$ independent noise vectors
$\varepsilon_1,\ldots,\varepsilon_M\sim N(0,V_{\ind}I_p)$, and use
$\varepsilon_t$ at release $t$. Since $T\le M$,
$\max_{1\le t\le T}\|\varepsilon_t\|_2\le\max_{1\le t\le M}\|\varepsilon_t\|_2$,
even when the stopping time $T$ depends on earlier releases.
For $G_1,\ldots,G_M\sim N(0,I_p)$, Jensen's inequality and
$\E\exp(\|G_j\|_2^2/4)=2^{p/2}$ give
\[
\E\max_{j\le M}\|G_j\|_2^2
\le4\E\log\sum_{j=1}^M\exp(\|G_j\|_2^2/4)
\le4\log M+2p\log2.
\]
For count-based IND, it follows that
\[
\begin{aligned}
\E_\pi\!\left[
\max_{t\le T}\|Y_t^{\ind}-\widehat\beta_{S_t}\|_2^2\mid D
\right]
&\le2r^{\mathrm{cnt}}(M)^2
+(8\log M+4p\log2)V_{\ind}^{\mathrm{cnt}},\\
\max_{t\le T}\Pred_t^{\ind}
&\le\|\Gamma\|_2r^{\mathrm{cnt}}(M)^2
+\tr(\Gamma)V_{\ind}^{\mathrm{cnt}}.
\end{aligned}
\]
The set-based bounds follow by replacing $r^{\mathrm{cnt}}(M)$ with
$r_*^{\mathrm{set}}$ and $V_{\ind}^{\mathrm{cnt}}$ with
$V_{\ind}^{\mathrm{set}}$. Both error bounds remain valid when the noise variance is zero.

\par\medskip\noindent\textbf{Consistency conditions.}
For each of RW and IND, let $U$ denote either the maximum squared
parameter error or the maximum prediction error, and let $Q_n(D)$
be its corresponding bound derived above. In each case, we have
\[
\E_\pi[U\mid D]\le Q_n(D)
\qquad(D\in\cH_n)
\]
for every allowed deletion policy $\pi$.
To obtain consistency from these bounds, we first prove that
\[
Q_n(D)\xrightarrow{p}0
\quad\Longrightarrow\quad
\sup_\pi\Pp_\pi(U>\epsilon)\longrightarrow0
\qquad\text{for every }\epsilon>0.
\]
For $\eta,\epsilon>0$, conditional Markov's inequality gives
\[
\begin{aligned}
\sup_\pi\Pp_\pi(U>\epsilon)
&\le\delta_n+\Pp\{D\in\cH_n,\ Q_n(D)>\eta\}\\
&\quad+\sup_\pi\E\!\left[
\mathbf1_{\{D\in\cH_n,\ Q_n(D)\le\eta\}}
\Pp_\pi(U>\epsilon\mid D)\right]\\
&\le\delta_n+\Pp\{D\in\cH_n,\ Q_n(D)>\eta\}
+\frac{\eta}{\epsilon}.
\end{aligned}
\]
If $Q_n(D)\xrightarrow{p}0$ on $\cH_n$, then
$\Pp\{D\in\cH_n,\ Q_n(D)>\eta\}\to0$ for every fixed $\eta>0$.
Together with $\delta_n\to0$, the preceding bound gives
\[
\limsup_{n\to\infty}\sup_\pi\Pp_\pi(U>\epsilon)
\le\frac{\eta}{\epsilon}.
\]
Letting $\eta\downarrow0$ yields
$\sup_\pi\Pp_\pi(U>\epsilon)\to0$ for every $\epsilon>0$.

It remains to verify $Q_n(D)\xrightarrow{p}0$ for each of the four
error bounds under the stated consistency conditions.
For count-based allocations,
{\color{black}Section~\ref{app:count-variance-proof}} gives
uniformly on $\cH_n$,
\[
V_{\rw}^{\mathrm{cnt}}
=\frac{r^{\mathrm{cnt}}(M)^2\{1+o(1)\}}{\mu^2},
\qquad
V_{\ind}^{\mathrm{cnt}}
=\frac{R^{\mathrm{cnt}}}{\mu^2}
\le\frac{M r^{\mathrm{cnt}}(M)^2}{\mu^2}.
\]
Substituting these variance bounds into the error bounds above shows
that $r^{\mathrm{cnt}}(M)\to0$ suffices for RW parameter and prediction
consistency. For IND, $r^{\mathrm{cnt}}(M)\sqrt M\to0$ suffices for
prediction consistency, and
$r^{\mathrm{cnt}}(M)\sqrt{M\log(M+1)}\to0$ suffices for parameter
consistency.

For set-based allocations, \eqref{eq:bound-relations} and the path maxima
give
\[
\begin{aligned}
(r_*^{\mathrm{set}})^2&\le D^{\mathrm{set}},
&V_{\rw}^{\mathrm{set}}&=D^{\mathrm{set}}/\mu^2,\\
R^{\mathrm{set}}&\le M(r_*^{\mathrm{set}})^2\le M D^{\mathrm{set}},
&V_{\ind}^{\mathrm{set}}&=R^{\mathrm{set}}/\mu^2.
\end{aligned}
\]
Substituting these relations into the same error bounds shows that
$D^{\mathrm{set}}\xrightarrow{p}0$ suffices for both RW
errors. For IND, $M D^{\mathrm{set}}\xrightarrow{p}0$ suffices for prediction
consistency, and $M\log(M+1)D^{\mathrm{set}}\xrightarrow{p}0$ suffices for
IND parameter consistency. Computable RW has
$(r_*^{\mathrm{set}})^2\le D^{\mathrm{com}}$ and accumulated variance
at most $D^{\mathrm{com}}/\mu^2$, so the same RW argument applies when
$D^{\mathrm{com}}\xrightarrow{p}0$. The count-based conditions also
cover the other allocations because their residual bounds, IND variances,
and accumulated RW variances are no larger.

\par\medskip\noindent\textbf{GED consistency.}
We prove
\[
\max_{t\le T}\GED_t
\le C_G\sqrt{\max_{t\le T}\Pred_t}
\left\{1+(\max_{t\le T}\Pred_t)^{s/2}\right\},
\]
where $C_G$ is independent of $n,M,T$ and the deletion policy.
The prediction consistency results then imply GED consistency.

Write
$\Delta_t=Y_t-\widehat\beta_{S_t}$. Expectations in this paragraph are
conditional on $\mathcal F_t$. The mean value theorem and the derivative growth
condition give
\[
\begin{aligned}
&\left|
\ell(y_0,x_0^\top\widehat\beta_{S_t}+x_0^\top\Delta_t)
-\ell(y_0,x_0^\top\widehat\beta_{S_t})
\right|\\
&\quad\le C|x_0^\top\Delta_t|
\{1+|y_0|^s+|x_0^\top\widehat\beta_{S_t}|^s
+|x_0^\top\Delta_t|^s\}.
\end{aligned}
\]
On $\cH_n$, all retained estimators lie in the fixed compact set $\cN$.
Cauchy--Schwarz bounds the expectations of the first three terms by
\[
\begin{aligned}
&\E\!\left[|x_0^\top\Delta_t|
\{1+|y_0|^s+|x_0^\top\widehat\beta_{S_t}|^s\}
\mid\mathcal F_t\right]\\
&\quad\le\sqrt{\Pred_t}\,
\left\{1+(\E|y_0|^{2s})^{1/2}
+\big(\E|x_0^\top\widehat\beta_{S_t}|^{2s}
\mid\mathcal F_t\big)^{1/2}\right\}
\le C\sqrt{\Pred_t}.
\end{aligned}
\]
The last inequality uses the response moment bound and the uniform
feature moments on the fixed compact set $\cN$.
For the last term, independence of the test observation from $\Delta_t$
and the sub-Gaussian feature condition imply
\[
\E|x_0^\top\Delta_t|^{s+1}
\le C\E(\Delta_t^\top\Gamma\Delta_t)^{(s+1)/2}
\le C\Pred_t^{(s+1)/2}.
\]
The second inequality follows from Jensen's inequality when $0<s<1$.
For $s\ge1$, write $\Delta_t=b+\zeta$ with
$\zeta\sim N(0,vI_p)$ conditional on $\mathcal F_t$.
Here $(b,v)=(e_{S_t},\tau_t^2)$ for IND and
$(b,v)=(e_{S_t}+Z_{t-1},\sigma_t^2)$ for RW.
To bound the centered Gaussian part, write $\zeta=\sqrt v\,G$ with
$G\sim N(0,I_p)$. Since
$\|\Gamma^{1/2}G\|_2^2\le\tr(\Gamma)\|G\|_2^2$ and $p$ is fixed,
\[
\E(\|\Gamma^{1/2}\zeta\|_2^{s+1}\mid\mathcal F_t)
\le(v\tr\Gamma)^{(s+1)/2}\E\|G\|_2^{s+1}
\le C(v\tr\Gamma)^{(s+1)/2}.
\]
The inequality $\|a+c\|_2^{s+1}\le2^s(\|a\|_2^{s+1}+\|c\|_2^{s+1})$
then gives
\[
\begin{aligned}
\E(\Delta_t^\top\Gamma\Delta_t)^{(s+1)/2}
&\le C\left\{(b^\top\Gamma b)^{(s+1)/2}
+(v\tr\Gamma)^{(s+1)/2}\right\}\\
&\le C\{b^\top\Gamma b+v\tr(\Gamma)\}^{(s+1)/2}
=C\Pred_t^{(s+1)/2}.
\end{aligned}
\]
Combining the bounds and taking the maximum over $t\le T$ proves
the claimed inequality.

For $M=\Theta(n^\gamma)$, the residual rate yields
\[
\begin{aligned}
r^{\mathrm{cnt}}(M)&=O\{n^{2\gamma-2}\polylog(n)\},\\
r^{\mathrm{cnt}}(M)\sqrt{M\log(M+1)}
&=O\{n^{5\gamma/2-2}\polylog(n)\}.
\end{aligned}
\]
These bounds give the deletion ranges stated after the theorem:
$\gamma<1$ for RW and $\gamma<4/5$ for IND.
\end{proof}

\subsection{Proof of Proposition~\ref{prop:logistic-certificate}}
\begin{proof}
For every $|A|\le M$, we prove
\[
\|\widehat\beta_A-\widehat\beta\|_2\le\rho,
\qquad\text{and}\qquad
\nabla^2L_A(\beta)\succeq\underline\kappa I_p
\quad\text{for all }\|\beta-\widehat\beta\|_2\le\rho.
\]

\textit{Bounds at $\widehat\beta$.}
Write $w(z)=\{1+\exp(-z)\}^{-1}\{1+\exp(z)\}^{-1}$ and
recall the retained Hessian $H_A=H_0-\sum_{i\in A}H_i$ at the original fit.
The ridge term gives $H_A\succeq2\lambda I_p$.  Weyl's inequality also
gives
\[
\lambda_{\min}(H_A)
\ge
\lambda_{\min}(H_0)-\sum_{i\in A}\|H_i\|_2
\ge
\lambda_{\min}(H_0)-\sum_{j=1}^M\|H\|_{(j)}.
\]
The definition $h_i=\sup_{v\ne0}v^\top H_iv/(v^\top H_0v)$ gives
$H_i\preceq h_iH_0$. Thus
\[
\sum_{i\in A}H_i
\preceq\Bigl(\sum_{i\in A}h_i\Bigr)H_0
\preceq\Bigl(\sum_{j=1}^M h_{(j)}\Bigr)H_0.
\]
If $\sum_{j=1}^M h_{(j)}<1$, it follows that
\[
H_A\succeq\Bigl(1-\sum_{j=1}^M h_{(j)}\Bigr)H_0
\succeq\Bigl(1-\sum_{j=1}^M h_{(j)}\Bigr)
\lambda_{\min}(H_0)I_p.
\]
Taking the largest of the available lower bounds proves
$H_A\succeq\underline\alpha I_p$ simultaneously for every
$|A|\le M$.

The full-data first-order condition gives
$\nabla L_A(\widehat\beta) =-\sum_{i\in A}g_i$.
Since $W_M$ is the sum of the $M$ largest norms $\|g_i\|_2$,
the triangle inequality gives
\[
{
\left\|\sum_{i\in A}g_i\right\|_2
\le\sum_{i\in A}\|g_i\|_2
\le W_M.
}
\]

\textit{Localization and curvature on the ball.}
The normalization $\|x_i\|_2\le1$ lets us control the logistic weights
by the parameter displacement. Since
$\left|{(d/dz)}\log w(z)\right| =|1-2\{1+\exp(-z)\}^{-1}| \le1$,
we have $w(z+u)\ge e^{-|u|}w(z)$.
Since $|x_i^\top h|\le\|h\|_2$,
$w(x_i^\top\widehat\beta+x_i^\top h)
\ge e^{-\|h\|_2}w(x_i^\top\widehat\beta)$, and hence
\[
\nabla^2L_A(\widehat\beta+h)
\succeq2\lambda I_p+e^{-\|h\|_2}(H_A-2\lambda I_p).
\]
To apply this inequality along the segment to $\widehat\beta_A$, let
$v_A=\widehat\beta_A-\widehat\beta$ and $r_A=\|v_A\|_2$.  When
$r_A>0$, set $u_A=v_A/r_A$.  The first order condition and the preceding
weight inequality imply
\begin{align*}
-u_A^\top\nabla L_A(\widehat\beta)
&=
\int_0^{r_A}
u_A^\top\nabla^2L_A(\widehat\beta+su_A)u_A\,ds\\
&\ge
2\lambda r_A
+u_A^\top(H_A-2\lambda I_p)u_A
\int_0^{r_A}e^{-s}\,ds\\
&\ge
2\lambda r_A
+(\underline\alpha-2\lambda)(1-e^{-r_A}).
\end{align*}
As a function of the radius, the last expression is zero at zero,
diverges to infinity, and has derivative at least $2\lambda>0$.
Thus the equation defining $\rho$ has a unique nonnegative solution.
Since $-u_A^\top\nabla L_A(\widehat\beta)\le W_M$,
\[
2\lambda r_A+(\underline\alpha-2\lambda)(1-e^{-r_A})
\le W_M
=2\lambda\rho+(\underline\alpha-2\lambda)(1-e^{-\rho}).
\]
Strict monotonicity gives $r_A\le\rho$. The same bound holds when $r_A=0$.
For every
$\|h\|_2\le\rho$, the same weight inequality gives
\[
\nabla^2L_A(\widehat\beta+h)
\succeq
\{2\lambda+e^{-\rho}(\underline\alpha-2\lambda)\}I_p
=
\underline\kappa I_p.
\]

\end{proof}

\subsection{Proof of Proposition~\ref{prop:logistic-constants}}
\begin{proof}
The individual bounds follow from scalar derivatives on the predictor
intervals.
The gradient and Hessian are
$\nabla\ell_i(\beta)=\{\pi(x_i^\top\beta)-y_i\}x_i$ and
$\nabla^2\ell_i(\beta)=w(x_i^\top\beta)x_ix_i^\top$.
Monotonicity of $\pi$ gives the gradient bound, and the Hessian bound
follows directly. Along the segment between $\beta$ and $\beta'$, the
predictor stays in $[a_i,b_i]$, so integration gives
\[
|w(x_i^\top\beta)-w(x_i^\top\beta')|
\le m_i\|x_i\|_2\|\beta-\beta'\|_2.
\]
Multiplication by $\|x_ix_i^\top\|_2=\|x_i\|_2^2$ proves the
individual Hessian Lipschitz bound.

For the retained objective, write
$\nabla^2L_A(\beta)-\nabla^2L_A(\beta')
=\sum_{i\notin A}c_ix_ix_i^\top$, where
$|c_i|\le m_i\|x_i\|_2\|\beta-\beta'\|_2$.
The ridge Hessian $2\lambda I_p$ is constant and cancels.

Let $S_A=\sum_{i\notin A}m_i\|x_i\|_2x_ix_i^\top$. The coefficient
bounds imply
\[
-\|\beta-\beta'\|_2S_A
\preceq\nabla^2L_A(\beta)-\nabla^2L_A(\beta')
\preceq\|\beta-\beta'\|_2S_A.
\]
Since $0\preceq S_A\preceq S_\varnothing$, this proves the uniform
Lipschitz bound $\|S_\varnothing\|_2=C_L^{\mathrm{mat}}$.
Each summand of $S_\varnothing$ has norm $m_i\|x_i\|_2^3$.
Positive semidefinite ordering and the triangle inequality therefore give
$C_\ell\le C_L^{\mathrm{mat}}\le nC_\ell$.
Both terms in the minimum defining $C_L$ are at least
$C_L^{\mathrm{mat}}$, so $C_L$ is also valid and has the stated bounds.
If $C_\ell=0$, then $S_\varnothing=0$ and $C_L=0$, and all the
Hessians are constant on the ball.
Otherwise, $C_L\ge\min\{nC_\ell,\log(n+1)MC_\ell\}$ gives
\[
\frac{MC_\ell}{C_L}
\le\max\left\{\frac Mn,\frac1{\log(n+1)}\right\}\to0
\qquad\text{when }M=o(n).
\]

\end{proof}

\end{document}